\documentclass{article}

\usepackage{microtype}
\usepackage{graphicx}
\usepackage{subcaption}
\usepackage{booktabs} 

\usepackage{hyperref}
\usepackage{url}
\usepackage{multirow}

\newcommand{\eg}{\textit{e.g.}}
\newcommand{\ie}{\textit{i.e.}}

\newcommand{\secondbest}[2][blue!10]{%
  \colorbox{#1}{\makebox[0.05\linewidth][c]{#2}}%
}

\newcommand{\best}[2][green!10]{%
  \colorbox{#1}{\makebox[0.05\linewidth][c]{\textbf{#2}}}%
}

\usepackage[accepted]{icml2026}

\usepackage{amsmath}
\usepackage{amssymb}
\usepackage{mathtools}
\usepackage{amsthm}
\usepackage{subcaption}

\usepackage[capitalize,noabbrev]{cleveref}

\theoremstyle{plain}
\newtheorem{theorem}{Theorem}[section]
\newtheorem{proposition}[theorem]{Proposition}
\newtheorem{lemma}[theorem]{Lemma}

\theoremstyle{definition}

\theoremstyle{remark}

\usepackage[textsize=tiny]{todonotes}

\icmltitlerunning{PromptDyG: Test-Time Prompt Adaptation on Dynamic Graphs}

\begin{document}

\twocolumn[
\icmltitle{PromptDyG: Test-Time Prompt Adaptation on Dynamic Graphs}


  
  \icmlsetsymbol{equal}{*}
  \icmlsetsymbol{cor}{$\dagger$}

  \begin{icmlauthorlist}
    \icmlauthor{Guoguo Ai}{sch1,sch5,equal}
    \icmlauthor{Chaoxi Niu}{sch2,equal}
    \icmlauthor{Hui Yan}{sch1}
    \icmlauthor{Joey Tianyi Zhou}{sch3,sch6}
    \icmlauthor{Yew-Soon Ong}{sch4}
    \icmlauthor{Guansong Pang}{sch5,cor}
  \end{icmlauthorlist}

  \icmlaffiliation{sch1}{School of Computer Science and Engineering, Nanjing University of Science and Technology, China}
  \icmlaffiliation{sch5}{School of Computing and Information Systems, Singapore Management University, Singapore}
  \icmlaffiliation{sch2}{Faculty of Data Science, City University of Macau, Macau, China}
  \icmlaffiliation{sch3}{Centre for Frontier AI Research (CFAR), Agency for Science, Technology and Research (A*STAR), Singapore}
  
  \icmlaffiliation{sch6}{Institute of High Performance Computing (IHPC), Agency for Science, Technology and Research (A*STAR), Singapore}
  \icmlaffiliation{sch4}{College of Computing and Data Science, Nanyang Technological University, Singapore}

  \icmlcorrespondingauthor{Guansong Pang}{gspang@smu.edu.sg}

  \icmlkeywords{Machine Learning, ICML}

  \vskip 0.3in
]

\printAffiliationsAndNotice{\icmlEqualContribution}

\begin{abstract}
Activities in numerous evolving systems can be represented as dynamic graphs in snapshot form at different time intervals, \ie, discrete-time dynamic graphs (DTDGs). 
Existing methods show impressive advances in capturing historical temporal evolution patterns in DTDGs, but they focus on addressing an offline learning setting, where models are trained using historical snapshots once and then evaluated to all subsequent graph snapshots without further updating. This fails to capture \textbf{1)} the nature of evolving complexities across graph snapshots and \textbf{2)} the distribution shift in the testing graph snapshots.
To address these problems, we propose \textbf{PromptDyG}, a novel framework that leverages unsupervised test-time \underline{Prompt} adaptation for \underline{Dy}namic \underline{G}raph learning under a live-update online setting. The key insight is that an expressive dynamic graph prompt can be learned on a frozen backbone via minimization of feature-wise, label-free entropy to efficiently and continuously model the evolving patterns.
We show theoretically that this unsupervised prompt adaptation can guarantee a larger similarity margin between positive and negative pairs, facilitating more accurate dynamic predictions. It is further confirmed by our extensive empirical results on six benchmark datasets that show consistent and significant improvements of PromptDyG over state-of-the-art baselines.
Code is available at \renewcommand\UrlFont{\color{blue}} \url{https://github.com/mala-lab/PromptDyG}.

\end{abstract}

\section{Introduction}
Due to the superior capacity to represent complex relations between samples, graph is widely used in various real-world applications \cite{xia2021survey,zhang2020deep,ai2026semi}, such as social networks, bank transaction networks, and online shopping. For example, in the context of bank transaction networks, there are dynamic intersections among customers where nodes and edges represent customers and their transactions, respectively. In real-world applications, such a dynamic nature means that the number of nodes and the connectivity between nodes would change over time \cite{pareja2020evolvegcn,Seyed2020}. This dynamicity poses great challenges for traditional Graph Neural Networks (GNNs), which focus on static graphs with fixed nodes and edges. To accommodate temporal evolution, various GNN-based models are proposed for dynamic graph learning \cite{manessi2020dynamic,pareja2020evolvegcn,tgcn,GCRN,you2022roland,zhu2023wingnn,qi2025input}.  

Dynamic graphs can be modeled at two main temporal granularities, including Continuous-time Dynamic Graphs (CTDGs) and Discrete-time Dynamic Graphs (DTDGs). CTDGs \cite{kumar2019predicting,cong2023we,yu2024node} represent graph evolution as a stream of fine-grained temporal events. In contrast, DTDGs \cite{yang2021discrete,you2022roland,qi2025input} describe graph evolution as a sequence of snapshots aggregated over different time intervals, such as days or weeks. In this paper, we focus on DTDGs as the snapshot-based formulation is particularly important in many real-world applications, where interactions are naturally collected in batches, since precise timestamps may be unavailable and/or system-level patterns within a time window are of primary interest. 

\begin{figure}[t]
\centering
\begin{subfigure}{0.45\linewidth}
    \centering
    \includegraphics[width=\linewidth]{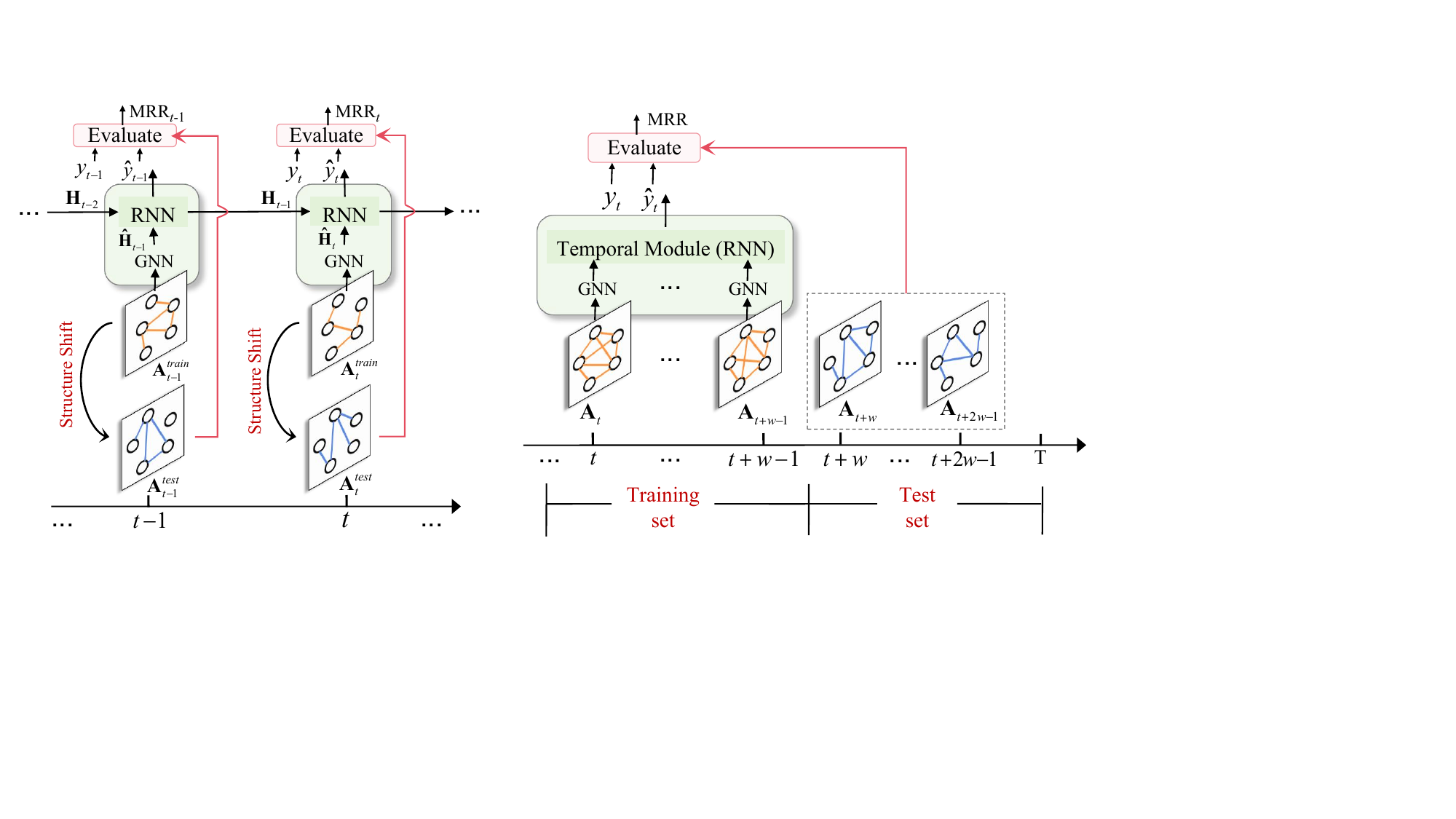}
    \caption{}
\end{subfigure}
\hfill
\begin{subfigure}{0.53\linewidth}
    \centering
    \includegraphics[width=\linewidth]{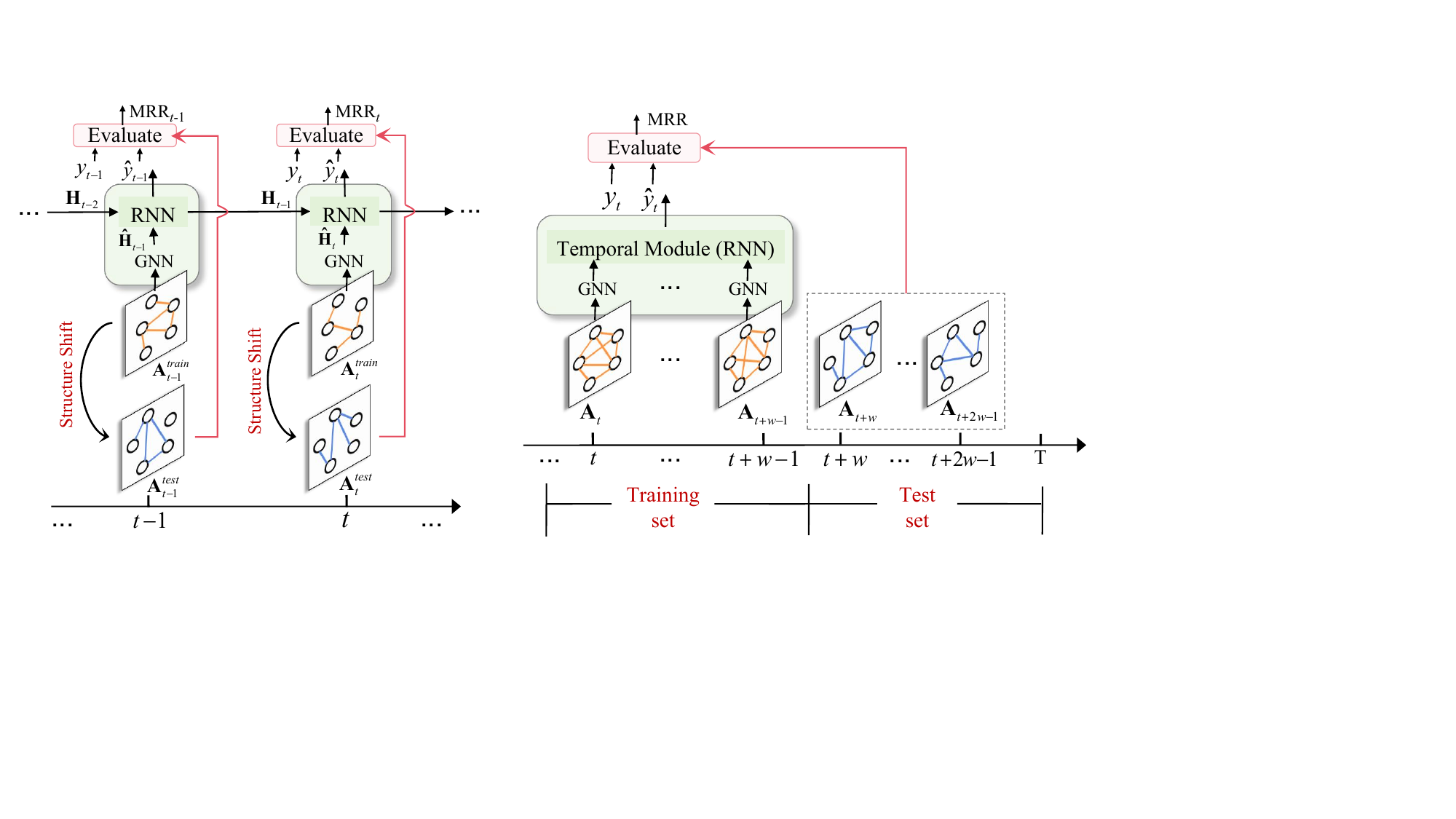}
    \caption{}
\end{subfigure}
\caption{(a) \textit{Online setting:} the model updates continuously to adapt to evolving data. (b) \textit{Offline setting:} the first portion of snapshots is used for training, with the rest for evaluation, under a moving window of size $w$.} 
\label{fig:intro}
\end{figure}

Existing studies for DTDGs focus on learning representations on separate snapshots using static GNNs \cite{kipf2017semi,velivckovic2018graph}. Then, a temporal module is incorporated into the static GNNs to capture historical temporal evolution patterns in DTDGs for predicting future snapshots. However, these DTDG models \cite{tgcn,pareja2020evolvegcn,qi2025input} typically adopt an offline learning setting, as shown in Figure \ref{fig:intro} (b), where models are trained using historical snapshots once and then evaluated for all subsequent snapshots without further updating. This setting fails to consider the nature of evolving complexities across graph snapshots, where each snapshot is generated chronologically and its distribution differs from those across testing graph snapshots. Consequently, the learned models are not adapted to those potential distribution shift, resulting in inferior performance.

To address these problems, we propose a novel framework that leverages unsupervised test-time \underline{Prompt} adaptation for \underline{Dy}namic \underline{G}raph learning (\textbf{PromptDyG}) under a live-update online setting \cite{you2022roland}. As shown in Figure \ref{fig:intro} (a), this online setting requires a dynamic model to predict future events based on previously observed snapshots and then incorporate the knowledge of new graph snapshots to fit evolving data. One key challenge is that, even within the same graph snapshot, the distribution of the training and test data can vary significantly due to the dynamic nature, inducing large structural distribution shifts. Current state-of-the-art (SOTA) methods \cite{you2022roland,chen2024dtformer,qi2025input} such as Roland \cite{you2022roland} neglect this shift, which can lead to large confusion between positive (existing edges) and negative (non-existing edges) node pairs, as shown in Figure \ref{mov:fig-intro} (a).

\begin{figure}[t]
\centering
\begin{subfigure}{0.48\linewidth}
    \centering
    \includegraphics[width=\linewidth]{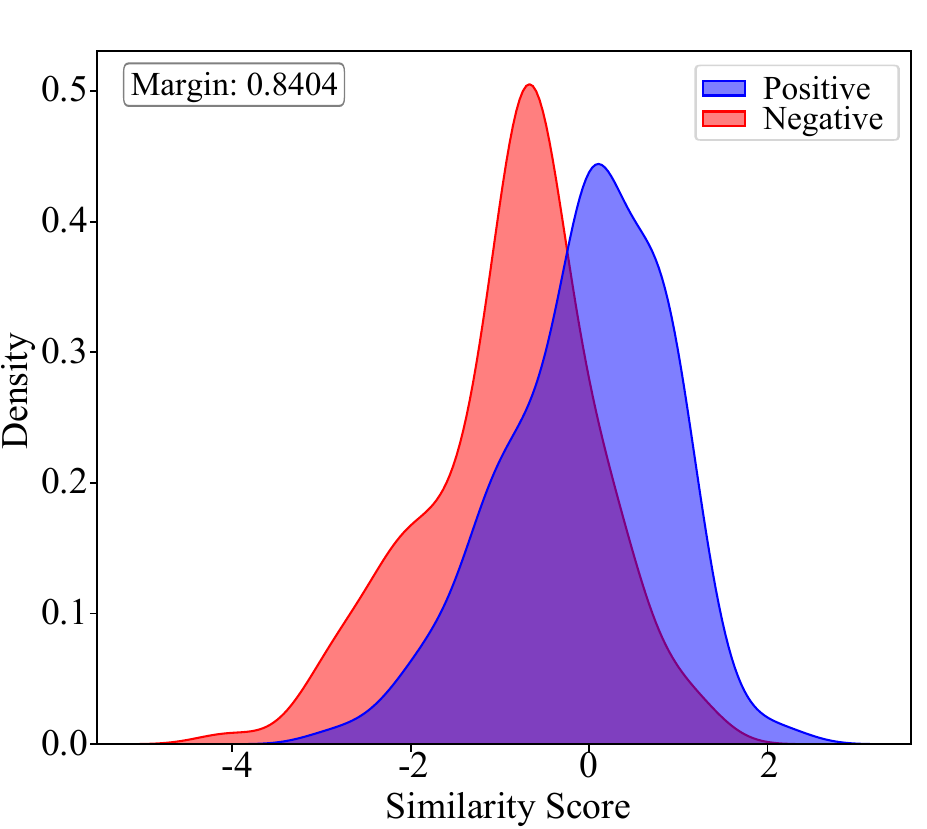}
    \caption{Roland-GRU}
\end{subfigure}
\hfill
\begin{subfigure}{0.48\linewidth}
    \centering
    \includegraphics[width=\linewidth]{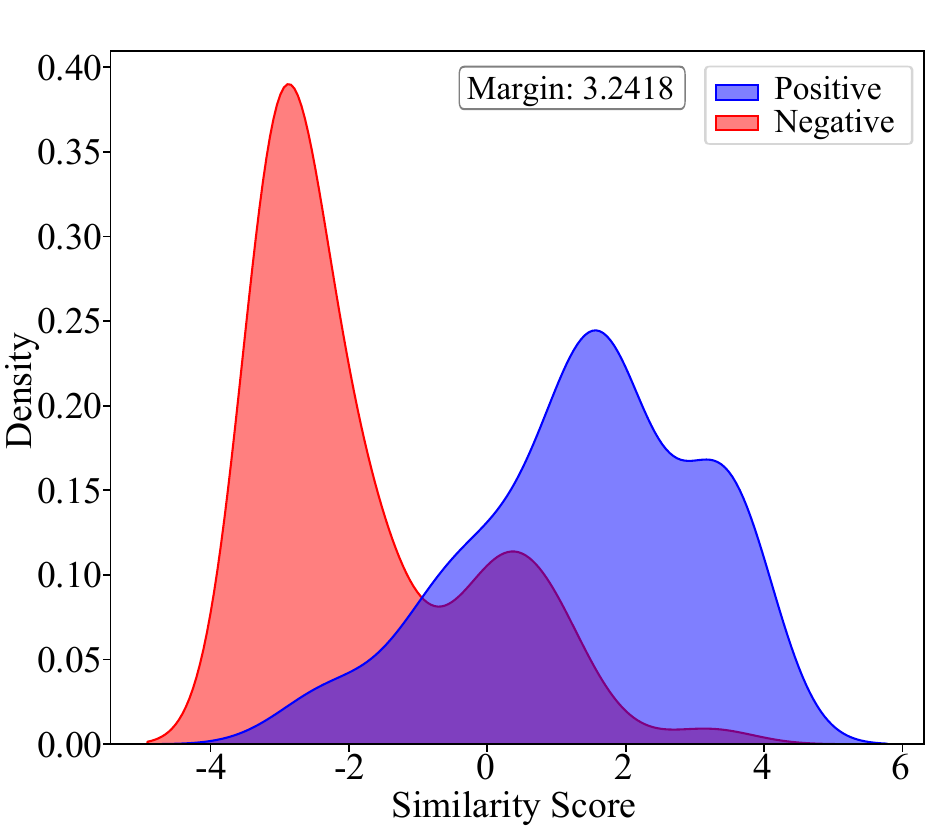}
    \caption{PromptDyG}
\end{subfigure}
\caption{Similarity score distributions for positive and negative pairs from (a) the backbone model Roland-GRU \cite{you2022roland} and (b) PromptDyG at the same test snapshot in the dataset UCI \cite{panzarasa2009patterns}. The margin is defined as the difference between the mean similarity scores of positive and negative pairs.}
\label{mov:fig-intro}
\end{figure}

To mitigate the distribution shift, graph test-time adaptation (TTA) \cite{wang2022test,chen2022graphtta,baomatcha,jinempowering} has emerged as a promising approach by fine-tuning the pre-trained model with the test graph from different domains or performing test graph augmentation. While these methods achieve promising results for static graphs, they are not applicable to DTDGs. First, these methods either undermine the temporal dynamics encoded in the pre-trained DTDG models or introduce significant computational overhead. Second, DTDGs aim to predict the subsequent graph snapshots based on the observed data, which is fundamentally different from the test graph setup in non-dynamic graph test-time adaptation. To address these problems, the proposed PromptDyG introduces an unsupervised prompt adaptation stage to bridge the distributional gap. The key insight is that a lightweight dynamic graph prompt can be learned on a frozen backbone via minimization of feature-wise, label-free entropy to efficiently and continuously model the evolving patterns. Specifically, at each time step, the snapshot-specific pre-trained model remains frozen, and a lightweight learnable prompt is incorporated into the feature space of the test graph. This helps avoid distorting the time-related graph structures and preserves the integrity of the pre-trained model. The prompt is then optimized via unsupervised entropy minimization to align the representations with dominant evolutionary patterns under test structural shifts. This helps enlarge the similarity margin between positive and negative pairs (Figure \ref{mov:fig-intro}(b)). 

The main contributions are summarized as follows. 
\begin{itemize}
    \item We reveal that the distribution shift between training and test data hinders the effectiveness of existing DTDG methods on dynamic graphs.
    \item To address this issue, we propose \textbf{PromptDyG}, a simple yet theoretically sound unsupervised test-time prompt adaptation via feature-wise and label-free entropy minimization on DTDGs under structural shifts. 
    \item Extensive experiments on six datasets demonstrate that PromptDyG significantly outperforms SOTA DTDG and graph TTA methods and can also consistently enhance diverse DTDG methods as a plugin module. 
\end{itemize}

\section{Related Work}

\textbf{Dynamic GNNs on DTDGs.} 
Discrete-time dynamic graph representation learning typically combines spatial models (e.g., GCN \cite{kipf2017semi}, GAT \cite{velivckovic2018graph}) to encode snapshots with temporal models (e.g., LSTM \cite{hochreiter1997long}, GRU \cite{dey2017gate}) to capture temporal dynamics. Early works such as WD-GCN \cite{manessi2020dynamic}, EvolveGCN \cite{pareja2020evolvegcn}, TGCN \cite{tgcn} and GCRN \cite{GCRN} adopt ``GNN+RNN" architectures to model spatial and temporal information jointly.
Several subsequent works have explored alternative designs for better efficiency and scalability. For instance,
WinGNN \cite{zhu2023wingnn} adopts meta-learning strategies, and SFDyG \cite{qi2025input} fuses multiple snapshots into a single temporal graph. While significantly advancing spatio-temporal modeling, they adopt the offline learning setting, which cannot fully reflect the evolving nature of dynamic graphs. To achieve online learning, Roland proposes a live-update protocol to mimic real-world application scenarios.
However, like other DTDG models, Roland \cite{you2022roland} focuses on capturing the evolution patterns of historical data, overlooking the test-time distribution shifts. To this end, we propose a test-time prompt adaptation framework to tackle the performance degradation caused by test-time structural shifts. See Appendix \ref{app:related work} for more details on dynamic GNNs.

\textbf{Test-Time Adaptation.} 
Test-Time Adaptation (TTA) has emerged as a promising paradigm for adapting models to shifting test distributions without re-accessing training data \cite{tent2021,chen2022contrastive,karmanov2024efficient,zhao2025attention}. While early TTA focused on image-based data by entropy minimization \cite{tent2021,zhaodelta,limttn}, pseudo-labeling \cite{iwasawa2021test,zhang2023adanpc}, or consistency regularization \cite{boudiaf2022parameter}, recent efforts have extended TTA to graph-structured data, falling into two categories, \ie, test-time model adaptation and test-time graph adaptation. The former updates pre-trained GNN parameters via self-supervised objectives, \eg, GT3 \cite{wang2022test}, GraphTTA \cite{chen2022graphtta}, and Matcha \cite{baomatcha}. 
In contrast, test-time graph adaptation takes a data-centric approach to modify the test graph data while keeping the pre-trained GNN model parameters unchanged. Typically, GTrans \cite{jinempowering} performs test-time adaptation by augmenting the target graph and optimizing the contrastive objective. 
Nevertheless, augmentation-based approaches inevitably introduce high additional computational costs \cite{baomatcha}. Despite those TTA methods having proven to be effective in tackling distribution shift for static node and graph classification tasks, their application remains largely under-explored for dynamic graph learning.

\textbf{Graph Prompting Learning.}
Prompt learning seeks to adapt pre-trained models to downstream tasks by incorporating learnable prompts while keeping the pre-trained models frozen \cite{liu2023pre,sun2023graph}. Specifically, it designs task-specific prompts capturing the knowledge of the corresponding tasks and enhances the compatibility between inputs and pre-trained models. Recently, prompt learning has been explored in graphs to unify multiple graph tasks \cite{sun2023all,liu2023graphprompt,fang2024universal} or improve the transferability of graph models on the datasets across domains \cite{li2024zerog,zhao2024all,niu2024replay,niu2024zero,qiao2025anomalygfm}. Some methods \cite{yu2024node,yang2024improving} also incorporate graph prompting into dynamic graph learning, but they primarily focus on offline learning. 
This paper aims to propose an unsupervised test-time prompt adaptation to address the distribution shifts during inference under the online setting.

\section{Preliminaries}
\textbf{Discrete-Time Dynamic Graphs.} A static graph is generally defined by its nodes and edges, denoted as $G=(V, E)$, where $V =\{v_1,v_2,\cdots,v_N\}$ is the set of nodes, $E \subseteq V \times V$ is the set of edges. The structure of the graph is typically represented as an adjacency matrix $\mathbf{A} \in \{0,1\}^{|V|\times|V|}$, and node features are denoted as a feature matrix $\mathbf{X} \in \mathbb{R}^{|V|\times D}$, where $D$ is the feature dimension, and $|V|=N$ is the number of nodes. Snapshot-based DTDGs extend static graphs by incorporating a temporal dimension. Specifically, each node $v$ is associated with a timestamp $\tau_v$ and each edge $e$ is associated with a timestamp $\tau_e$. At time step $t$, the graph snapshot is represented as $G_t=(V_t, E_t)$, where $V_t=\{v \in V|\tau_v=t\}$ and $E_t=\{e\in E|\tau_e=t\}$ denote the sets of nodes and edges at time $t$, respectively. The dynamic graph is therefore represented as a sequence of graph snapshots over $T$ time steps, denoted as $\mathcal{G} = \{G_t\}|_{t=1}^T$. We use $\mathbf{A}_{t}$ to denote the binary adjacency matrix corresponding to the edge set $E_t$.

\begin{figure*}
\centering
    \includegraphics[width=0.94\linewidth]{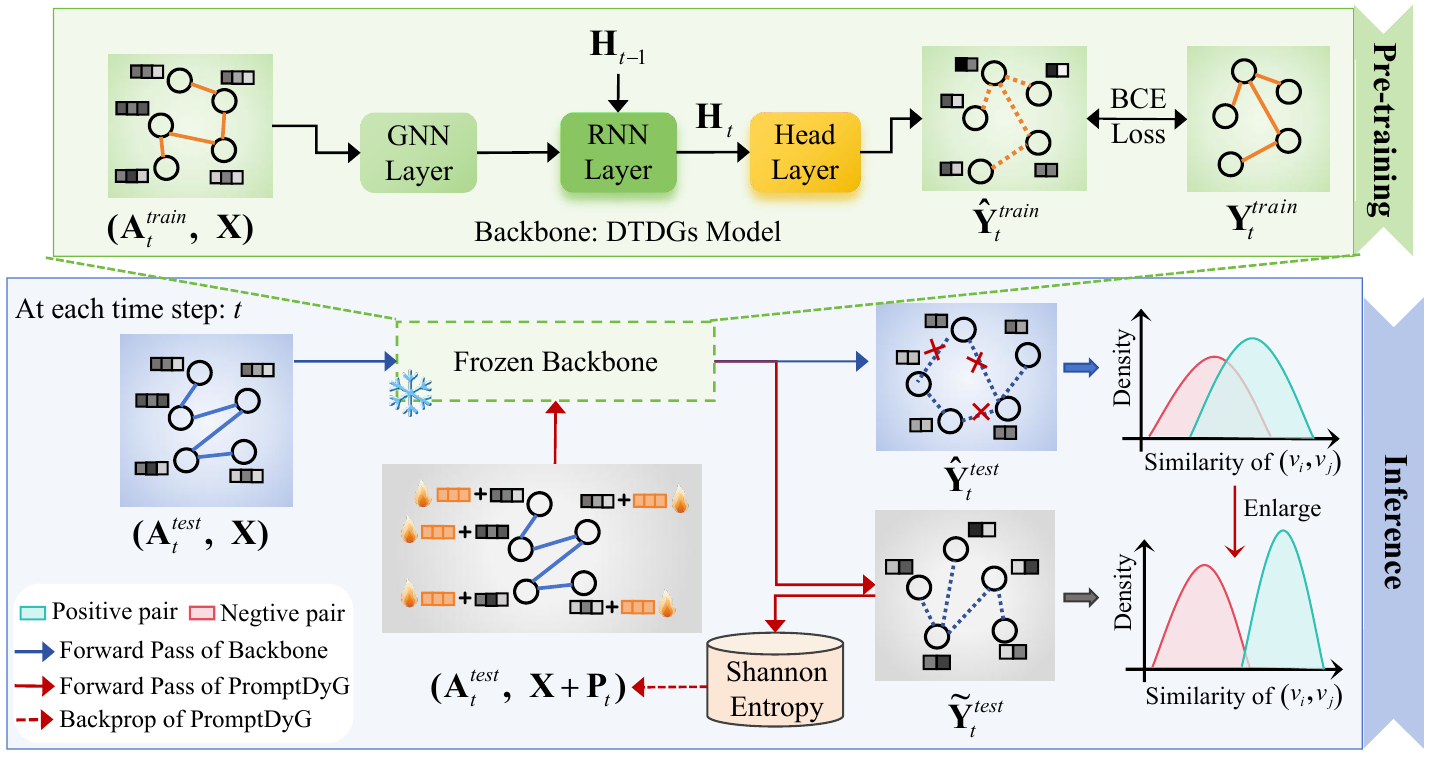}
    \caption{Overview of PromptDyG at time step $t$. In the pre-training phase (top), a standard DTDG backbone is trained via supervised Binary Cross-Entropy (BCE) loss to capture spatio-temporal dependencies for future link prediction. In the inference phase (bottom), a learnable prompt $\mathbf{P}_t$ is optimized via minimizing feature-wise, label-free entropy with the backbone parameters frozen. This process adapts the representation to the structural shift, resulting in distinct similarity distributions for positive and negative pairs.}
    \label{fig:framework}
\end{figure*}

\textbf{Representation Learning of DTDGs.}

In the live-update setting, the edge set $E_t$ at each time step $t$ is partitioned into disjoint sets, \ie, $E_t = E_t^{train} \cup E_t^{val} \cup E_t^{test}$, which are used to construct the corresponding adjacency matrices for training ($\mathbf{A}_t^{train}$), validation ($\mathbf{A}_t^{val}$), and test ($\mathbf{A}_t^{test}$). Then,  at each time step $t$, a dynamic graph representation learning
model, denoted as $f_\theta^t(\cdot)$, maps the current graph information and historical temporal representation $\mathbf{H}_{t-1}$ into a latent embedding space:
\begin{equation}
    \mathbf{H}_t = f_{\theta}^t(\mathbf{A}_t^{train}, \mathbf{X}, \mathbf{H}_{t-1}),
\end{equation}
where $\mathbf{H}_{t}$ represents the learned node embeddings at time $t$, which can be used for different tasks, such as link prediction.

\section{Methodology}
Existing DTDG models could effectively capture spatio-temporal evolution by integrating GNNs and RNNs. However, the test graph snapshots would exhibit distinct topological characteristics from those in the training data. As a result, the pre-trained model may produce pattern-blurred embeddings, resulting in an ambiguous distinction between positive and negative node pairs of the next snapshot.
To this end, this paper proposes a novel test-time prompt adaptation framework for DTDGs, termed PromptDyG. As shown in Figure \ref{fig:framework}, with the backbone frozen,
PromptDyG introduces an unsupervised feature-wise entropy minimization objective for tuning the learnable prompt $\mathbf{P}_t$ to refine the evolving patterns.
Both theoretical analysis and empirical evaluations demonstrate that our method effectively enhances the model's generalization capability by enlarging the similarity margin between positive and negative pairs.

\begin{figure}[t]
    \centering
    \begin{subfigure}{0.48\linewidth}
        \centering
        \includegraphics[width=\linewidth]{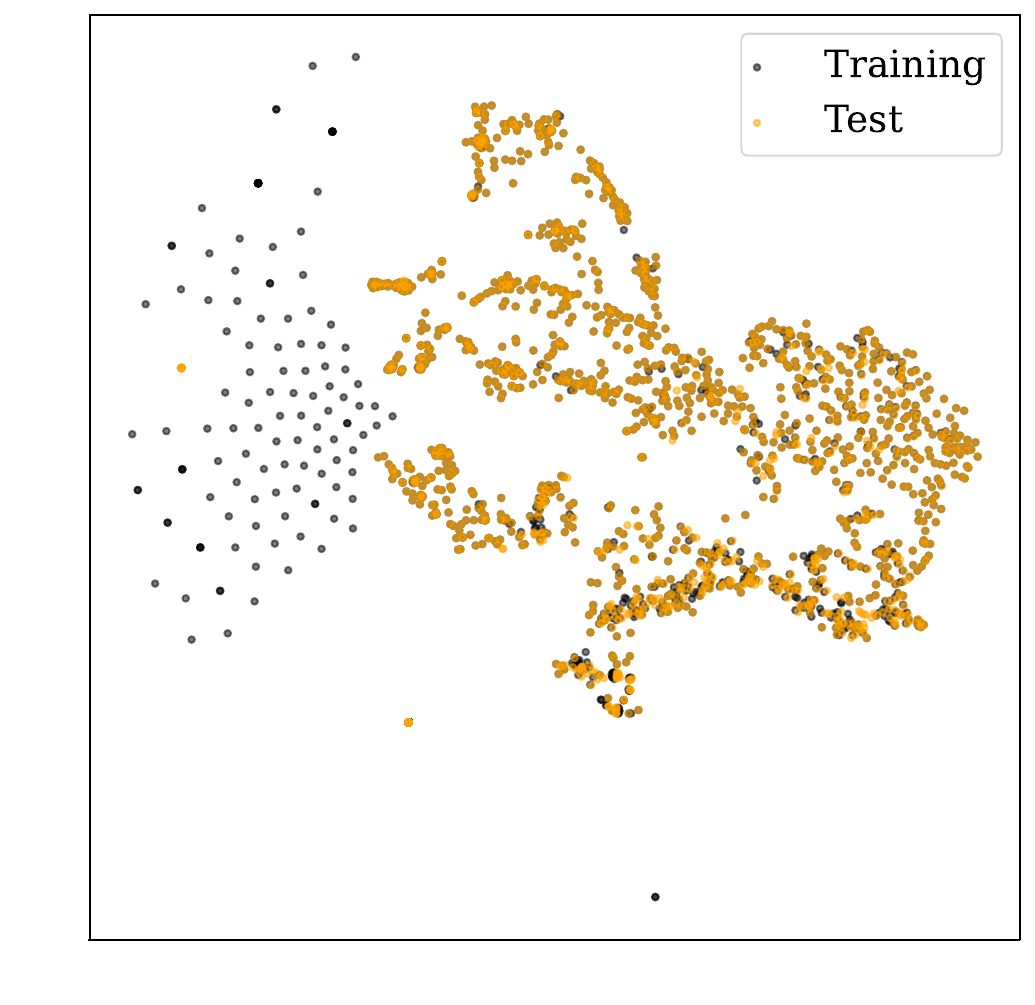}
        \caption{}
    \end{subfigure}
    \hfill
    \begin{subfigure}{0.48\linewidth}
        \centering
        \includegraphics[width=\linewidth]{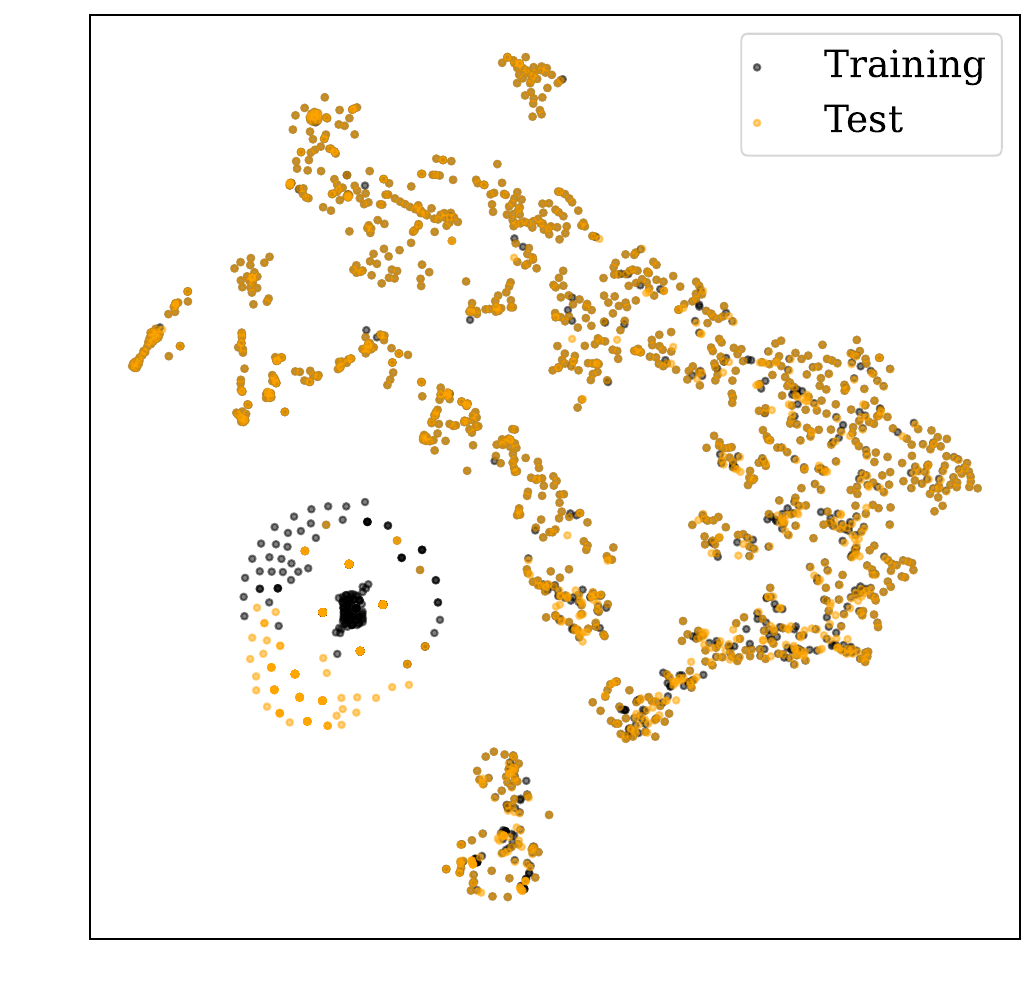}
        \caption{}
    \end{subfigure}
    \caption{t-SNE visualization of training and test-time embeddings on UCI at the same time step. (a) Backbone; (b) PromptDyG.}
    \label{fig:tsne-dist}
    \vspace{-1em} 
\end{figure}

\subsection{Test-Time Prompting on Dynamic Graphs}

\textbf{Pre-training for Temporal Link Prediction.} For link prediction of next time step, a prediction head $g(\cdot)$ is used to compute the similarity score $s_{uv}$ between a pair of nodes $(u, v)$, \ie, $s_{uv} = g(h_u, h_v)$.
Specifically, $g(\cdot)$ can be implemented as a simple inner product, cosine similarity, or a multi-layer perceptron (MLP), which outputs a probability score indicating the likelihood of an edge existing between nodes $u$ and $v$ in the next snapshot. Then, the model is optimized via a Binary Cross-Entropy (BCE) loss, which penalizes the discrepancy between the predicted similarity scores and the ground-truth connectivity:
\begin{equation}\label{eq:bce-loss}
\mathcal{L}=-\sum_{(u,v)\in \mathcal{T}_t} \left( y_{uv} \log(\hat{y}_{uv})+(1 - y_{uv}) \log(1 - \hat{y}_{uv}) \right), 
\end{equation}
where $\hat{y}_{uv} = s_{uv}$ and $\mathcal{T}_t$ denotes the set of edge labels in $\mathbf{A}_{t+1}^{train}$, \ie, $y_{uv}$ is defined by the connectivity in the next snapshot. $y_{uv}=1$ if a link exists between $u$ and $v$ at time $t+1$, and $y_{uv}=0$ otherwise. Minimizing this loss forces the model $f_\theta(\cdot)$ to project the current structural information and historical temporal information into a space that is predictive of the future connectivity, allowing the model to capture the latent temporal-spatial evolution patterns. After that, the test graph $\mathbf{A}_{t}^{test}$ is input into the model to predict the subsequent snapshot. However, the effectiveness of the pre-trained model is limited by natural structure shifts between $\mathbf{A}_{t}^{test}$ and $\mathbf{A}_t^{train}$, which disrupts the alignment between the backbone’s learned patterns and the target statistics. This results in a distinct shift between the training and test-time distributions, as illustrated in Figure \ref{fig:tsne-dist}(a). 
Visualizations for additional datasets are available in Appendix \ref{app:t-sne}.

\begin{figure}[t]
\centering
\begin{subfigure}{0.58\linewidth}
\centering
\includegraphics[width=\linewidth]{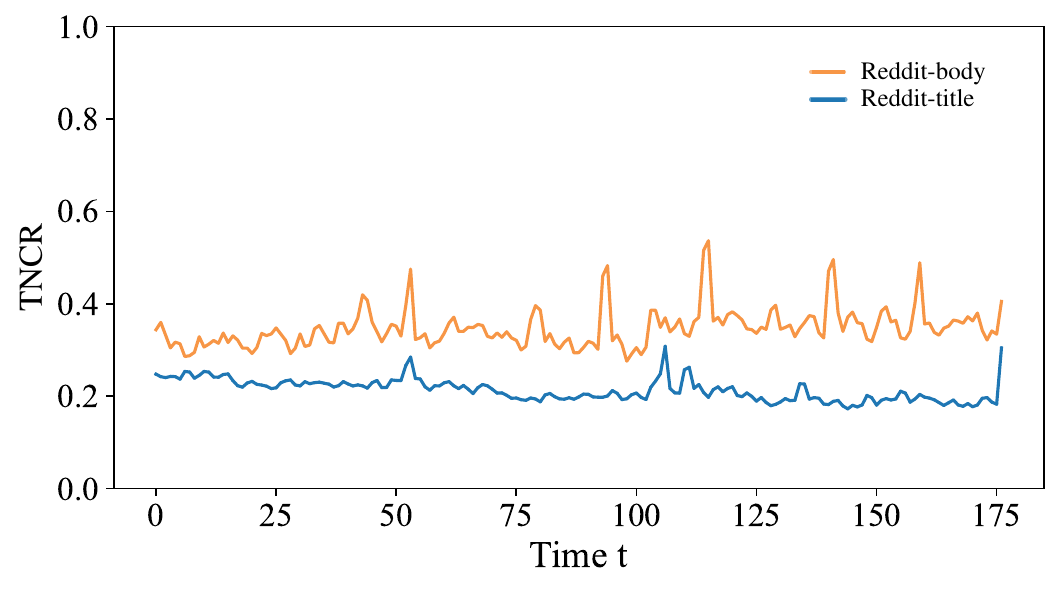}
\caption{}
\end{subfigure}
\hfill
\begin{subfigure}{0.39\linewidth}
\centering
\includegraphics[width=\linewidth]{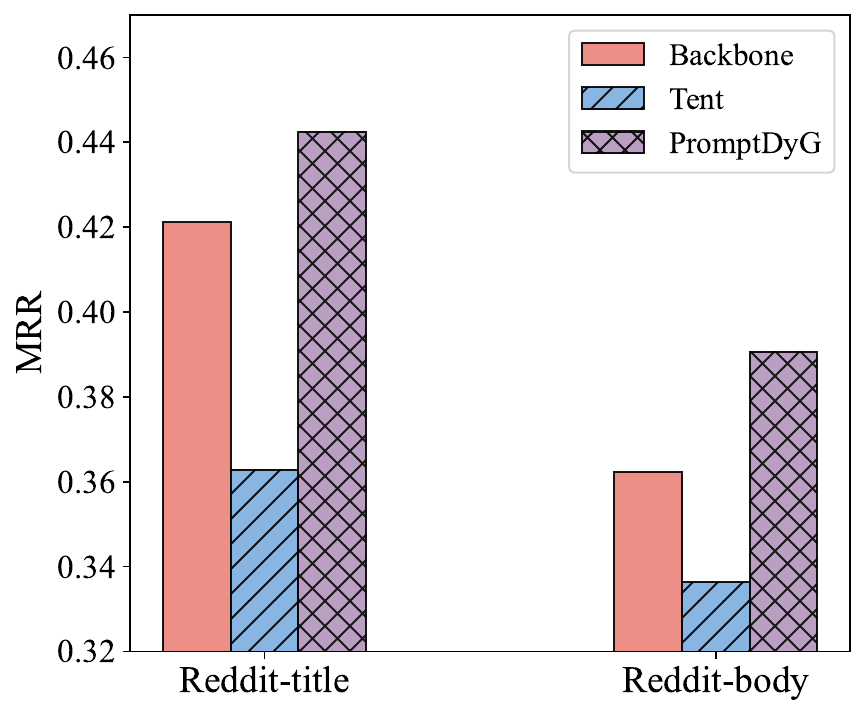}
\caption{}
\end{subfigure}
\caption{(a) Temporal Neighborhood Change Rate (TNCR) curves on two datasets, which quantifies the time dependence of neighborhood structures across consecutive snapshots (lower values indicate higher consistency and strong temporal dependence), more details can be found in the Appendix \ref{Appendix:TNCR}. (b) Comparison of performance among backbone (Roland-GRU), Tent, and our PromptDyG.}
\vspace{-0.5em} 
\label{fig:TD-bar}
\end{figure}

\paragraph{Test-time Distribution Shift Prompts}
To address the shifts, traditional TTA methods like Tent \cite{tent2021} update batch normalization parameters via entropy minimization. 
However, in dynamic graph learning, this strategy would distort the pre-learned temporal dynamics essential for future prediction. As depicted in Figure \ref{fig:TD-bar}, such methods can degrade the performance of the backbone, especially on datasets with strong temporal dependence, \eg, Reddit-title and Reddit-body \cite{kumar2018community}.

Instead of utilizing model regularization as in existing TTA methods for static graphs, PromptDyG learns a dynamic prompt at each time step while keeping the DTDG backbone frozen. Specifically, at test-time step $t$, the graph prompt is designed as a set of learnable parameters, \ie, $\mathbf{P}_t=[p_t^1,p_t^2,\cdots,p_t^N] \in \mathbb{R}^{N\times D}$. The adaptation embedding matrix $\widetilde{\mathbf{H}}_t$ is then obtained by injecting the prompt into the input feature space of the test graph and the pre-trained backbone $f_\theta^t$:
\begin{equation}
    \widetilde{\mathbf{H}}_t = f_\theta^t(\mathbf{A}_t^{test}, \mathbf{X} + \mathbf{P}_t, \mathbf{H}_{t-1}).
\end{equation}
By optimizing the prompt $\mathbf{P}_t$ with the DTDG backbone being frozen, PromptDyG achieves two key benefits. First, the model-agnostic, plug-and-play prompt effectively preserves the pre-learned temporal dynamics, ensuring they are not disturbed by short-time (current) structural noise. Second, the prompt essentially performs a non-linear shift in the latent space to align the structure-shifted test data with the model’s pre-trained spatio-temporal evolution patterns (as illustrated in Lemma \ref{lemma1}). As shown in Figure \ref{fig:tsne-dist}(b), PromptDyG successfully bridges this gap, where the training and test-time distributions span the same region. 

In this paper, we instantiate the prompting framework by using Roland-GRU as the DTDG backbone. Notably, our framework’s applicability extends beyond this backbone, as demonstrated by the experiments presented in Section \ref{exp:main-res}, showcasing its versatility across various DTDG methods.

\subsection{Prompt Adaptation via Unsupervised Feature-Wise Entropy Minimization}
Link prediction typically assumes that nodes connected by edges should exhibit similar latent representations. To this end, we operate on the latent embedding space and optimize the prompt $\mathbf{P}_t$ via unsupervised feature-wise entropy minimization at each inference phase. 
As shown in Figure \ref{fig:tsne-dist}(a), the structural shift causes the test-time distribution to deviate from the patterns learned during training, leading to indistinguishable similarity scores between positive and negative pairs (Figure \ref{mov:fig-intro}(a)). This indicates that the backbone, suffering from structural shifts, tends to produce pattern-blurred and high-entropy representations, reflecting its uncertainty over the latent structures of the next time step. In contrast, a well-adapted model is expected to generate low-entropy, pattern-discriminative embeddings (Figure \ref{mov:fig-intro} (b)). 
To address this problem, we augment the feature of the test graph with the prompt $\mathbf{P}_t$ and optimize it by minimizing the unsupervised average Shannon entropy over the predicted distributions of the nodes, which is formulated as:
\begin{equation}\label{eq:entropy}
\min_{\mathbf{P}_t}\mathcal{L}_{ent} = -\frac{1}{|\mathcal{V}_{s}|} \sum_{i \in \mathcal{V}_{s}} \sigma(\tilde{h}_i) \log \sigma(\tilde{h}_i),
\end{equation}
where $|\mathcal{V}_{s}|$ is the number of randomly sampled nodes and $\sigma(\cdot)$ is the softmax function. Note that this process requires no modifications to the pre-trained model, preserving the temporal dynamics encoded in the backbone and thus suitable for dynamic graph learning environments.

\subsection{Theoretical Analysis} \label{text:Theore-analysis}

In this section, we provide a formal theoretical and empirical analysis to justify how the proposed feature-wise, label-free entropy minimization over the prompt matrix $\mathbf{P}_t$ enhances link prediction performance under structural shifts.

\begin{proposition} \label{prop1}

At each time step, the final linear layer can be written as $\widetilde{\mathbf{H}} = \hat{\mathbf{H}}\mathbf{W}$, where $\mathbf{W} =[\mu_1,\mu_2,\cdots,\mu_K]\in \mathbb{R}^{d \times K}$ is a learnable projection matrix in the backbone. Under the link prediction objective in Eq.(\ref{eq:bce-loss}), the resulting representation space lies in a $K$-dimensional subspace spanned by the columns of $\mathbf{W}$, where each column vector $\mu_k$ can be interpreted as a specific structural evolutionary prototype.
\end{proposition}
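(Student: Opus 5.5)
The statement has two parts: a purely linear-algebraic claim about the range of the final layer, and an interpretive claim about what the columns of $\mathbf{W}$ represent. I would treat them separately, making the first rigorous and supporting the second with a gradient argument.

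\textbf{Step 1 (subspace claim).} Write the embedding of node $i$ as $\tilde h_i = \hat h_i^\top \mathbf{W} \in \mathbb{R}^{K}$. Viewed in the original $d$-dimensional feature coordinates, the layer factors as $\hat h_i \mapsto \mathbf{W}^\top \hat h_i = \sum_{k=1}^K (\mu_k^\top \hat h_i)\, e_k$. Each coordinate $(\tilde h_i)_k=\langle \hat h_i,\mu_k\rangle$ is therefore the projection of $\hat h_i$ onto the prototype direction $\mu_k$.

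The part of $\hat h_i$ that affects any downstream quantity is its orthogonal projection onto $\mathrm{span}\{\mu_1,\dots,\mu_K\}=\mathrm{col}(\mathbf{W})$. Decompose $\hat h_i = \Pi_{\mathbf{W}}\hat h_i + (I-\Pi_{\mathbf{W}})\hat h_i$. The second term lies in the null space of $\mathbf{W}^\top$, so $\tilde h_i = \mathbf{W}^\top \Pi_{\mathbf{W}} \hat h_i$.

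It follows that the score $s_{uv}=g(\tilde h_u,\tilde h_v)$ depends on $\hat h_u,\hat h_v$ only through their components in $\mathrm{col}(\mathbf{W})$. The effective representation space is thus the subspace spanned by the columns of $\mathbf{W}$. Its dimension is at most $K$, and exactly $K$ when $\mathbf{W}$ has full column rank, which I would state as a mild genericity assumption (it holds almost surely under random initialization and gradient training).

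\textbf{Step 2 (role of the BCE objective).} Differentiate Eq.~(\ref{eq:bce-loss}) with respect to $\mu_k$, taking an inner-product head for concreteness. Then
\[
s_{uv}=\sigma\bigl(\textstyle\sum_k (\mu_k^\top\hat h_u)(\mu_k^\top\hat h_v)\bigr),
\]
and the gradient is
\[
\nabla_{\mu_k}\mathcal{L}=\textstyle\sum_{(u,v)}(s_{uv}-y_{uv})\bigl[(\mu_k^\top\hat h_v)\hat h_u+(\mu_k^\top\hat h_u)\hat h_v\bigr].
\]
This is a combination of the hidden states of node pairs weighted by their prediction error. Starting from initialization, each $\mu_k$ therefore accumulates directions along which positive pairs (those with $y_{uv}=1$ at time $t+1$) co-activate. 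This is what justifies reading $\mu_k$ as a structural evolutionary prototype: a direction encoding a connectivity pattern that predicts the next snapshot, with $(\tilde h_i)_k$ measuring node $i$'s alignment to it.

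After applying the softmax, $\sigma(\tilde h_i)$ becomes a soft assignment of node $i$ over the $K$ prototypes. This sets up the entropy analysis of Eq.~(\ref{eq:entropy}).

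\textbf{Main obstacle.} The interpretive half is the weak point: ``prototype'' is not a formal notion. I would formalize it modestly. Under the inner-product head, I would show that at a stationary point of the BCE loss each $\mu_k$ lies in the span of the hidden states $\{\hat h_i\}$, plus the initialization component if weight decay is absent. This follows directly from the gradient expression above. With that in hand, I would phrase the prototype reading as a consequence of this span property rather than as a rigorous claim. For an MLP head $g$, the same argument goes through with the error weights replaced by the Jacobians of $g$.
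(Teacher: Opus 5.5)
\textbf{Comparison of routes.} Your argument for the provable half is correct, but it takes a somewhat different route from the paper.

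The paper's proof runs as follows.
\begin{itemize}
\item It asserts $\tilde h_i=\hat h_i\mathbf{W}=\sum_k\alpha_{ik}\mu_k$. As written, this conflates the rows and columns of $\mathbf{W}$, since $\tilde h_i\in\mathbb{R}^{1\times K}$ while $\mu_k\in\mathbb{R}^{d}$.
\item It brings in the objective by noting that an inner-product head gives $\mathbf{S}=\hat{\mathbf{H}}\mathbf{M}\hat{\mathbf{H}}^\top$, with $\mathbf{M}=\mathbf{W}\mathbf{W}^\top$ of rank at most $K$. Minimizing the BCE is then a low-rank reconstruction of the next-snapshot adjacency.
\item It borrows the prototype reading from latent-space network models, where each rank direction is a dominant interaction pattern. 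It backs this empirically with the near-zero inter-prototype cosine similarities in Table~\ref{tab:cs-prototype}.
\end{itemize}

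Your decomposition $\tilde h_i=\mathbf{W}^\top\Pi_{\mathbf{W}}\hat h_i$ is the dimensionally consistent form of the same underlying fact. The bilinear form $\hat h_u^\top\mathbf{W}\mathbf{W}^\top\hat h_v$ only sees $\Pi_{\mathbf{W}}\hat h_u$ and $\Pi_{\mathbf{W}}\hat h_v$, because $\mathrm{col}(\mathbf{W}\mathbf{W}^\top)=\mathrm{col}(\mathbf{W})$. Your version buys precision in two ways:
\begin{itemize}
\item it shows the subspace claim is independent of the head and the loss;
\item it makes explicit that ``$K$-dimensional'' requires full column rank, hence $K\le d$, which the paper leaves implicit.
\end{itemize}

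For the interpretive half, you replace the low-rank/latent-space analogy with the BCE gradient in $\mu_k$. This gives a mechanistic picture of what each column accumulates during training. Unlike the paper's empirical check, however, it says nothing about the prototypes being well separated. That separation is exactly what the proof of Proposition~\ref{prop2} later draws from this proposition, namely $\mathbf{W}^\top\mathbf{W}\approx\mathbf{I}_K$.

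\textbf{A caution about your proposed formalization.} The span property in your last paragraph does not do the work you want from it.
\begin{itemize}
\item For the unregularized loss, stationarity $\nabla_{\mu_k}\mathcal{L}=0$ places no constraint on where $\mu_k$ lies. ``Initialization plus span of hidden states'' is a property of the gradient-descent trajectory, not of stationary points. Because the backbone is trained jointly, the span must also be taken over the hidden states at every iterate, not just the final ones.
\item With weight decay $\lambda\|\mathbf{W}\|_F^2$, the stationary-point statement does hold: $\mu_k=-\frac{1}{2\lambda}\nabla_{\mu_k}\mathcal{L}\in\mathrm{span}\{\hat h_i\}$. But the number of nodes entering the loss typically far exceeds $d$, so the hidden states generically span all of $\mathbb{R}^d$. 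The property is then vacuous.
\end{itemize}
So it adds no content to the prototype reading. That reading is better presented as the paper does: as an interpretation supported by evidence, not as a consequence of a span property.
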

The proof can be found in Appendix~\ref{app:Theore}.
In the inference phase, the $k$-th component $\tilde{h}_{ik}$ of the node $i$'s embedding $\tilde{h}_i \in \mathbb{R}^{1\times K}$ can be expressed as the dot product of $\hat{h}_i \in \mathbb{R}^{1\times d}$ and the evolutionary prototype $\mu_k \in \mathbb{R}^{d\times1}$. The probability of node $i$ belonging to pattern $k$ is given by the distribution $q_i = [q_{i1}, \dots, q_{iK}]$, where $q_{ik} = \frac{\exp(\hat{h}_i \mu_k)}{\sum_{j=1}^K \exp(\hat{h}_i \mu_j)}$. Our objective is to optimize $\mathbf{P}_t$ at each time step $t$ by minimizing the Shannon entropy:
\begin{equation}
    \min_{\mathbf{P}_t} \mathcal{L}_{ent}= -\frac{1}{|\mathcal{V}_{s}|} \sum_{i=1}^{|\mathcal{V}_{s}|} \sum_{k=1}^K q_{ik} \log q_{ik} 
\end{equation}

\begin{lemma}\label{lemma1}
    (Pattern Refinement). Assume that the input-output Jacobian $\mathbf{J}$ of the pre-trained backbone is non-degenerate. Minimizing $\mathcal{L}_{ent}$ via gradient descent on $\mathbf{P}_t$ at each time step $t\in[1, T]$ forces the latent embedding $\hat{h}_i$ to converge toward its most relevant evolutionary prototype $\mu_{k^*}$. This ensures that the inference embedding $\tilde{h}_i$ achieves the maximal activation response toward the evolutionary prototype $\mu_{k^*}$,  thus efficiently and continuously modeling the evolving patterns.
\end{lemma}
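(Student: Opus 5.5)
The plan is to compute the gradient of $\mathcal{L}_{ent}$ with respect to the prompt by the chain rule. We first differentiate the entropy with respect to the logits $z_{ik}=\hat{h}_i\mu_k$. Then we push this through the frozen linear layer $\mathbf{W}$ of Proposition \ref{prop1}, and finally through the Jacobian $\mathbf{J}=\partial \hat{h}_i/\partial p_t^i$ of the backbone. Since $\mathbf{J}$ is assumed non-degenerate, a descent step on $\mathbf{P}_t$ can realize, to first order, any desired displacement of $\hat{h}_i$. Concretely, the preconditioned step $\Delta p = -\eta\,\mathbf{J}^{-1}\nabla_{\hat h_i}\mathcal{L}$ gives $\Delta\hat{h}_i\approx-\eta\nabla_{\hat h_i}\mathcal{L}$. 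For plain gradient descent we instead get $\Delta\hat h_i=-\eta \mathbf{J}\mathbf{J}^\top\nabla_{\hat h_i}\mathcal{L}$ with $\mathbf{J}\mathbf{J}^\top\succ 0$, which is still a descent direction. Either way, the analysis reduces to the dynamics of $\hat{h}_i$ under entropy descent in the embedding space.

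The key computation is the standard softmax-entropy gradient. With $H_i=-\sum_k q_{ik}\log q_{ik}$, one has
\begin{equation*}
\frac{\partial H_i}{\partial z_{ik}} = -q_{ik}\left(\log q_{ik} + H_i\right).
\end{equation*}
It follows that
\begin{equation*}
-\nabla_{\hat h_i} H_i=\sum_k q_{ik}\left(\log q_{ik}+H_i\right)\mu_k^\top .
\end{equation*}
Here $\log q_{ik}+H_i$ is positive exactly when $\log q_{ik}$ exceeds its $q_i$-weighted average. The coefficient is therefore largest and positive for $k^*=\arg\max_k q_{ik}=\arg\max_k \hat{h}_i\mu_k$, which is the most relevant prototype. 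Prototypes with below-average probability receive negative weight. We then check the first-order effect of this direction on the logit gaps:
\begin{equation*}
\Delta\left(z_{ik^*}-z_{ik}\right)\propto \left(\mu_{k^*}-\mu_k\right)^\top \mathbf{M}\sum_j q_{ij}\left(\log q_{ij}+H_i\right)\mu_j ,
\end{equation*}
with $\mathbf{M}=\mathbf{I}$ or $\mathbf{J}\mathbf{J}^\top$. We aim to show this quantity is nonnegative. Equivalently, descent makes $\hat h_i$ move toward $\mu_{k^*}$, so that $\hat h_i\mu_{k^*}$ increases relative to the other logits and $q_{ik^*}\to1$.

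For convergence, $\mathcal{L}_{ent}\ge0$ decreases monotonically under a small enough step, assuming local smoothness of the backbone. Its stationary points in $q$ are the uniform distribution, which is unstable because the weights above vanish there, and the one-hot vertices. Iterates starting with a strict maximizer therefore approach $q_{ik^*}\to1$. That is exactly maximal activation of $\tilde h_{ik^*}$ relative to the other components, which gives the claimed alignment with $\mu_{k^*}$. Applying the argument independently at each $t\in[1,T]$, with a fresh $\mathbf{P}_t$ and a frozen $f_\theta^t$, yields the continual statement.

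The main obstacle is the sign argument for the logit gaps. If the prototypes are non-orthogonal, or if $\mathbf{M}\neq\mathbf{I}$, cross terms $\mu_{k^*}^\top\mathbf{M}\mu_j$ can in principle reverse the ordering. I would first try to handle this by assuming the prototypes are approximately orthonormal with comparable norms, which is plausible given Proposition \ref{prop1}'s interpretation of the $\mu_k$ as distinct prototypes. An alternative is to work directly in logit space: use the fact that $\mathbf{W}^\top\mathbf{M}\mathbf{W}$ is positive definite when $\mathbf{W}$ has full column rank, and argue via a Lyapunov-type decrease of $H_i$ instead of coordinatewise monotonicity. Because the claim itself is informal, the deliverable would be this local, first-order descent argument together with a stationary-point characterization, rather than a global convergence rate.
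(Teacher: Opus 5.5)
Your route is the paper's route. Its proof likewise (i) computes $\partial\mathcal{L}_i/\partial\tilde h_{ik}=-q_{ik}(\log q_{ik}+H(q_i))$ and reads off the threshold $\tau_i=\exp(-H(q_i))$. It then (ii) lifts this through $\mathbf{W}$ to $-\sum_k q_{ik}(\log q_{ik}+H(q_i))\mu_k$, and (iii) pulls it back to $\mathbf{P}_t$ through the Jacobian. At (ii), and again at (iii), it simply asserts that the sharpening direction survives. The obstacle you flag is exactly the step the paper skips, so the gap is genuine (and shared), and neither of your repairs closes it.

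Write $c_{ik}=q_{ik}(\log q_{ik}+H_i)$. Plain gradient descent gives $\Delta z_i\approx\eta\,\mathbf{A}c_i$ with $\mathbf{A}=\mathbf{W}^\top\mathbf{J}\mathbf{J}^\top\mathbf{W}$. Non-degeneracy of $\mathbf{J}$ is all the paper's over-parameterization argument provides. Given $\mathbf{W}$ of full column rank, it buys at most $\mathbf{A}\succ0$, hence $\Delta H_i\approx-\eta\,c_i^\top\mathbf{A}c_i<0$ for non-uniform $q_i$. That is your Lyapunov statement, and it is silent about \emph{which} prototype wins.

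Approximate orthonormality of the $\mu_k$ constrains $\mathbf{W}^\top\mathbf{W}$, not $\mathbf{W}^\top\mathbf{J}\mathbf{J}^\top\mathbf{W}$. Even with $\mathbf{J}\mathbf{J}^\top=\mathbf{I}$ it fails near ties. The favourable term $c_{ik^*}-c_{ik}$ tends to $0$ as $q_{ik}\to q_{ik^*}$, whereas the cross term $(\mathbf{e}_{k^*}-\mathbf{e}_k)^\top(\mathbf{A}-\alpha\mathbf{I})c_i$ need not. Concretely, take $K=3$ and $q_i=(0.351,0.350,0.299)$, so $c_i\approx(0.0172,0.0161,-0.0333)$. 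Orthogonal prototypes with $\|\mu_1\|=\|\mu_3\|=1$ and $\|\mu_2\|^2=1.1$ then give $\Delta(z_{i1}-z_{i2})\propto c_{i1}-1.1\,c_{i2}<0$.

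What does work is an explicit isotropy hypothesis. Since $\sum_k c_{ik}=0$ and the softmax is shift-invariant, $\mathbf{A}=\alpha\mathbf{I}+\beta\mathbf{1}\mathbf{1}^\top$ gives $\Delta z_i=\eta\alpha c_i$. Then $\Delta(z_{ik^*}-z_{ik})=\eta\alpha(c_{ik^*}-c_{ik})\ge0$, because $q\mapsto q(\log q+H_i)$ is nonpositive below $e^{-H_i}$, increasing above $e^{-H_i-1}$, and $q_{ik^*}\ge e^{-H_i}$. Otherwise you need a quantitative margin condition against $\|\mathbf{A}-\alpha\mathbf{I}\|$, or you must weaken the conclusion to concentration on \emph{some} prototype.

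The Lyapunov route also cannot replace the coordinatewise argument, because your list of stationary points is incomplete. On the closed simplex, $c_i$ vanishes (in the limit) at every face barycenter, that is, at $q_i$ uniform on a subset $S$ with $2\le|S|<K$. The one-hot vertices are limits at infinity rather than stationary points. Monotone decrease of $H_i$ therefore excludes neither stalling near such a face nor a switch of the argmax. What excludes both is monotonicity of $z_{ik^*}-z_{ik}$, which keeps $q_{ik}/q_{ik^*}$ below its initial value $<1$ --- precisely the unproved step.

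Two smaller repairs are needed. First, $\hat h_i$ depends on all of $\mathbf{P}_t$ through message passing, and the loss is averaged over $\mathcal{V}_s$. With $\mathbf{J}_i=\partial\hat h_i/\partial\mathbf{P}_t$, the update $\Delta z_i$ therefore also carries cross terms $\mathbf{W}^\top\mathbf{J}_i\mathbf{J}_j^\top\mathbf{W}c_j$, $j\ne i$, which your per-node Jacobian $\partial\hat h_i/\partial p_t^i$ hides. Second, $\mathbf{J}^{-1}$ must be a right pseudo-inverse, and the resulting preconditioned step is no longer the gradient descent of the statement.

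Finally, your reading of ``converges toward $\mu_{k^*}$'' as dominance of the logit $\hat h_i\mu_{k^*}$ is the only tenable one. $H_i\to0$ forces $\hat h_i(\mu_{k^*}-\mu_k)\to\infty$, so $\hat h_i$ cannot literally converge to $\mu_{k^*}^\top$, contrary to how the lemma is invoked in Proposition~\ref{prop2}.
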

The proof can be found in Appendix~\ref{app:Theore}. By minimizing the entropy with respect to the prompt $\mathbf{P}_t$, our method reshapes the prototype-assignment distribution $q_i$, which measures the alignment between node $i$ and the $K$ evolutionary prototype.  
The entropy gradient forces the distribution $q_i$ to approach a one-hot distribution (from high to low entropy), driving the embedding $\hat{h}_i$ to align tightly with the dominant prototype $\mu_{k^\star}$ while suppressing others (i.e., $\tilde{h}_{ik^{*}}\uparrow, \tilde{h}_{ij}(j\neq k^{*})\downarrow$), thereby rectifying the shift-induced ambiguity and efficiently modeling the evolving patterns.

\textbf{Link Prediction Performance Enhancement.} We demonstrate that the pattern refinement directly leads to improved discriminative power for link prediction, as it relies on the similarity of node pairs, \ie, $s_{uv} = g(\tilde{h}_u,\tilde{h}_v)=\tilde{h}_u \tilde{h}_v^\top$. 
\begin{proposition}\label{prop2}
    (Margin Expansion). Given a positive pair $(u, v) \in \mathcal{E}_{true}$ and a negative pair $(u, w) \notin \mathcal{E}_{true}$, optimizing $\mathbf{P}_t$ via entropy minimization increases the prediction margin $\gamma = s_{uv} - s_{uw}$.
\end{proposition}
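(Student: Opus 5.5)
The plan is to reduce the margin to a statement about prototype coordinates, using Proposition~\ref{prop1} and Lemma~\ref{lemma1}. By Proposition~\ref{prop1}, every adapted embedding has coordinates in the prototype basis, $\tilde{h}_{ik}=\hat{h}_i\mu_k$. The score $s_{uv}=\tilde{h}_u\tilde{h}_v^\top$ therefore decomposes as
\[
s_{uv}=\sum_{k=1}^K \tilde{h}_{uk}\tilde{h}_{vk}.
\]
I would model the link-prediction semantics as follows. A positive pair $(u,v)\in\mathcal{E}_{true}$ shares its dominant prototype, $k^*_u=k^*_v=k^*$, because linked nodes follow the same evolutionary pattern. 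A negative pair $(u,w)\notin\mathcal{E}_{true}$ has distinct dominant prototypes, $k^*_w=j^*\neq k^*$. This assumption would be stated explicitly; it is the link between the unsupervised objective and the labels.

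I would then split each score into a dominant term and a residual:
\[
s_{uv}=\tilde{h}_{uk^*}\tilde{h}_{vk^*}+\sum_{k\neq k^*}\tilde{h}_{uk}\tilde{h}_{vk},
\qquad
s_{uw}=\tilde{h}_{uk^*}\tilde{h}_{wk^*}+\tilde{h}_{uj^*}\tilde{h}_{wj^*}+\sum_{k\neq k^*,j^*}\tilde{h}_{uk}\tilde{h}_{wk}.
\]
Lemma~\ref{lemma1} says that gradient descent on $\mathbf{P}_t$ under a non-degenerate Jacobian $\mathbf{J}$ drives the dominant coordinate up and the others down along the trajectory. Here "down" should mean toward zero in magnitude, or at least that the softmax ratios $q_{ij}/q_{ik^*}\to 0$. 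Under this:

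\begin{itemize}
\item In $s_{uv}$, the product $\tilde{h}_{uk^*}\tilde{h}_{vk^*}$ of two growing positive quantities increases, and the residual vanishes.
\item In $s_{uw}$, every product pairs one node's dominant coordinate with the other node's suppressed coordinate, or two suppressed ones.
\end{itemize}

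To make this quantitative, I would write $\tilde{h}_i=a_i e_{k^*_i}+r_i$ with $\|r_i\|\le\epsilon_i$. This gives $s_{uv}\ge a_ua_v-\epsilon_u a_v-\epsilon_v a_u-\epsilon_u\epsilon_v$ and $|s_{uw}|\le \epsilon_w a_u+\epsilon_u a_w+\epsilon_u\epsilon_w$. Hence $\gamma\ge a_ua_v-O(\epsilon\,a)$, which increases as entropy decreases, since low entropy forces $\epsilon_i/a_i\to 0$.

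To present this as monotone improvement rather than an asymptotic claim, I would compare $\gamma$ before and after a step through the first-order change $\frac{d\gamma}{d\eta}=\nabla_{\mathbf{P}_t}\gamma\cdot(-\nabla_{\mathbf{P}_t}\mathcal{L}_{ent})$. Using the chain rule through $\mathbf{J}$, together with the entropy gradient with respect to the logits, $\partial \mathcal{L}/\partial \tilde{h}_{ik}=-q_{ik}(\log q_{ik}+H(q_i))$, one can check the sign. This gradient is negative for the dominant $k$ (when $q_{ik^*}$ exceeds $e^{-H}$) and positive for small $q_{ik}$. This yields the coordinate-wise movements above, so the positive-pair dot product gains and the cross terms of the negative pair lose.

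I expect the main obstacle to be the softmax's shift invariance. Entropy depends only on differences of logits, so "suppressing others" does not by itself force the non-dominant $\tilde{h}_{ij}$ toward zero, and raw inner products could grow for negative pairs too. I would first address this by working with the centered or normalized embeddings, as with cosine similarity, or by adding a norm-boundedness assumption on $\tilde{h}_i$. Under that assumption, low entropy forces concentration on one coordinate, so the decomposition bound gives the margin increase.
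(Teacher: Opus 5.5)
\textbf{There is a genuine gap in the concentration step.} Your decomposition $s_{uv}=\sum_k\tilde h_{uk}\tilde h_{vk}$ and the bounds built on $\tilde h_i=a_i\mathbf{e}_{k^*_i}+r_i$ are fine. The step meant to make them bite, ``low entropy forces $\epsilon_i/a_i\to 0$'', is false, and neither repair you propose rescues it.

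Entropy sees only logit gaps. For example, $(g,0,\dots,0)$ and $(0,-g,\dots,-g)$ have the same entropy, yet the first has $\epsilon_i=0$ and the second has $a_i=0$. Worse, the sharpening step you borrow from Lemma~\ref{lemma1} rules concentration out:
\begin{itemize}
\item The logit gradient $-q_{ik}(\log q_{ik}+H(q_i))$ sums to zero over $k$, so a logit-space descent step conserves $S_i=\sum_k\tilde h_{ik}$.
\item Entropy $\to 0$ forces all gaps to diverge. Since $\sum_j(a_i-\tilde h_{ij})=Ka_i-S_i$, this forces $a_i\to\infty$.
\item The suppressed logits must then fall by the same total, and $\epsilon_i\ge|a_i-S_i|/\sqrt{K-1}$ gives $\liminf\epsilon_i/a_i\ge 1/\sqrt{K-1}$.
\end{itemize}
Centering makes this exact: $\sum_k\tilde h_{ik}=0$ gives $\epsilon_i\ge a_i/\sqrt{K-1}$ for every vector, whatever its entropy. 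A norm bound $\|\tilde h_i\|\le R$ does not help either. It keeps the entropy bounded away from zero, and its constrained minimizers are centered (recentering lowers the norm without changing the entropy, after which rescaling lowers the entropy). So the same obstruction applies.

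Your error term is therefore of the same order as $a_ua_v$; for $K=2$ the bound is vacuous, so nothing about $\gamma$ follows. The first-order check does not close the gap either. $d\gamma/d\eta$ couples $u,v,w$ through $\mathbf{J}\mathbf{J}^\top$ and through $\tilde h_v-\tilde h_w$. Even with $\mathbf{J}=\mathbf{I}$ it can be negative when $k^*_u=k^*_v\neq k^*_w$: take $K=3$, $\tilde h_u=(\log 0.5,\log 0.3,\log 0.2)$, $\tilde h_v=(1,0,0.99)$, $\tilde h_w=(0.5,0.6,-2)$. Its sign is not determined by the threshold rule.

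\textbf{How the paper gets concentration.} The paper does not derive concentration from low entropy; it imports it. It reads Lemma~\ref{lemma1} at the embedding level, taking $\mathcal{L}_{ent}\to 0$ to mean $\hat h_i\to\mu_{k^*}^\top$. It combines this with the near-orthonormality $\mathbf{W}^\top\mathbf{W}\approx\mathbf{I}_K$ of the prototypes (Proposition~\ref{prop1} and the inter-prototype cosine table) to get $\tilde h_i=\hat h_i\mathbf{W}\approx\mathbf{e}_{k^*}^\top\mathbf{W}^\top\mathbf{W}\approx\mathbf{e}_{k^*}^\top$, hence $s_{uv}\approx 1$ and $s_{uw}\approx 0$. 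In your notation this is exactly $a_i\approx 1$, $\epsilon_i\approx 0$. It enters as an embedding-level input plus prototype geometry, so the shift-invariance issue you raise is sidestepped there rather than resolved. With that input, your bounds become a quantitative version of the paper's chain of approximations.

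If you prefer to stay in the shift-invariant picture, you need an assumption pinning down the full logit profiles of the pair, not just their argmax. For instance, take $\tilde h_u=\tilde h_v=(a,-b,\dots,-b)$ and $\tilde h_w=(-b,a,-b,\dots,-b)$. Then $s_{uv}=a^2+(K-1)b^2$, $s_{uw}=-2ab+(K-2)b^2$, and $\gamma=(a+b)^2$. This grows with the logit gap, so suppressed logits going negative are harmless in that setting.
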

\begin{proof}
Due to temporal-spatial consistency \cite{wang2025dyexplainer}, nodes $u$ and $v$ in a true link likely share the same evolutionary pattern $c$ \cite{zhu2016scalable}. According to Lemma \ref{lemma1}, $\mathcal{L}_{ent} \to 0$ implies $\hat{h}_u \to \mu_c^{\top}$ and $\hat{h}_v \to \mu_c^{\top}$. Since the final embedding is obtained via linear projection $\tilde{h}_i = \hat{h}_i \mathbf{W}$, we have
$\tilde{h}_u \approx \mu_c^{\top} \mathbf{W}, \quad \tilde{h}_v \approx \mu_c^{\top} \mathbf{W}$,  which implies $s_{uv} = \tilde{h}_u \tilde{h}_v^\top \approx \mu_c^{\top} \mathbf{W} \mathbf{W}^\top \mu_c$. According to Proposition \ref{prop1}, each column of $\mathbf{W}$ encodes a specific evolutionary prototype, and these prototypes are highly dissimilar and well-separated, yielding $\mathbf{W}^\top \mathbf{W} \approx \mathbf{I}_K$. Expressing the prototype vector as $\mu_c = \mathbf{W}\mathbf{e}_c$ (where $\mathbf{e}_c$ is the $c$-th standard basis vector), the score can be reformulated as $s_{uv} \approx (\mathbf{W} \mathbf{e}_c)^\top \mathbf{W} \mathbf{W}^\top (\mathbf{W} \mathbf{e}_c) = \mathbf{e}_c^\top (\mathbf{W}^\top \mathbf{W}) (\mathbf{W}^\top \mathbf{W}) \mathbf{e}_c$. By substituting $\mathbf{W}^\top \mathbf{W} \approx \mathbf{I}_K$, we obtain $s_{uv} \approx \mathbf{e}_c^\top (\mathbf{W}^\top \mathbf{W})^2 \mathbf{e}_c \approx \mathbf{e}_c^\top \mathbf{e}_c = 1$, which maximizes the positive similarity.

Conversely, for a non-existent link $(u, w)$, the nodes typically belong to divergent patterns $c_1$ and $c_2$ ($c_1 \neq c_2$). Entropy minimization drives $\hat{h}_u \to \mu_{c_1}^{\top}$ and $\hat{h}_w \to \mu_{c_2}^{\top}$, yielding $\tilde{h}_u \approx \mu_{c_1}^{\top} \mathbf{W}$ and $\tilde{h}_w \approx \mu_{c_2}^{\top} \mathbf{W}$. The similarity score $s_{uw} = \tilde{h}_u \tilde{h}_w^\top \approx \mu_{c_1}^{\top} \mathbf{W} \mathbf{W}^\top \mu_{c_2}$. Similarly, since $\mu_{c_1} = \mathbf{W} \mathbf{e}_{c_1}$ and $\mu_{c_2} = \mathbf{W} \mathbf{e}_{c_2}$, the negative similarity score can be expanded as $s_{uw} \approx \mu_{c_1}^\top \mathbf{W} \mathbf{W}^\top \mu_{c_2} = \mathbf{e}_{c_1}^\top (\mathbf{W}^\top \mathbf{W}) (\mathbf{W}^\top \mathbf{W}) \mathbf{e}_{c_2}.$ By substituting $\mathbf{W}^\top \mathbf{W} \approx \mathbf{I}_K$, we obtain $s_{uw} \approx \mathbf{e}_{c_1}^\top \mathbf{e}_{c_2} = 0 \quad (\text{since } c_1 \neq c_2)$, which suppresses the similarity score.

Therefore, the prediction margin $\gamma = s_{uv} - s_{uw}$ increases after entropy minimization.
\end{proof}

To support the theoretical analysis, we further conduct empirical studies on the UCI dataset to visualize the effectiveness of the prompt adaptation. As shown in Figure \ref{mov:fig-intro} (a), due to the distribution shift, the backbone produces highly overlapped similarity scores for positive and negative pairs, resulting in a small margin. In contrast, our adaptation increases positive-pair similarity while decreasing negative-pair similarity via pattern refinement (Figure \ref{mov:fig-intro} (b)). This enlarges the similarity margin and improves the model's generalization. Similar results on other datasets can be found in Appendix \ref{app:AER-Similarity}.

\subsection{Computational Complexity.}
During inference, the computational complexity of entropy loss in PromptDyG is $O(SK)$ for each epoch.  For prompt adaptation involving $e$ gradient steps (epochs) per snapshot, the cumulative complexity is 
$O(eSK)$, which is linear with respect to the number of sampled-nodes $S (S\ll N)$ and output embedding dimension $K$. Since the backbone is frozen, the memory overhead is negligible, involving only the lightweight prompt. In contrast, Tent \cite{tent2021} optimizes a global entropy loss with a complexity of $O(NK)$, while the PIC loss in Matcha scales as $O(NK^2)$. Furthermore, both Tent and Matcha require higher memory costs to store intermediate activations for updating the pre-trained model parameters. As for GTrans, its contrastive loss exhibits a quadratic complexity of $O(N^2K)$. Additionally, its dependence on graph augmentation introduces computational latency and memory burdens.
We empirically evaluate the efficiency of PromptDyG and those TTA methods in Subsection \ref{exp:complexity}.

\section{Experiments}

\subsection{Experimental Setup}

\begin{table*}[htbp]
\centering  
\caption{Comparison of MRR (adaptation time) results on six datasets. We report the mean of MRR over three runs, with the best results highlighted in bold and green, and the second-best results highlighted in purple. Avg. Rank and Avg. MRR show the average MRR and ranking of each method across all datasets, respectively. The values in parentheses represent the average adaptation time per epoch (ms).}
\renewcommand{\arraystretch}{1.0}
\resizebox{0.95\textwidth}{!}{
\begin{tabular}{lcccccc|c|c}
\hline
\multirow{2}{*}{Methods} & \multicolumn{6}{|c|}{Datasets} &\multicolumn{1}{c|}{Avg.} &\multicolumn{1}{c}{Avg.}\\\cline{2-7}
&\multicolumn{1}{|c}{AS-733}&Reddit-title&Reddit-body&UCI&Bitcoin-OTC&Bitcoin-Alpha&\multicolumn{1}{c|}{Rank}&\multicolumn{1}{c}{MRR} \\\hline          
& \multicolumn{7}{c}{DTDG models} \\\hline
\multicolumn{1}{l|}{EvolveGCN-H} & 0.3004&0.1233 &0.0745 &0.0886&0.0957 & 0.0805 & 12.17 & 0.1272 \\
\multicolumn{1}{l|}{EvolveGCN-O} &0.1799 & 0.1499& 0.0685& 0.0639 &0.0129 &0.0249 & 13.67 & 0.0833\\
\multicolumn{1}{l|}{GCRN-GRU} & \secondbest{0.3443} & 0.3420& 0.2190&0.0999 &0.1726 &0.1453& 8.33 & 0.2205\\
\multicolumn{1}{l|}{GCRN-LSTM} &0.3411 & 0.3499& 0.2194& 0.0946&0.1744 & 0.1441 & 8.50 & 0.2206\\
\multicolumn{1}{l|}{GCRN-Baseline} &0.3363 & 0.3556& 0.2193&0.0865 &0.1789&0.1456 & 9.00 & 0.2204\\
\multicolumn{1}{l|}{TGCN} &0.3440 &0.3915 &0.2503 & 0.0627&0.0845 & 0.0749 & 9.83 & 0.2013\\
\multicolumn{1}{l|}{SFDyG}& 0.3391 & 0.4219& 0.3271&0.1143 & 0.1804 &0.1583& 4.83 & 0.2569\\
\multicolumn{1}{l|}{Roland-Average} & 0.2749&0.3565 &0.2868 & 0.0694&0.1215 & 0.0925& 10.50 & 0.2003\\
\multicolumn{1}{l|}{Roland-MLP} &0.3008 & 0.3947&0.2689 & 0.0996&0.1612 &0.1313& 8.83 & 0.2261\\
\multicolumn{1}{l|}{Roland-GRU} &0.3381 & 0.4212& 0.3622 &0.1065 &0.1932 &0.1547 & 4.83 & 0.2627\\\hline
\multicolumn{1}{l|}{Roland-GRU}&\multicolumn{7}{c}{TTA-based models} \\
\cline{2-9}
\multicolumn{1}{l|}{\ \ \ \ \ \ +Tent}  & 0.3135(6.71)&0.3628(39.72) &0.3363(8.55) &\secondbest{0.1225}(19.20) &0.1941(8.95) &0.1637(5.45)& 5.50 & 0.2488 \\
\multicolumn{1}{l|}{\ \ \ \ \ \ +Matcha} &0.3206(17.57) & 0.4091(63.89) & 0.3445(27.39)& 0.1041(45.68)&\secondbest{0.1955}(22.56) & \secondbest{0.1675}(14.07)& 4.67 & 0.2569\\
\multicolumn{1}{l|}{\ \ \ \ \ \ +GTrans} &0.3394(26.25) &\secondbest{0.4276}(149.83) &\secondbest{0.3712}(60.26) &0.1064(29.80) & 0.1942(24.87)& 0.1651(15.46) & 3.33 & 0.2673 \\\hline\hline
\multicolumn{1}{l|}{PromptDyG} & \best{0.3448}(\textbf{3.85}) &\best{0.4425}(\textbf{33.90}) & \best{0.3906}(\textbf{6.81}) &\best{0.1248}(\textbf{9.15}) &\best{0.2004}(\textbf{4.97}) &\best{0.1732}(\textbf{3.20})& 1.00 &0.2794\\\hline\hline 
\end{tabular}}
\label{tab:main results}
\end{table*}

\begin{table*}[ht]
\centering
\caption{Results of enabling existing DTDG methods with PromptDyG. Bold denotes better performance. Values in parentheses represent the improvement over the backbone.}
\renewcommand{\arraystretch}{1.0}
\resizebox{0.95\textwidth}{!}{
\begin{tabular}{l|llllll}
\hline
\multirow{2}{*}{Method} & \multicolumn{6}{c}{Datasets}  \\\cline{2-7}
& AS-733&Reddit-title&Reddit-body&UCI&Bitcoin-OTC&Bitcoin-Alpha\\\hline
EvolveGCN-H & 0.3004&0.1233 &0.0745 &0.0886&0.0957 & 0.0805 \\
\ \ \ \ \ \ +PromptDyG &  \textbf{0.3189\tiny{(+1.85\%)}}& \textbf{0.1570\tiny{(+3.37\%)}}&\textbf{0.1067\tiny{(+3.22\%)}} &\textbf{0.1018\tiny{(+1.32\%)}}&\textbf{0.1189\tiny{(+2.32\%)}}&\textbf{0.0980\tiny{(+1.75\%)}} \\\hline
GCRN-GRU &0.3443 & 0.3420& 0.2190&0.0999 &0.1726 &0.1453\\
\ \ \ \ \ \ +PromptDyG &  \textbf{0.3454\tiny{(+0.11\%)}}&\textbf{0.3589\tiny{(+1.69\%)}} &\textbf{0.2331\tiny{(+1.41\%)}} &\textbf{0.1367\tiny{(+3.68\%)}}
&\textbf{0.1912\tiny{(+1.86\%)}} & \textbf{0.1562\tiny{(+1.09\%)}} \\\hline
GCRN-Baseline &0.3363 & 0.3556& 0.2193&0.0865 &0.1789&0.1456\\
\ \ \ \ \ \ +PromptDyG &  \textbf{0.3397\tiny{(+0.34\%)}}&\textbf{0.3704\tiny{(+1.48\%)}} &\textbf{0.2326\tiny{(+1.33\%)}} &\textbf{0.1181\tiny{(+3.16\%)}} & \textbf{0.1872\tiny{(+0.83\%)}} & \textbf{0.1523\tiny{(+0.67\%)}} \\\hline
TGCN &0.3440 &0.3915 &0.2503 & 0.0627&0.0845 & 0.0749 \\
\ \ \ \ \ \ +PromptDyG & \textbf{0.3455\tiny{(+0.15\%)}} & \textbf{0.4032\tiny{(+1.17\%)}}&\textbf{0.2640\tiny{(+1.37\%)}} & \textbf{0.0779\tiny{(+1.52\%)}}&\textbf{0.1025\tiny{(+1.80\%)}} &\textbf{0.0922\tiny{(+1.73\%)}}  \\\hline
SFDyG &0.3391 & 0.4219& 0.3271&0.1143 & 0.1804&0.1583\\
\ \ \ \ \ \ +PromptDyG &\textbf{0.3425\tiny{(+0.34\%)}} &\textbf{0.4322\tiny{(+1.03\%)}} & \textbf{0.3390\tiny{(+1.19\%)}}&\textbf{0.1270\tiny{(+1.27\%)}} & \textbf{0.1901\tiny{(+0.97\%)}}&\textbf{0.1664\tiny{(+0.81\%)}} \\
\hline\hline   
\end{tabular}}
\label{tab:enable}    
\end{table*}

\textbf{Datasets.} We conduct experiments on six datasets that have been extensively evaluated by previous studies for DTDGs, \ie, AS-733 \cite{leskovec2005graphs}, Reddit-title \cite{kumar2018community}, Reddit-body \cite{kumar2018community}, UCI \cite{panzarasa2009patterns}, Bitcoin-OTC, and Bitcoin-Alpha \cite{kumar2016edge}. Detailed statistics for those datasets are provided in Table \ref{tab:datasets} of Appendix \ref{Appendix:datasets}.

\textbf{Baselines.} We compare our PromptDyG to ten DTDG models, namely two versions of EvolveGCN (EvolveGCN-H and EvolveGCN-O), three versions of GCRN (GCRN-GRU, GCRN-LSTM), GCRN-Baseline, TGCN, three versions of Roland (Roland-Average, Roland-MLP, Roland-GRU), and the latest method SFDyG. We re-implement all DTDG baselines under the live-update evaluation setting using the official public code\footnote{\url{https://github.com/snap-stanford/roland}}. Additionally, we compare PromptDyG with representative TTA approaches, including a prominent general TTA method, Tent, as well as two graph-specific TTA methods: GTrans (graph adaptation) and Matcha (model adaptation). A comprehensive description of these baselines can be found in Appendix \ref{Appendix:baselines}.

\textbf{Evaluation Metric.} 
Following Roland, we evaluate our model using the Mean Reciprocal Rank (MRR). Specifically, for each node $u$ that forms a positive edge ($u, v$) at time $t+1$, we randomly sample 1000 negative edges originating from $u$ and compute the rank of the prediction score of ($u, v$) among these candidates. The final MRR is obtained by averaging the reciprocal ranks over all such nodes $u$.  Additionally,  we also report two other commonly used metrics: the Area Under the Receiver Operating Characteristic Curve (AUROC) and F1-Score in Appendix \ref{app:AER-performance}.

\textbf{Implementation Details.} We follow the live-update evaluation protocol proposed, where model performance is continuously evaluated over all available temporal snapshots. For snapshot at each time $t$, edges are randomly split into training $\mathbf{A}_t^{train}$, validation $\mathbf{A}_t^{val}$, and test sets $\mathbf{A}_t^{test}$ with an 8:1:1 ratio. The hyperparameters of baselines are set according to those provided in Roland. Note that PromptDyG only introduces two hyperparameters, including the learning rate and the number of epochs. 
We optimize the prompt to minimize the Shannon entropy over 1, 5, 10, or 20 steps, using the Adam optimizer with the learning rate selected from $\{0.01, 0.001, 0.005, 0.0005\}$ based on the unsupervised entropy loss of Eq.(\ref{eq:entropy}).  All experiments are executed on a GeForce RTX 3090 GPU (24 GB).

\subsection{Main Results}\label{exp:main-res}  

\textbf{Comparing to SOTA DTDG Methods.} The temporal link prediction results against different DTDG models are shown in Table \ref{tab:main results}. Firstly, we observe that our method achieves the best performance on all six dynamic benchmarks, leading to the top average rank and the highest average MRR across datasets. Secondly, compared with the backbone Roland-GRU, our method yields consistent improvements on all six datasets, with the largest improvement on Reddit-body, which obtains a gain of 2.84\%. These results indicate that explicitly adapting to structural shifts can enhance the model’s generalization capabilities, enabling it to maintain high predictive accuracy of subsequent snapshot prediction.

\textbf{Comparing to TTA Methods.} We further evaluate our PromptDyG with representative TTA methods.
From Table \ref{tab:main results}, we find that Tent and Matcha underperform the backbone model (Roland-GRU) on AS-733, Reddit-body, and Reddit-title. This is primarily because these datasets exhibit strong temporal dependence, as evidenced in Appendix \ref{Appendix:TNCR}. The backbone effectively captures historical evolution patterns by jointly leveraging GNNs and RNNs. However, Tent and Matcha update well-trained model parameters online, which inadvertently disturbs the pretrained temporal representations and breaks the encoded historical dependencies, leading to degraded performance in future predictions. In contrast, the graph-adaptive TTA method GTrans and our PromptDyG freeze the pretrained backbone and introduce data-centric learnable parameters to cope with structural shifts, achieving improved performance on these three datasets.  For the remaining datasets, where no clear or stable evolution patterns can be observed (see Figure \ref{fig:TNCR} in Appendix \ref{Appendix:TNCR}), learning meaningful temporal representations becomes considerably more challenging. In such scenarios, introducing inference-time adaptation provides an effective form of on-the-fly regularization, helping the model adjust to rapidly changing structures. Consequently, all TTA methods consistently yield favorable performance.  However, the average adaptation times in parentheses show that these TTA baselines suffer from substantial latency during the adaptation phase, especially for the augmentation-based method GTrans, where graph data augmentation and contrastive learning objectives bring heavy computational overhead (additional evidence is illustrated in Figure \ref{fig:complexity}).

Overall, our PromptDyG framework is more efficient for tackling structural shifts during inference in dynamic graphs.
It enlarges the similarity margin between positive and negative sample pairs through optimizing lightweight learnable prompts and consistently improves temporal link prediction performance across different datasets.

\textbf{Enabling Existing DTGD Methods with PromptDyG.} 

PromptDyG can be devised as a plug-and-play module to tackle test-time structural shifts of existing DTDG models. 
Table~\ref{tab:enable} reports the results of equipping multiple representative DTDG backbones with our PromptDyG. We observe consistent improvements across all backbones and datasets. For example, when applied to GCRN-GRU, our method yields the largest absolute improvement of 3.68\% on UCI. Similarly, on the relatively weaker backbone EvolveGCN-H, PromptDyG substantially boosts performance on Reddit-title by 3.37\% and on Reddit-body by 3.22\%. Even for stronger backbones such as SFDyG, PromptDyG still delivers stable improvements and achieves the best results on Reddit-title/body and Bitcoin-Alpha. 
In summary, while existing DTDGs models excel at capturing spatio-temporal dynamics during training, they often overlook test-time structural shifts. By explicitly mitigating the shift, PromptDyG refines the misaligned representations and consistently enhances temporal link prediction performance.

\begin{figure}[!t]
    \centering
    \includegraphics[width=0.49\linewidth]{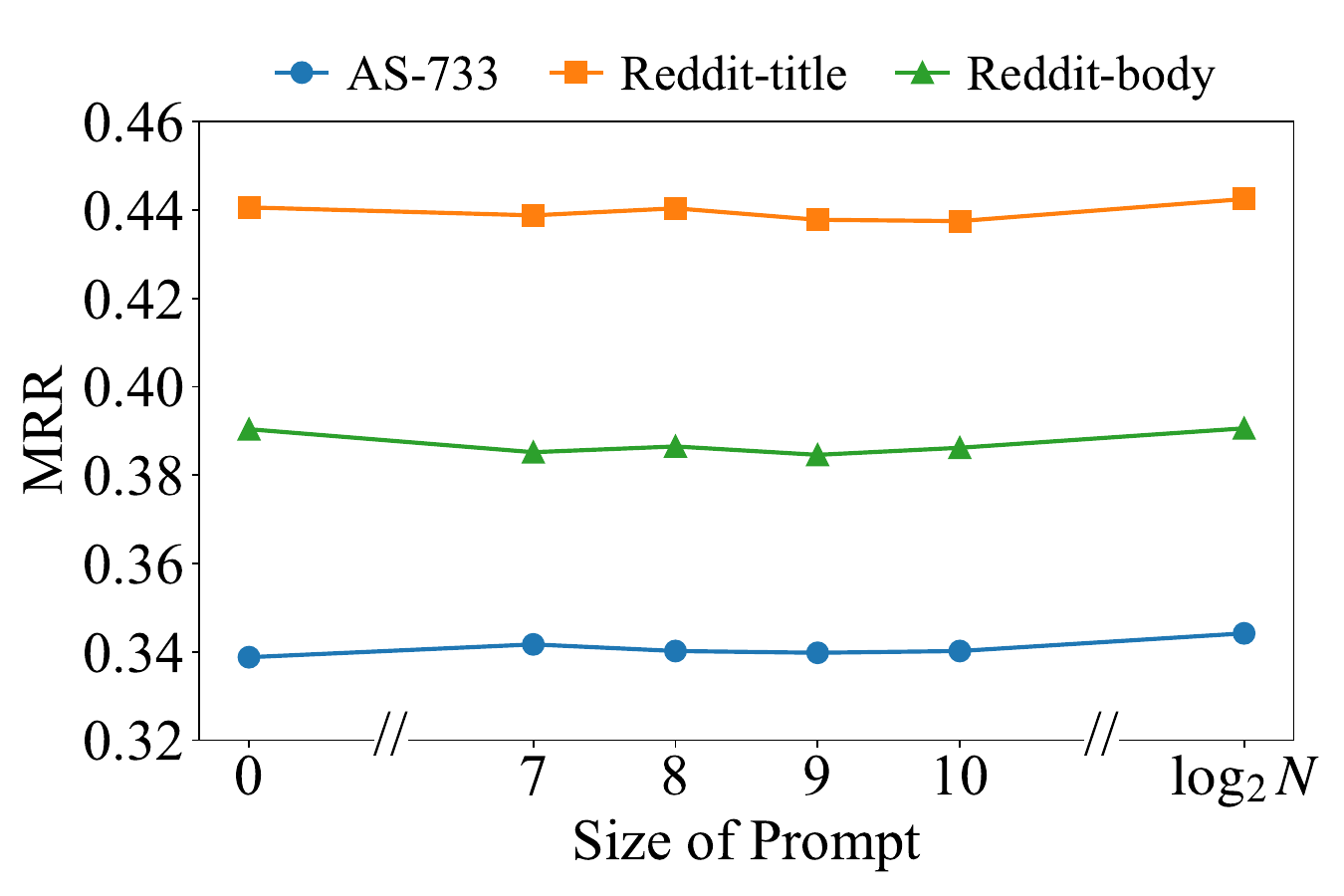}\hfill
    \includegraphics[width=0.49\linewidth]{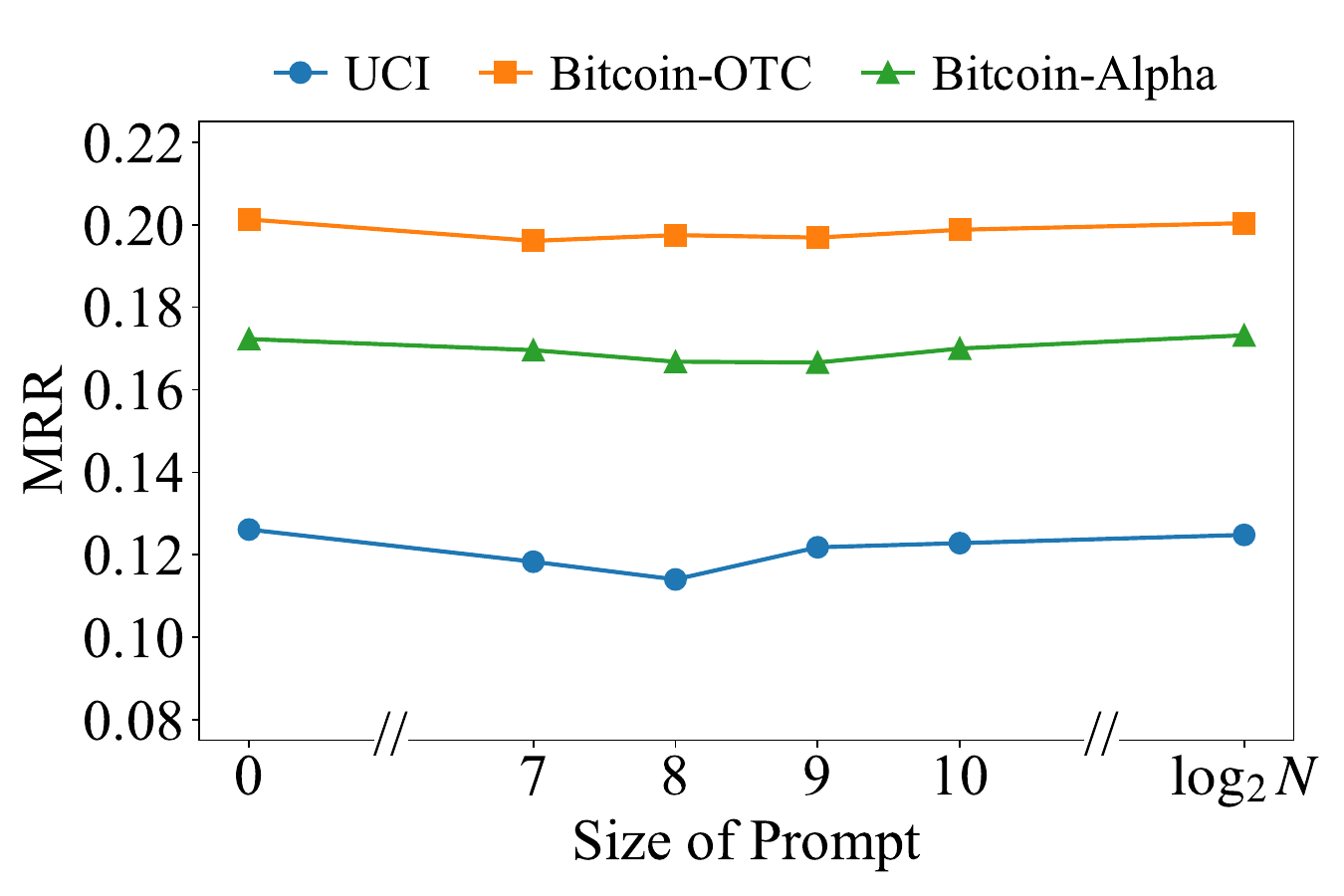}
    \caption{The MRR results w.r.t. the size of the graph
prompts. The \textit{x}-axis denotes the exponent $2^i$.}
    \label{fig:prompt_size}
\end{figure}

\begin{figure}[!t]
    \centering
    \includegraphics[width=0.49\linewidth]{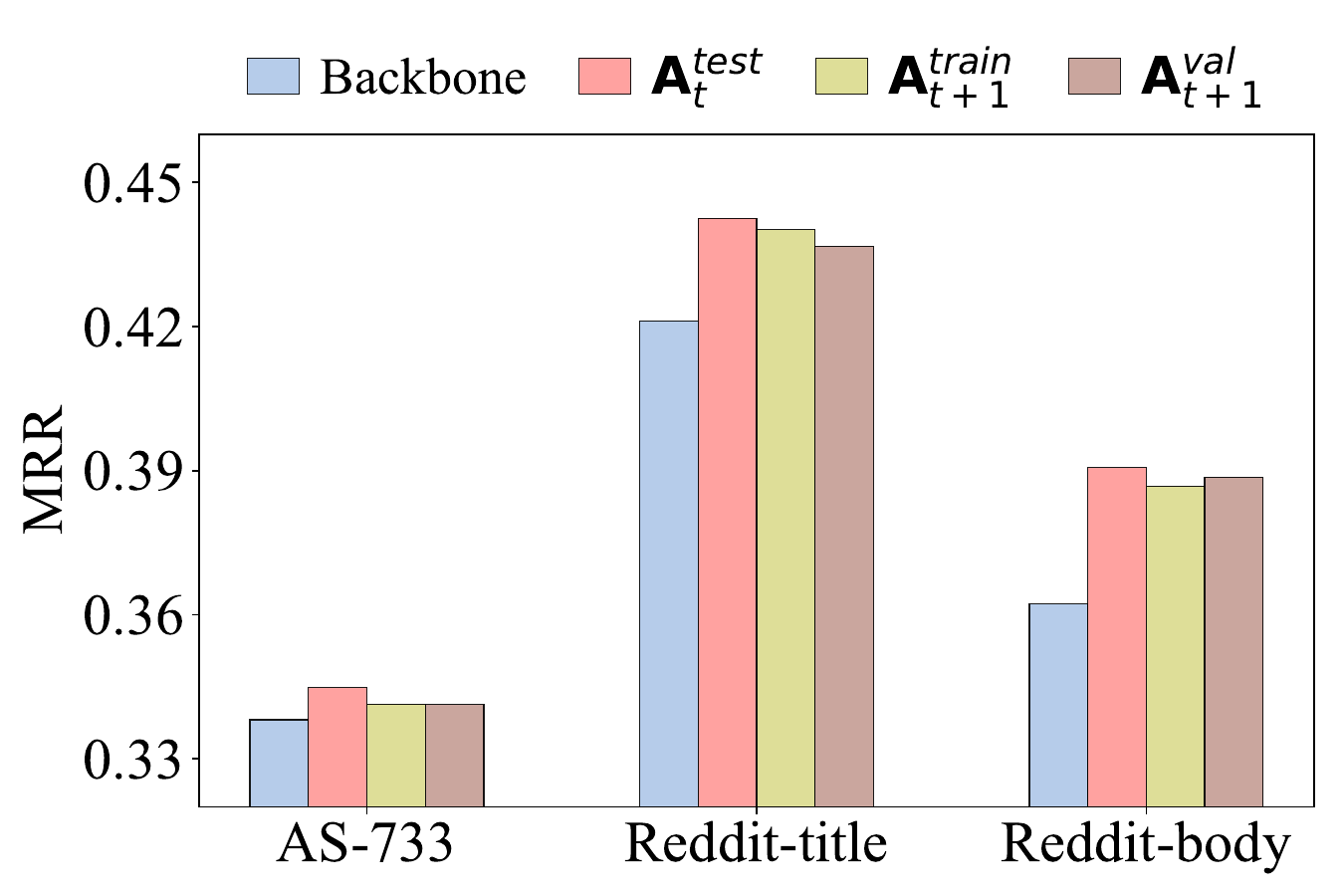}\hfill
    \includegraphics[width=0.49\linewidth]{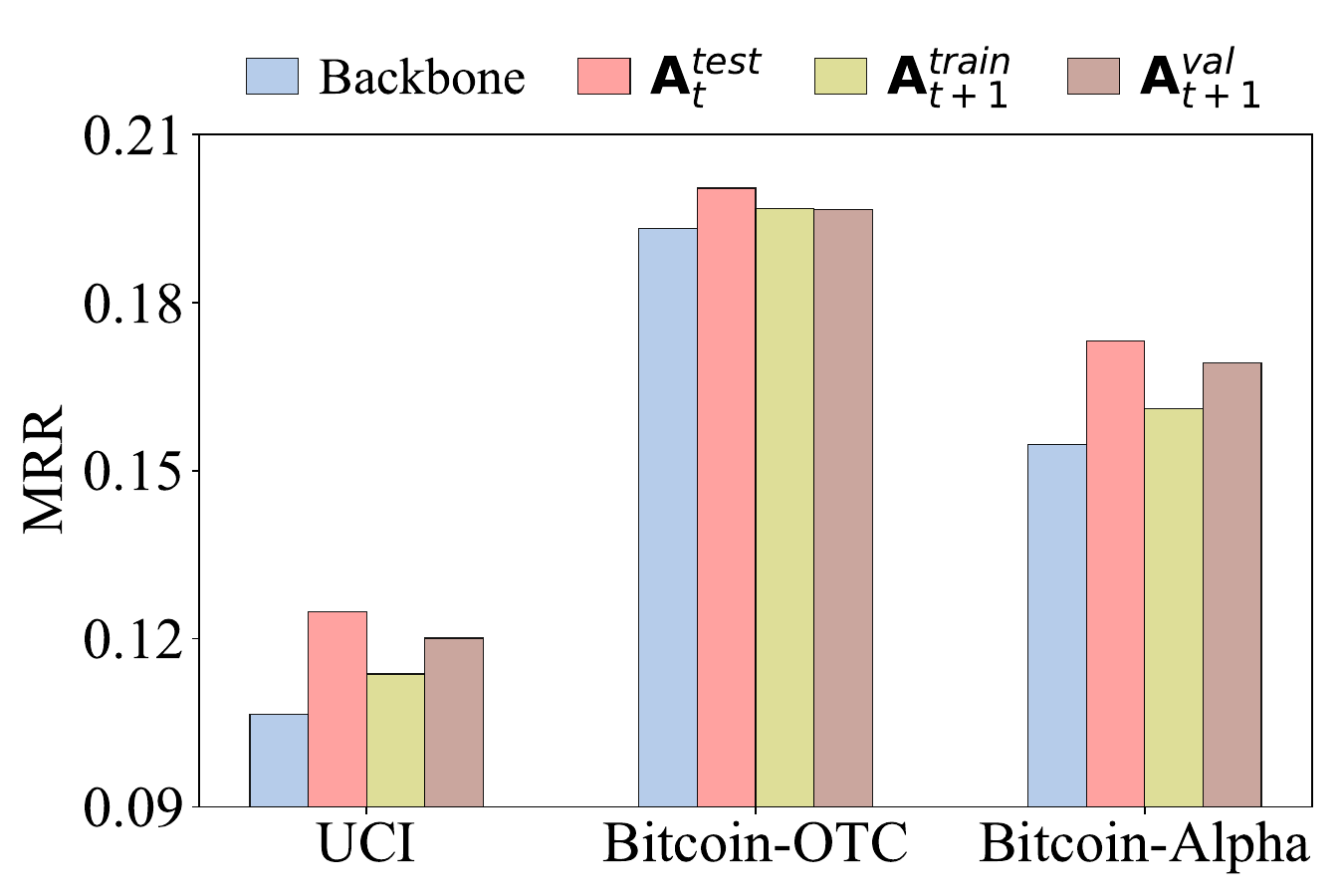}
    \caption{The MRR results of PromptDyG w.r.t. different inputs for prompt adaptation.}
    \label{fig:input-compare}
\end{figure}

\subsection{Ablation Study}

\textbf{Sensitivity w.r.t. the Size of Prompts.}  
We evaluate the sensitivity of PromptDyG w.r.t the size of the graph prompt.  We vary the prompt size from a shared global vector (\ie, $p \in \mathbb{R}^{1\times D}$) to node-specific parameters (\ie, $\mathbf{P} \in \mathbb{R}^{N\times D}$). For intermediate sizes ($m\in\{128,\cdots,1024\}$), we adopt a low-rank factorization strategy following GPF \cite{fang2024universal}, where the prompt dimension is controlled via a projection matrix $\mathbf{W} \in \mathbb{R}^{D \times m}$. Therefore, the size of the prompt $\mathbf{P}$ are set to $s \times D$, where $s\in\{1, 128, 256,512, 1024,N\}$. 

Figure \ref{fig:prompt_size} illustrates the MRR performance across different prompt sizes, where we group the datasets into two figures based on the magnitude of the MRR values to better highlight the performance gaps. As shown in the results, the performance curves are essentially flat and exhibit remarkable robustness of PromptDyG to prompt size. Therefore, for large-scale graph datasets, employing a compact global prompt is a viable strategy to maximize scalability without compromising predictive performance.

\textbf{Sensitivity  w.r.t. Different Inputs for Adaptation.} 
We further analyze the sensitivity of our framework w.r.t. the source of the structural input used for prompt optimization. Since the future topologies of the training and validation splits ($\mathbf{A}_{t+1}^{train}$ and $\mathbf{A}_{t+1}^{val}$) are technically accessible as ground-truth labels during the pre-training phase, one might consider using them for adaptation. 

As shown in Figure \ref{fig:input-compare}, we observe that while exploiting the accessible future structures improves performance, utilizing the test structure consistently yields the best performance, confirming the necessity of directly adapting to the specific structural distribution of the test data to mitigate inference-time shifts.

\subsection{Complexity Comparison}\label{exp:complexity}

\begin{figure}[!t]
\centering
\begin{subfigure}[t]{0.49\linewidth}
\centering
\includegraphics[width=\linewidth]{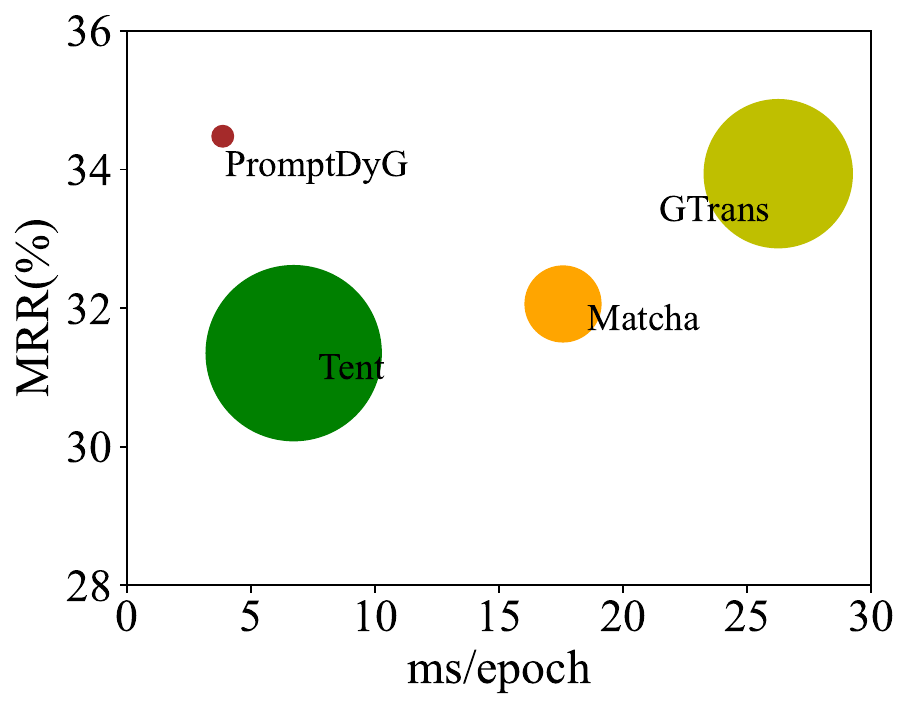}
\caption{AS-733}
\end{subfigure}\hfill
\begin{subfigure}[t]{0.49\linewidth}
\centering
\includegraphics[width=\linewidth]{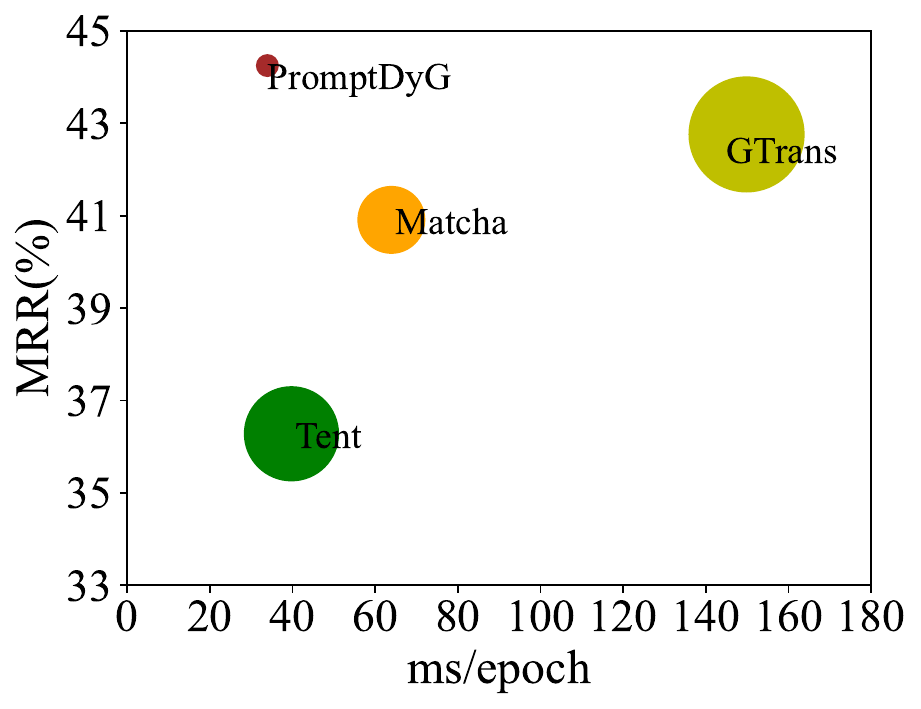}
\caption{Reddit-title}
\end{subfigure}\\
\begin{subfigure}{0.49\linewidth}
        \centering
        \includegraphics[width=\linewidth]{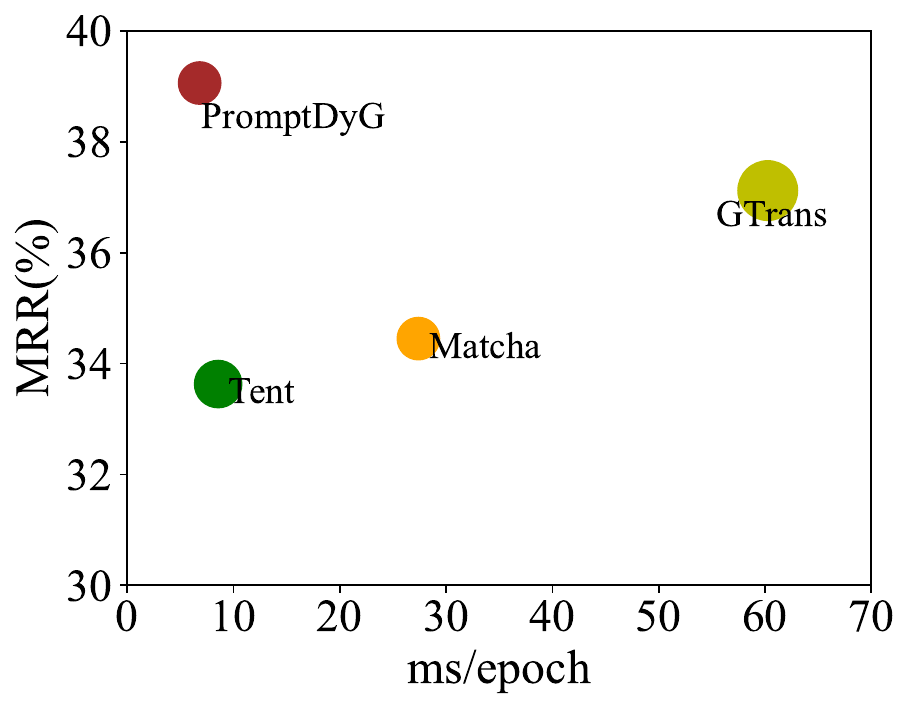}
        \caption{Reddit-body}
\end{subfigure}\hfill
\begin{subfigure}{0.49\linewidth}
        \centering
        \includegraphics[width=\linewidth]{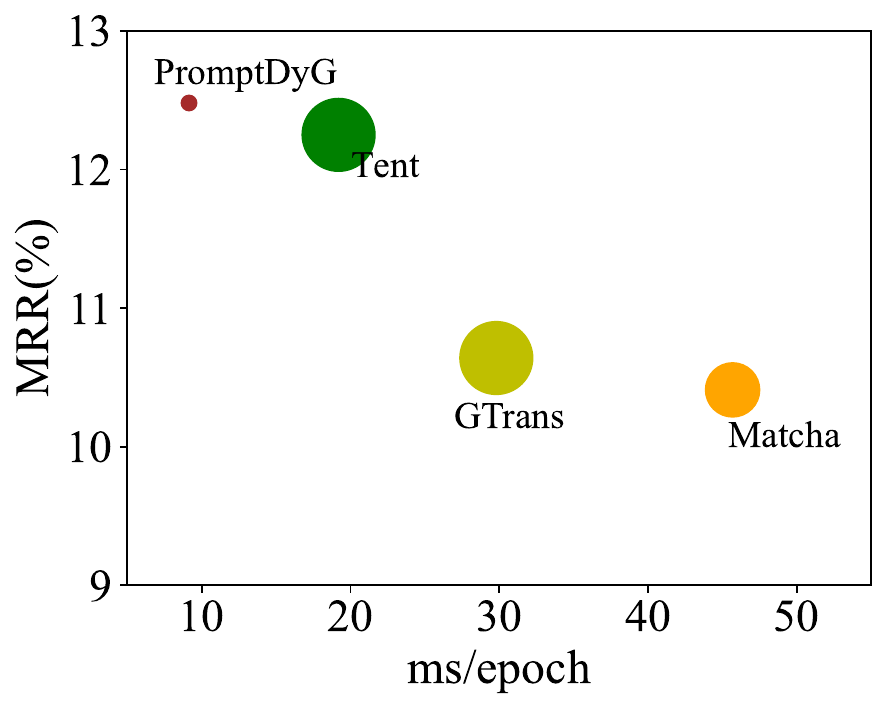}
        \caption{UCI}
\end{subfigure}
\\
\begin{subfigure}{0.49\linewidth}
        \centering
        \includegraphics[width=\linewidth]{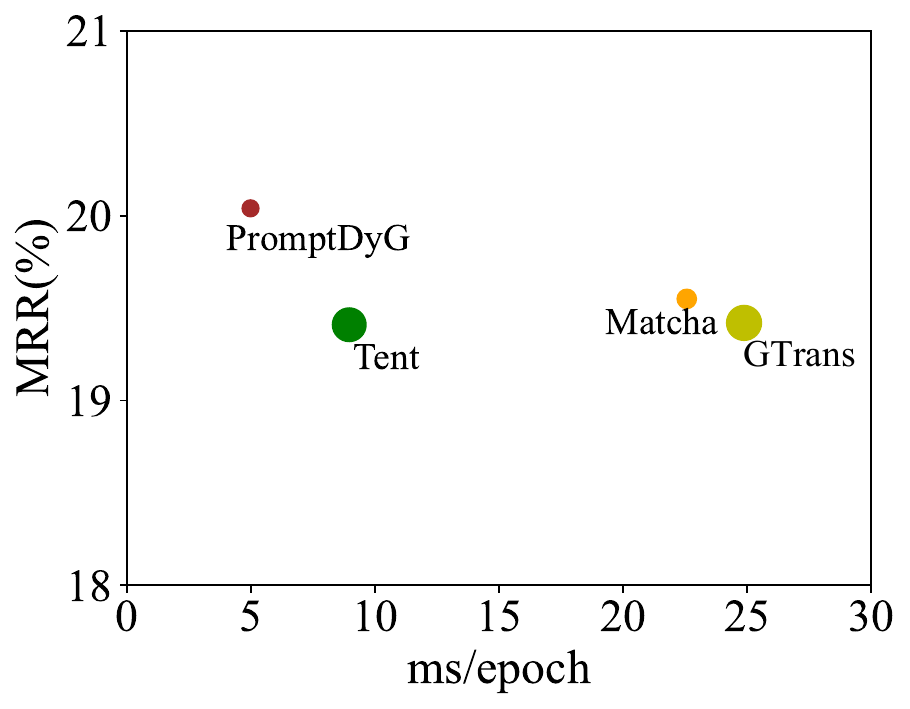}
        \caption{Bitcoin-OTC}
    \end{subfigure}\hfill
\begin{subfigure}{0.49\linewidth}
        \centering
        \includegraphics[width=\linewidth]{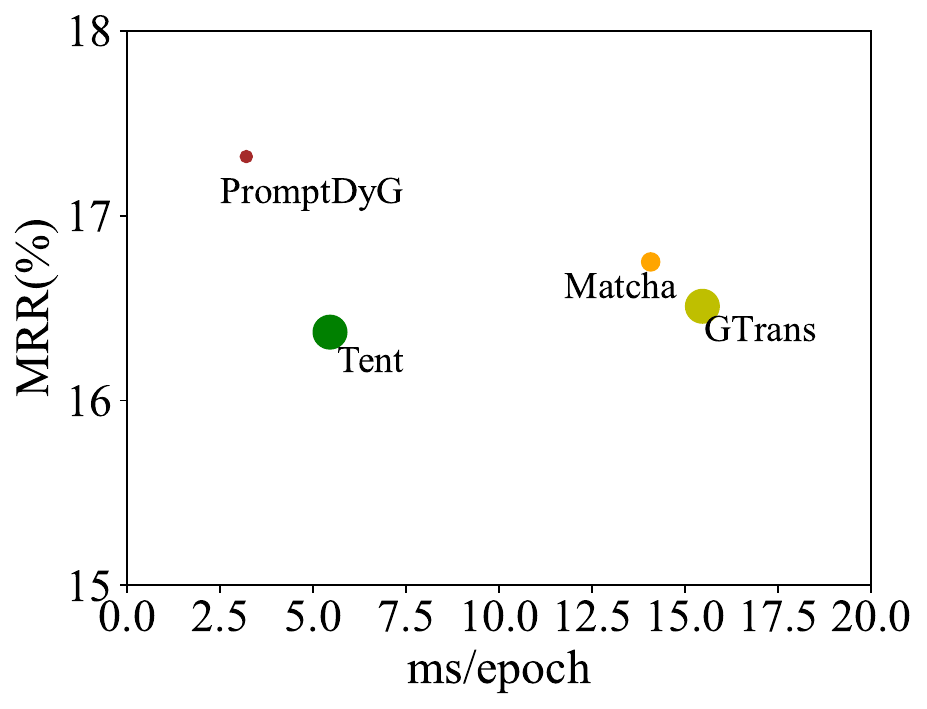}
        \caption{Bitcoin-Alpha}
\end{subfigure}
\caption{Comparison in terms of MRR(\%), average adaptation time (per epoch), and relative maximum GPU memory consumption. The x-axis denotes the average adaptation time, the y-axis denotes MRR, and the size of the circle indicates the GPU memory consumption.}
\label{fig:complexity}
\end{figure}

In this subsection, we comprehensively compare PromptDyG to other TTA models concerning MRR, the average adaptation time per epoch, and GPU memory consumption during the test-time on six datasets. 

As shown in Figure \ref{fig:complexity}, our PromptDyG achieves the best performance with low adaptation time and memory overhead.  
GTrans incurs significant computational overhead and memory consumption, which can be attributed to its expensive graph data augmentation and contrastive learning objectives. Similarly, Tent and Matcha exhibit high resource demands as they necessitate backpropagation to update the backbone parameters during the adaptation process. In contrast, our method freezes the backbone and only updates the prompts, consistently achieving the lowest average adaptation time across all datasets and minimizing GPU memory overhead.

\section{Conclusion}
In this paper, we propose a novel framework, PromptDyG, for discrete-time dynamic graph learning. Specifically, we adapt the live-update evaluation setting as graph snapshots emerge sequentially, and the models are evaluated in a continual manner. Existing works typically utilize static GNN models to represent each graph snapshot, and a temporal module is designed to capture temporal dynamics for subsequent snapshot prediction. However, these methods overlook the structural shifts between training and test snapshots, leading to inferior performance. 

To address this problem, we propose to enhance the generality of the learned model at each time step via unsupervised test-time prompt adaptation. To preserve the temporal dependencies,
a lightweight learnable graph prompting mechanism is introduced while keeping the backbone frozen. Theoretical and empirical results demonstrate that the proposed PromptDyG effectively enlarges the similarity margin between positive and negative node pairs under structural shifts for subsequent graph snapshot predictions. Moreover, the proposed method could act as a plug-and-play module, consistently improving different dynamic graph backbones. 

A limitation of this work is that it focuses solely on the link prediction task in DTDGs. Future work will explore other data types, such as CTDGs, and extend our framework to various tasks, such as dynamic graph anomaly detection.

\section*{Acknowledgments}
This research is supported in part by the Singapore Ministry of Education (MOE) Academic Research Fund (AcRF) Tier 1 Grant (24-SIS-SMU-008) and the Lee Kong Chian Fellowship (T050273).

\section*{Impact Statement}
This paper presents work whose goal is to advance the field of graph machine learning. There are many potential societal consequences of our work, none of which we feel must be specifically highlighted here.

\nocite{langley00}

\bibliography{example_paper}
\bibliographystyle{icml2026}

\newpage
\appendix
\onecolumn
\section{Theoretical Analysis}\label{app:Theore}
\begin{proposition}
At each time step, the final linear layer can be written as $\widetilde{\mathbf{H}} = \hat{\mathbf{H}}\mathbf{W}$, where $\mathbf{W} =[\mu_1,\mu_2,\cdots,\mu_K]\in \mathbb{R}^{d \times K}$ is a learnable projection matrix in the backbone. Under the link prediction objective in Eq. (\ref{eq:bce-loss}), the resulting representation space lies in a $K$-dimensional subspace spanned by the columns of $\mathbf{W}$, where each column vector $\mu_k$ can be interpreted as a specific structural evolutionary prototype.
\end{proposition}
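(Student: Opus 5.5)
The proposition mixes a rigorous linear-algebra fact with an interpretive claim, so I will separate them. The rigorous part is that every output embedding lies in a $K$-dimensional coordinate space determined by $\mathbf{W}$. The interpretive part is that the columns $\mu_k$ behave as prototypes. I first make precise what "lies in a subspace spanned by the columns of $\mathbf{W}$" means. Each row $\tilde{h}_i=\hat{h}_i\mathbf{W}\in\mathbb{R}^{1\times K}$ has coordinates $\tilde{h}_{ik}=\hat{h}_i\mu_k=\langle \hat{h}_i^\top,\mu_k\rangle$. Hence the map $\hat{h}_i\mapsto\tilde{h}_i$ factors through the orthogonal projection $\Pi_{\mathbf{W}}$ of $\hat{h}_i^\top$ onto $\mathrm{span}\{\mu_1,\dots,\mu_K\}$: writing $\hat{h}_i^\top=\Pi_{\mathbf{W}}\hat{h}_i^\top+r_i$ with $r_i\perp \mu_k$ for all $k$ gives $\tilde{h}_i=(\Pi_{\mathbf{W}}\hat{h}_i^\top)^\top\mathbf{W}$. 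Only the component of $\hat{h}_i$ in $\mathrm{col}(\mathbf{W})$ influences $\widetilde{\mathbf{H}}$, the scores $s_{uv}$, and therefore the loss in Eq.~(\ref{eq:bce-loss}). This step is pure linear algebra. I will also note that $\mathrm{rank}(\mathbf{W})\le K$, with equality assumed generically.

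Second, I will use the training objective to argue that the representation effectively concentrates in this subspace. Taking gradients of Eq.~(\ref{eq:bce-loss}) through $s_{uv}=\tilde{h}_u\tilde{h}_v^\top=\hat{h}_u\mathbf{W}\mathbf{W}^\top\hat{h}_v^\top$, the gradient of $\mathcal{L}$ with respect to $\hat{h}_u$ is a combination of the vectors $\mathbf{W}\mathbf{W}^\top\hat{h}_v^\top$, which lie in $\mathrm{col}(\mathbf{W})$. So the supervised signal reaching $\hat{\mathbf{H}}$ only moves it within $\mathrm{col}(\mathbf{W})$. The orthogonal residual $r_i$ receives no gradient from the link objective and is irrelevant for prediction. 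The learned, prediction-relevant representation space is therefore the $K$-dimensional subspace spanned by $\mu_1,\dots,\mu_K$, which I can write as $\hat{h}_i^\top\equiv\sum_k\alpha_{ik}\mu_k$ modulo the null component.

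Third comes the interpretive claim. Each coordinate $\tilde{h}_{ik}=\hat{h}_i\mu_k$ measures the alignment of node $i$ with direction $\mu_k$, and the score satisfies $s_{uv}=\sum_k(\hat{h}_u\mu_k)(\hat{h}_v\mu_k)$. This decomposes link likelihood into a sum over the $K$ directions. Because the BCE loss is trained on next-snapshot connectivity $\mathbf{A}_{t+1}$, the two behaviours it rewards can be read directly from this sum. Two nodes that will connect at $t+1$ are pushed to have co-activated coordinates. Two nodes that will not connect are pushed toward disjoint activation patterns. Each $\mu_k$ thus acts as a template for a group of nodes sharing the same future connectivity behaviour, which is what "structural evolutionary prototype" means here. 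I will present this as an interpretation justified by the decomposition, not as a theorem.

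The main obstacle is the second step. The claim that the representation literally lies in the subspace is false for raw $\hat{h}_i$ unless $d\le K$ or the residual is quotiented out. I will resolve this by stating the result modulo $\ker(\mathbf{W}^\top)$, which is exactly where the link objective is blind. If the backbone includes weight decay, I will instead show that the gradient-free residual decays to zero, giving exact containment in the limit.
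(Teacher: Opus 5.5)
Your main line is correct and shares the paper's key identity $s_{uv}=\hat h_u\mathbf{W}\mathbf{W}^\top\hat h_v^\top$, but you justify the containment claim by a different route.

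The paper asserts containment outright by writing $\tilde h_i=\sum_k\alpha_{ik}\mu_k$. It then reads Eq.~(\ref{eq:bce-loss}) as a low-rank reconstruction $\mathbf{A}\approx\hat{\mathbf{H}}\mathbf{M}\hat{\mathbf{H}}^\top$ with $\mathrm{rank}(\mathbf{M})\le K$. It takes the prototype reading from latent-space network models and backs it with the near-zero inter-prototype cosine similarities of Table~\ref{tab:cs-prototype}.

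Your factorization through $\Pi_{\mathbf{W}}$ makes explicit what that first step glosses over. The vector $\tilde h_i\in\mathbb{R}^{1\times K}$ is not an element of $\mathrm{col}(\mathbf{W})\subset\mathbb{R}^d$. The vector $\hat h_i$ lies in $\mathrm{col}(\mathbf{W})$ only modulo $\ker(\mathbf{W}^\top)$, which is exactly the part the loss cannot see. You can then recover the paper's formula rigorously. If $\mathbf{W}$ has full column rank, the map $\tilde h_i\mapsto\mathbf{W}(\mathbf{W}^\top\mathbf{W})^{-1}\tilde h_i^\top=\Pi_{\mathbf{W}}\hat h_i^\top$ is a linear isomorphism of $\mathbb{R}^K$ onto $\mathrm{col}(\mathbf{W})$. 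Hence $\Pi_{\mathbf{W}}\hat h_i^\top=\sum_k\alpha_{ik}\mu_k$ with $\alpha_i=\tilde h_i(\mathbf{W}^\top\mathbf{W})^{-1}$.

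Your decomposition $s_{uv}=\sum_k(\hat h_u\mu_k)(\hat h_v\mu_k)$ is just $\mathbf{M}=\sum_k\mu_k\mu_k^\top$ written out, so the interpretive parts coincide. Neither argument proves that the prototypes are well separated. Full rank gives only linear independence. The near-orthogonality that Proposition~\ref{prop2} later uses as $\mathbf{W}^\top\mathbf{W}\approx\mathbf{I}_K$ rests, in the paper, on that empirical table alone.

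Two auxiliary claims in your proposal need repair.

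\textbf{The dynamic claim in your second step.} It is true that $\partial\mathcal{L}/\partial\hat h_u^\top\in\mathrm{col}(\mathbf{W})$, but it does not follow that training moves $\hat{\mathbf{H}}$ only within $\mathrm{col}(\mathbf{W})$. Since $\hat{\mathbf{H}}$ is the output of the parametrized backbone, a step on its parameters $\theta$ changes $\hat h_u^\top$, to first order, by $-\eta\sum_v\mathbf{J}_u\mathbf{J}_v^\top\,\partial\mathcal{L}/\partial\hat h_v^\top$, where $\mathbf{J}_u=\partial\hat h_u^\top/\partial\theta$. These matrices need not preserve $\mathrm{col}(\mathbf{W})$: updates to earlier layers move $\hat h_u$ along whatever directions the later layers span. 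Moreover, $\mathbf{W}$ is trained as well, so the subspace itself moves. Drop the dynamic claim. The blindness of the loss to $\ker(\mathbf{W}^\top)$ already follows from your first step, and that is all your modulo-kernel conclusion needs.

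\textbf{The weight-decay fallback.} This fails in general, because weight decay shrinks parameters rather than the residual of an output. It is valid if $\hat{\mathbf{H}}=\Phi\mathbf{V}$ is produced by a final linear layer with decayed weights (a bias can be absorbed into $\Phi$ as a constant column). In that case $\nabla_{\mathbf{V}}\mathcal{L}=\Phi^\top\nabla_{\hat{\mathbf{H}}}\mathcal{L}$ has rows in $\mathrm{span}\{\mu_k^\top\}$. Stationarity of $\mathcal{L}+\tfrac{\lambda}{2}\|\mathbf{V}\|_F^2$ then gives $\mathbf{V}=-\lambda^{-1}\Phi^\top\nabla_{\hat{\mathbf{H}}}\mathcal{L}$, hence exact containment at such points. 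If $\hat{\mathbf{H}}$ is instead the output of a nonlinearity (an activation or a gated GRU-style update), nothing forces the residual to vanish. So state that extension only under the linear-layer hypothesis.
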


\begin{proof}
    Since $\widetilde{\mathbf{H}} = \hat{\mathbf{H}}\mathbf{W}$, each node embedding satisfies $\tilde{h}_i=\hat{h}_i\mathbf{W}$, which implies that every embedding is a linear combination of the column vectors of $\mathbf{W}$. Given that $\mathbf{W} = [\mu_1,\mu_2,\cdots,\mu_K]$, then $\tilde{h}_i=\sum_{k=1}^K\alpha_{ik}\mu_k$ for some coefficients $\alpha_{ik}$, indicating that all embeddings lie in the subspace $\text{span}(\mathbf{W}) =\text{span}(\mu_1,\mu_2,\cdots,\mu_K)$. Given the link probability between nodes $i$ and $j$ defined as $s_{ij}=(\tilde{h}_i\tilde{h}_j^\top)$, substituting $\tilde{h}_i=\hat{h}_i\mathbf{W}$ yields $s_{ij}=(\hat{h}_i \mathbf{W}\mathbf{W}^\top \hat{h}_j^\top)$ or in matrix form $\mathbf{S}=( \hat{\mathbf{H}} \mathbf{W}\mathbf{W}^\top \hat{\mathbf{H}} ^\top)$. Minimizing the link prediction loss is therefore equivalent to fitting $\mathbf{A} \approx (\hat{\mathbf{H}} \mathbf{W}\mathbf{W}^\top \hat{\mathbf{H}} ^\top)$, which can be interpreted as reconstructing the adjacency matrix. Furthermore, let $\mathbf{M} = \mathbf{W}\mathbf{W}^\top$, since $\mathbf{W} \in \mathbb{R}^{d \times K}$, we have $\text{rank}(\mathbf{M})\leq K$, leading to a low-rank interaction model $\mathbf{A} \approx (\hat{\mathbf{H}} \mathbf{M} \hat{\mathbf{H}}^\top)$. Such low-rank graph models capture a finite number of dominant interaction patterns \cite{hoff2002latent}. Accordingly, the embedding space is confined to the subspace spanned by $\{\mu_1,\mu_2,\cdots,\mu_K\}$, where each column vector $\mu_k$ represents one prototypical direction describing a characteristic interaction pattern, and so the node embedding $\tilde{h}_i=\hat{h}_i\mathbf{W}$ can be interpreted as the projection of node $i$ onto these prototype directions. Therefore, the latent representation space admits a set of structural evolutionary prototypes corresponding to the columns of $\mathbf{W}$.

Secondly, we provide an empirical justification for this proposition. We calculate pairwise cosine similarities of the $K$ prototypes on six datasets. As shown in Table \ref{tab:cs-prototype}, these exceptionally low mean inter-prototype cosine similarities (excluding the diagonal entries) indicate that the prototypes are highly dissimilar and well-separated, effectively precluding mode collapse and capturing distinct evolutionary patterns.

\end{proof}

\begin{table}[h]
    \centering
    \caption{The mean$\pm$std of inter-prototype cosine similarity on six datasets.}
    \begin{tabular}{cccccc}
    \hline
         AS-733&Reddit-title&Reddit-body&UCI&Bitcoin-OTC&Bitcoin-Alpha \\
          0.0013$\pm$0.0851 &0.0008$\pm$0.0903 &0.0007$\pm$0.0905  & 0.0003$\pm$0.0810& 0.0030$\pm$0.0800&0.0014$\pm$0.0883 \\\hline
    \end{tabular}
    \label{tab:cs-prototype}
\end{table}

\begin{lemma}
    (Pattern Refinement). Assume that the input-output Jacobian $\mathbf{J}$ of the pre-trained backbone is non-degenerate. Minimizing $\mathcal{L}_{ent}$ via gradient descent on $\mathbf{P}_t$ at each time step $t\in[1,T]$ forces the latent embedding $\hat{h}_i$ to converge toward its most relevant evolutionary prototype $\mu_{k^*}$. This ensures that the inference embedding $\tilde{h}_i$ achieves the maximal activation response toward the evolutionary prototype $\mu_{k^*}$, thus efficiently and continuously modeling the evolving patterns.
\end{lemma}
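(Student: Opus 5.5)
We will prove the statement by a chain-rule argument. First we compute the gradient of the entropy with respect to the logits. Then we pull that gradient back through the linear layer $\mathbf{W}$ of Proposition \ref{prop1}, and through the frozen backbone via the Jacobian $\mathbf{J}$, to the prompt $\mathbf{P}_t$. Finally we show that the resulting gradient flow in $\hat h_i$ moves the embedding toward the dominant prototype $\mu_{k^*}$ and increases $\tilde h_{ik^*}$ relative to the other logits.

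\textbf{Step 1: entropy gradient in logit space.} Write $z_{ik}=\tilde h_{ik}=\hat h_i\mu_k$, $q_i=\sigma(z_i)$ and $H_i=-\sum_k q_{ik}\log q_{ik}$. A standard softmax computation gives
\[
\frac{\partial H_i}{\partial z_{ik}} = -q_{ik}\left(\log q_{ik}+H_i\right).
\]
Let $k^*=\arg\max_k q_{ik}$, the most relevant prototype. We have $\log q_{ik^*}\ge -H_i$, because $-H_i$ is the $q_i$-weighted average of $\log q_{ik}$. So the descent direction $-\partial H_i/\partial z_{ik^*}$ is nonnegative, and it is strictly positive unless $q_i$ is uniform. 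For every $k$ with $\log q_{ik}<-H_i$, the descent direction is negative. Gradient descent in logit space therefore raises $z_{ik^*}$ and lowers the below-average logits. This is the one-hot sharpening claim.

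\textbf{Step 2: pulling back to $\hat h_i$.} By the chain rule,
\[
\nabla_{\hat h_i}H_i=\sum_k \frac{\partial H_i}{\partial z_{ik}}\,\mu_k^\top=-\sum_k q_{ik}(\log q_{ik}+H_i)\,\mu_k^\top .
\]
The descent direction $-\nabla_{\hat h_i}H_i$ is a combination of prototypes in which $\mu_{k^*}$ has the largest positive coefficient and the low-probability prototypes have negative coefficients. We then use the near-orthonormality $\mathbf{W}^\top\mathbf{W}\approx \mathbf{I}_K$ justified after Proposition \ref{prop1} (Table \ref{tab:cs-prototype}). Under it, $\langle -\nabla_{\hat h_i}H_i,\mu_{k^*}\rangle>0$, and the inner products with the suppressed $\mu_j$ are negative. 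Hence the component of $\hat h_i$ along $\mu_{k^*}$ grows, while the components along the competing prototypes shrink. In the limit $\mathcal L_{ent}\to0$, the direction of $\hat h_i$ approaches $\mu_{k^*}$. If the embeddings are normalized, which we may assume to match the statement $\hat h_i\to\mu_{k^*}^\top$ used later, then $\hat h_i$ itself converges to $\mu_{k^*}^\top$.

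\textbf{Step 3: pulling back to $\mathbf{P}_t$; the main obstacle.} The prompt affects $\hat h_i$ only through the backbone, so
\[
\nabla_{\mathbf P_t}\mathcal L_{ent}=\mathbf J^\top\nabla_{\hat{\mathbf H}}\mathcal L_{ent}.
\]
A gradient step on $\mathbf P_t$ with step size $\eta$ changes $\hat{\mathbf H}$, to first order, by
\[
-\eta\,\mathbf J\mathbf J^\top\nabla_{\hat{\mathbf H}}\mathcal L_{ent}.
\]
The non-degeneracy assumption says that $\mathbf J\mathbf J^\top$ is positive definite with smallest eigenvalue $\lambda_{\min}>0$. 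Then
\[
\langle \Delta\hat{\mathbf H},-\nabla\mathcal L_{ent}\rangle\ge \eta\lambda_{\min}\|\nabla\mathcal L_{ent}\|^2 .
\]
So the induced motion is a descent direction in embedding space, and the qualitative conclusions of Steps 1–2 carry over. This is the delicate step, for two reasons. The preconditioning by $\mathbf J\mathbf J^\top$ can rotate the direction away from pure $\mu_{k^*}$-alignment, and $\mathbf J$ varies along the trajectory. We would handle this by a small-step argument. Assume $\mathcal L_{ent}$ is smooth with a Lipschitz gradient, so that the descent lemma gives monotone decrease of $\mathcal L_{ent}$ toward a stationary point. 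Non-degeneracy rules out spurious stationary points created by the backbone, so stationary points of the composite objective coincide with those in $\hat h$-space. Among these, the uniform-$q$ points are unstable maxima of $-H$, which leaves the one-hot minima. The final conclusion is that $\tilde h_{ik^*}=\hat h_i\mu_{k^*}$ is maximal among the logits. We expect to state this convergence as holding under these smoothness and near-orthogonality idealizations rather than unconditionally.
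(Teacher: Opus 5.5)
Your Steps 1--3 reproduce the paper's proof: the same logit gradient $-q_{ik}(\log q_{ik}+H_i)$, the same threshold (the paper's $\tau_i=\exp(-H(q_i))$ is exactly your comparison of $\log q_{ik}$ with $-H_i$), the same chain rule onto $\hat h_i$, and the same pull-back through $\mathbf J$. Your bound $\langle\Delta\hat{\mathbf H},-\nabla\mathcal L_{ent}\rangle\ge\eta\lambda_{\min}\|\nabla\mathcal L_{ent}\|^2$ is a cleaner version of the paper's assertion that the sharpening direction \emph{is propagated to} $\mathbf P_t$. What this argument supports, in both versions, is a first-order statement: each prompt step lowers $\mathcal L_{ent}$, and in logit space the entropy gradient raises above-threshold logits and lowers the rest.

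The gap is in the limit claims you add to turn this into convergence of $\hat h_i$ to $\mu_{k^*}$. They fail even under your idealizations. In Step 2, $\mathcal L_{ent}$ depends only on logit differences, so $\mathcal L_{ent}\to0$ forces only $\hat h_i(\mu_{k^*}-\mu_j)\to\infty$ for $j\neq k^*$. With orthonormal prototypes, $\hat h_i=-a\sum_{j\neq k^*}\mu_j^\top$ with $a\to\infty$ makes $q_i$ one-hot at $k^*$ while $\hat h_i\mu_{k^*}=0$. Now consider the gradient flow in $\hat h_i$ that Step 2 analyzes. Under $\mathbf W^\top\mathbf W=\mathbf I_K$ it gives $\dot z_i=-\nabla_{z_i}H_i$, and since $\sum_k\partial H_i/\partial z_{ik}=0$, the sum $\sum_k z_{ik}$ is conserved. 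For $K=2$ and $k^*=1$, the gap $z_{i1}-z_{i2}$ diverges while the sum stays fixed. So the direction of $\hat h_i$ tends to $(\mu_1-\mu_2)^\top/\sqrt2$, not $\mu_1^\top$. Also, the component of $\hat h_i$ orthogonal to $\mathrm{span}(\mathbf W)$ never receives gradient. The normalization patch contradicts the limit it is meant to refine. With $\|\hat h_i\|=1$ the logits are bounded, so $\mathcal L_{ent}$ stays bounded away from $0$. For $K=2$ the entropy minimizer on the sphere is again $(\mu_1-\mu_2)^\top/\sqrt2$, with logit gap $\sqrt2$ versus $1$ at $\mu_1^\top$.

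In Step 3, the stationary-point argument does not produce one-hot limits. Because every $q_{ik}>0$, the condition $\partial H_i/\partial z_{ik}=0$ for all $k$ forces $\log q_{ik}=-H_i$ for all $k$, i.e.\ $q_i$ uniform. Under your full-row-rank assumption, the only stationary points of the composite objective are therefore those where every sampled node has uniform $q_i$. There are no one-hot minima at any finite $\mathbf P_t$. The descent lemma gives only monotone decrease and $\nabla\mathcal L_{ent}\to0$. That is compatible with divergent logits, and also with $q_i$ approaching a distribution uniform on a subset of prototypes, which is asymptotically critical as well. Ruling these out needs an instability argument that non-degeneracy of $\mathbf J$ does not supply.

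Moreover, $\mathbf J\mathbf J^\top$ mixes nodes and prototype coordinates. A step that lowers the total loss can therefore still lower $z_{ik^*}$ for individual nodes, so nothing ties the eventual peak of $q_i$ to the initial $k^*$. The per-logit monotonicity of Step 1 does not survive the preconditioning; you note this but do not repair it. The defensible conclusion is the first-order sharpening statement, which is also all the paper's own proof actually argues. The convergence $\hat h_i\to\mu_{k^*}^\top$ that Proposition~\ref{prop2} later relies on is not established by either argument, and read literally it fails by the example above.
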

\begin{proof}
For the test snapshot at time $t$,
the frozen DTDG backbone produces node embeddings
\begin{equation}
    \widetilde{\mathbf{H}}_t = f_\theta^t(\mathbf{A}_t^{test}, \mathbf{X}+\mathbf{P}_t,\mathbf{H}_{t-1} ), 
\end{equation}
where $\mathbf{P}_t$ is the only learnable variable at inference time. $K$ evolutionary prototypes $\{\mu_k\}_{k=1}^K$ (implicitly encoded by the pre-trained latent space) are introduced by the final linear layer. For each node $i$,
the prototype logits and probability distribution are
\begin{equation}
    \tilde{h}_{ik} = \hat{h}_i \mu_k,
    \qquad 
    q_{ik} = \mathrm{Softmax}(\tilde{h}_i)_k.
\end{equation}
The per-node entropy objective is
\begin{equation}
    \mathcal{L}_i = -\sum_{k=1}^K q_{ik} \log q_{ik}
    \;=\; H(q_i),
\end{equation}
and the full loss is $\mathcal{L}_{ent}=\frac{1}{|\mathcal{V}_s|}\sum_{i\in\mathcal{V}_s}\mathcal{L}_i$, where $\mathcal{V}_s$ is sampled node set.

\paragraph{Step 1: Gradient w.r.t. logits yields a sharpening direction.}
Let $H(q_i)$ denote the entropy of $q_i$ at the current iterate. Using the Softmax Jacobian
$\frac{\partial q_{ij}}{\partial \tilde{h}_{ik}} = q_{ij}(\mathbb{I}[j=k]-q_{ik})$, we obtain
\begin{align}
\frac{\partial \mathcal{L}_i}{\partial \tilde{h}_{ik}}
&= -\sum_{j=1}^K \frac{\partial q_{ij}}{\partial \tilde{h}_{ik}}\bigl(\log q_{ij}+1\bigr) \nonumber\\
&= -\sum_{j=1}^K q_{ij}(\mathbb{I}[j=k]-q_{ik})\bigl(\log q_{ij}+1\bigr) \nonumber\\
&= -q_{ik}\Bigl(\log q_{ik} - \sum_{j=1}^K q_{ij}\log q_{ij}\Bigr) \nonumber\\
&= -q_{ik}\bigl(\log q_{ik}+H(q_i)\bigr).
\label{eq:strict_grad_z}
\end{align}
Hence, a gradient descent step on $\tilde{h}_{ik}$ would be
\begin{equation}
\tilde{h}_{ik}^{(s+1)}
= \tilde{h}_{ik}^{(s)} - \eta \frac{\partial \mathcal{L}_i}{\partial \tilde{h}_{ik}}
= \tilde{h}_{ik}^{(s)} + \eta\,q_{ik}^{(s)}\bigl(\log q_{ik}^{(s)}+H(q_i^{(s)})\bigr).
\label{eq:strict_logit_update}
\end{equation}
Define the entropy-dependent threshold $\tau_i \triangleq \exp(-H(q_i^{(s)}))$. Specifically, $\log q_{ik}^{(s)}+H(p_i^{(s)}) > 0$ holds if and only if $q_{ik}^{(s)} > \tau_i$. 
Conversely, $\log q_{ik}^{(s)}+H(q_i^{(s)}) < 0$ holds if and only if $q_{ik}^{(s)} < \tau_i$. 
This implies
\begin{equation}
q_{ik}^{(s)} > \tau_i \Rightarrow \tilde{h}_{ik}^{(s+1)} > \tilde{h}_{ik}^{(s)},\qquad
q_{ik}^{(s)} < \tau_i \Rightarrow \tilde{h}_{ik}^{(s+1)} < \tilde{h}_{ik}^{(s)}.
\label{eq:strict_threshold}
\end{equation}
Therefore, entropy minimization increases the logits of higher-probability prototypes ($\mu_{k^*}$) and suppresses
those of lower-probability ones, making $q_i$ more peaked and asymptotically approaching a one-hot distribution from high to low entropy (i.e., $\tilde{h}_{ik^{*}}\uparrow, \tilde{h}_{ij}(j\neq k^{*})\downarrow$).

\paragraph{Step 2: Transferring the sharpening direction to embeddings.}
By the chain rule with $\tilde{h}_{ik}=\hat{h}_i\mu_k$, we have
\begin{equation}
\frac{\partial \mathcal{L}_i}{\partial \tilde{h}_i}
= \sum_{k=1}^K \frac{\partial \mathcal{L}_i}{\partial \tilde{h}_{ik}}\frac{\partial \tilde{h}_{ik}}{\partial \hat{h}_i}
= \sum_{k=1}^K \frac{\partial \mathcal{L}_i}{\partial \tilde{h}_{ik}}\,\mu_k
= -\sum_{k=1}^K q_{ik}\bigl(\log q_{ik}+H(q_i)\bigr)\mu_k .
\label{eq:strict_grad_h}
\end{equation}
Eq.~\eqref{eq:strict_threshold} shows that the prototype(s) with relatively larger probability receive
a positive drive through $\log q_{ik}+H(q_i)>0$, whereas low-probability prototypes receive a negative drive.
Thus, the embedding update induced by descending $\mathcal{L}_i$ is biased toward increasing alignment with
the dominant prototype(s), which corresponds to moving $\hat{h}_i$ toward its most relevant evolution pattern $\mu_{k^*}$.

\paragraph{Step 3: Converting the embedding-level direction into a prompt update.}
Since $\mathbf{P}_t$ affects $\mathcal{L}_{ent}$ only through the frozen mapping $\tilde{h}_i$,
the gradient w.r.t. $\mathbf{P}_t$ is given by
\begin{equation}
\nabla_{\mathbf{P}_t} \mathcal{L}_{ent}(\mathbf{P}_t)
=
\frac{1}{|\mathcal{V}_s|}
\sum_{i\in\mathcal{V}_s}
\Bigl(\mathbf{J}_{i}(\mathbf{P}_t)\Bigr)^\top
\frac{\partial \mathcal{L}_i}{\partial \tilde{h}_i},
\qquad
\mathbf{J}_{i}(\mathbf{P}_t) \triangleq \frac{\partial \tilde{h}_i(\mathbf{P}_t)}{\partial \mathbf{P}_t} ,
\label{eq:strict_grad_P}
\end{equation}

Particularly, the input-output Jacobian of the backbone is defined as $\mathbf{J} = \frac{\partial \widetilde{\mathbf{H}}_t}{\partial \mathbf{P}_t}$, where $\widetilde{\mathbf{H}}_t = f_\theta(\mathbf{P}_t) \in \mathbb{R}^{N \times K}$ is the final output. The inference-time adaptation step is $\mathbf{P}_t^{(s+1)}=\mathbf{P}_t^{(s)}-\eta\nabla_{\mathbf{P}_t}\mathcal{L}_{ent}$. Given that the Jacobian $\mathbf{J}$ is non-degenerate, 
the sharpening direction in Eq.~\eqref{eq:strict_grad_h} is propagated to $\mathbf{P}_t$, thereby updating the input features $\mathbf{X}+\mathbf{P}_t$ to increase the dominant logit(s) and suppress the others (Eq.~\eqref{eq:strict_threshold}). Consequently, minimizing $\mathcal{L}_{ent}$ via gradient descent on $\mathbf{P}_t$ drives $\tilde{h}_i$ toward its most relevant evolutionary prototype $\mu_{k^*}$ in the pre-trained latent space.

Furthermore, we provide the proof of the assumption that 
the input-output Jacobian $\mathbf{J}$ of the pre-trained backbone is non-degenerate. To facilitate a rigorous analysis within the framework of linear operators, we employ the vectorization operator $\text{vec}(\cdot)$ to recast $\mathbf{J}$ into a standard matrix form: $\mathbf{J} = \frac{\partial \text{vec}(\widetilde{\mathbf{H}}_t)}{\partial \text{vec}(\mathbf{P}_t)} \in \mathbb{R}^{NK \times ND}$.  Then, since the prompt parameters provide $ND$ degrees of freedom for $NK$-dimensional output, $\mathbf{J}$ is a wide matrix ($ND > NK$). Leveraging over-parameterization theory and Neural Tangent Kernel (NTK) theory \cite{jacot2018neural,du2018gradient}, such over-parameterized regimes ensure that the Gram matrix $\mathbf{J}\mathbf{J}^\top$ remains strictly positive-definite during optimization. Consequently, the full-row-rank property of $\mathbf{J}$ is structurally guaranteed, ensuring that gradient signals are effectively back-propagated to the prompt space.

Additionally, this assumption can be justified by our empirical evidence. A properly pre-trained backbone can learn semantically meaningful node representations from input data; however, a low-rank Jacobian would imply feature collapse, which contradicts the model's observed efficacy. Consequently, a well-functioning pre-trained model provides a guarantee that its input-output Jacobian is neither ill-conditioned nor rank-deficient (non-degenerate). This is verified by t-SNE visualizations (Figs. \ref{fig:tsne-dist} and \ref{fig:app-tnse}), where the successful prompt-based realignment of latent representations confirms an effective gradient flow.

\end{proof}

\section{Experiment Details}

\begin{table*}[ht]
    \centering
    \caption{Summary of Dataset statistics.}
    \begin{tabular}{cccccc}\hline
      Datasets  & \#Edges & \#Nodes & Range & Snapshot Frequency & \#Snapshots  \\ \hline
      AS-733  & 11,965,533  & 7,716 & Nov 8, 1997 - Jan 2, 2000 & daily & 733\\ 
      Reddit-title  & 571,927 &  54,075&Dec 31, 2013 - Apr 30, 2017 & weekly& 178 \\ 
      Reddit-body  &286,561  &35,776  & Dec 31, 2013 - Apr 30, 2017& weekly & 178\\ 
      UCI  & 59,835 & 1,899 &Apr 15, 2004 - Oct 26, 2004 &  weekly&29 \\ 
      Bitcoin-OTC  & 35,592 &5,881  &Nov 8, 2010 - Jan 24, 2016& weekly &279 \\ 
      Bitcoin-Alpha   & 24,186 &3,783  & Nov 7, 2010 - Jan 21, 2016&  weekly&274 \\ \hline        
    \end{tabular}
    \label{tab:datasets}
\end{table*}

\subsection{Description of Datasets}\label{Appendix:datasets}
\begin{itemize}
    \item \textbf{Autonomous systems.} The autonomous systems (AS-733) dataset\footnote{\url{http://snap.stanford.edu/data/as-733.html}}
is a router-level communication network dataset, where routers exchange traffic flows with peers, and it is commonly used for forecasting future message exchanges. 
    \item \textbf{Reddit-title and Reddit-body.} Reddit-title and Reddit-body\footnote{\url{http://snap.stanford.edu/data/soc-RedditHyperlinks.html}}
are directed subreddit-to-subreddit hyperlink networks constructed from hyperlinks appearing in post titles and bodies, respectively.
    \item \textbf{UC Irvine messages.}  UCI-Messages (UCI)\footnote{\url{http://konect.uni-koblenz.de/networks/opsahl-ucsocial}}
is an online student social network from the University of California, Irvine, where links represent private messages exchanged between users. 
    \item \textbf{Bitcoin-OTC and Bitcoin-Alpha.} Bitcoin-OTC\footnote{\url{http://snap.stanford.edu/data/soc-sign-bitcoin-otc.html}} 
and Bitcoin-Alpha\footnote{\url{http://snap.stanford.edu/data/soc-sign-bitcoin-alpha.html}}
are who-trusts-whom networks of Bitcoin traders from the OTC and Alpha platforms, respectively, where users rate each other based on trust, and the datasets are commonly used for rating polarity prediction and future interaction forecasting.
\end{itemize}
\subsection{Description of Baselines}\label{Appendix:baselines}
\begin{itemize}
    \item \textbf{EvolveGCN} \cite{pareja2020evolvegcn} captures temporal dynamics by evolving the GCN parameters (weight matrices) via a Recurrent Neural Network (RNN), rather than learning temporal node embeddings. This design allows the model to handle dynamic graphs with frequent changes in node sets (inductive setting). The method includes two architectures: EvolveGCN-H, which extends a GRU by treating GCN weights as the hidden states, and EvolveGCN-O, which utilizes an LSTM to model the evolution of weight matrices directly.
    \item \textbf{TGCN}  \cite{tgcn} is a traffic forecasting model that integrates GCN and gated recurrent units to capture spatial and temporal dependencies jointly.
    \item \textbf{GCRN} \cite{GCRN} uses ChebNet \cite{chebnet} for spatial modeling and employs GRU (GCRN-GRU) or LSTM (GCRN-GRU) to capture temporal dependencies, with separate GNN modules employed to compute the different gates of the recurrent unit. GCRN-Baseline first applies spectral graph convolution for spatial feature extraction and then uses a standard LSTM for temporal modeling.
    \item \textbf{Roland} \cite{you2022roland} proposes a live-update evaluation protocol and transforms static GNNs into dynamic ones by recurrently updating hierarchical node states over time. Based on different designs of the update module, ROLAND proposes three variants: Roland-GRU, Roland-LSTM, and Roland-Average.
     \item \textbf{SFDyG} \cite{qi2025input} is a recently proposed method that focuses on modeling temporal edges in discrete-time dynamic graphs. By fusing multiple input snapshots into a unified temporal graph and incorporating Hawkes processes, SFDyG captures temporal and structural patterns efficiently while reducing the computational overhead associated with sequential modeling.
     \item \textbf{Tent}  \cite{tent2021} proposes a fully test-time adaptation framework, which updates model parameters of batch normalization by minimizing prediction entropy.
     \item \textbf{GTrans} \cite{jinempowering} performs test-time adaptation by augmenting the target graph and optimizing a contrastive objective, where positive views are generated via drop edges or nodes, and negative samples are constructed by perturbing node features.
     \item \textbf{Matcha} \cite{baomatcha} adapts pre-trained GNNs by adjusting hop-aggregation parameters to accommodate changes in node connectivity and introduces a prediction-informed clustering loss to enhance representation quality. Moreover, Matcha can be combined with existing TTA methods to jointly handle structure and attribute shifts, achieving robust performance under diverse distribution shift settings. In this work, we implement Matcha by adding an aggregation layer with learnable parameters to the backbone GNN. Moreover, to accommodate its class-aware loss, we accordingly reduce the dimension of the latent representations.
\end{itemize}

\begin{figure}
   \centering
    \begin{subfigure}{0.19\linewidth}
        \centering
        \includegraphics[width=\linewidth]{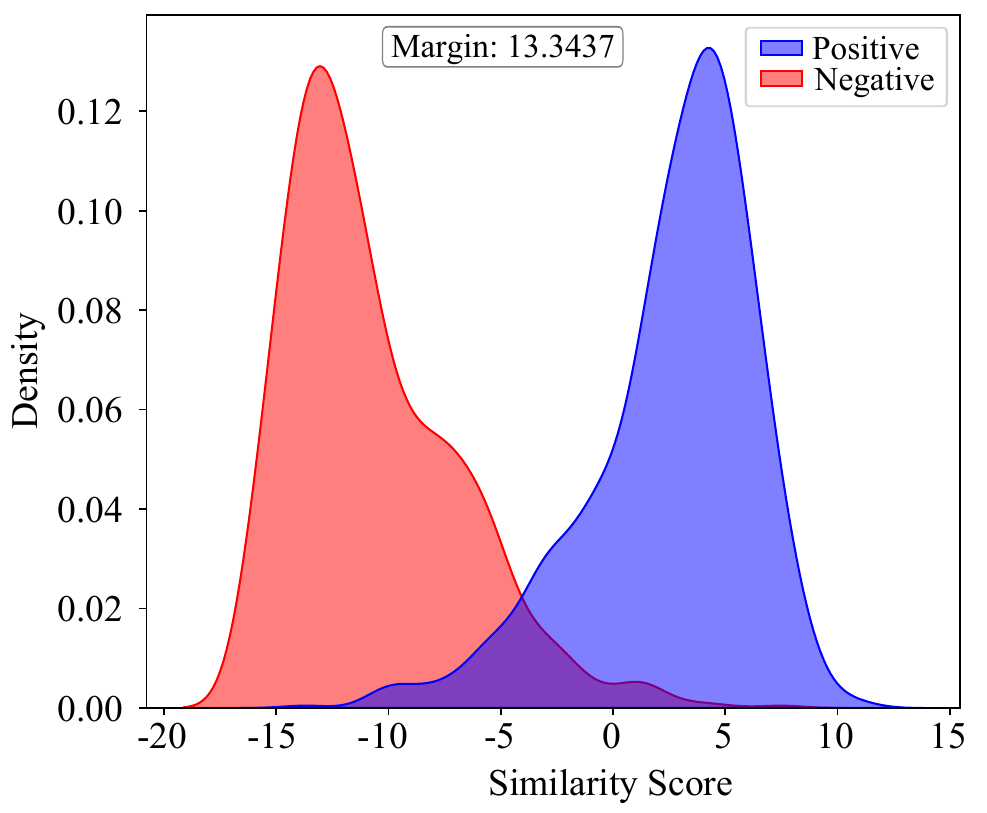}
        \caption{AS-733}
    \end{subfigure}
    \hfill
    \begin{subfigure}{0.19\linewidth}
        \centering
        \includegraphics[width=\linewidth]{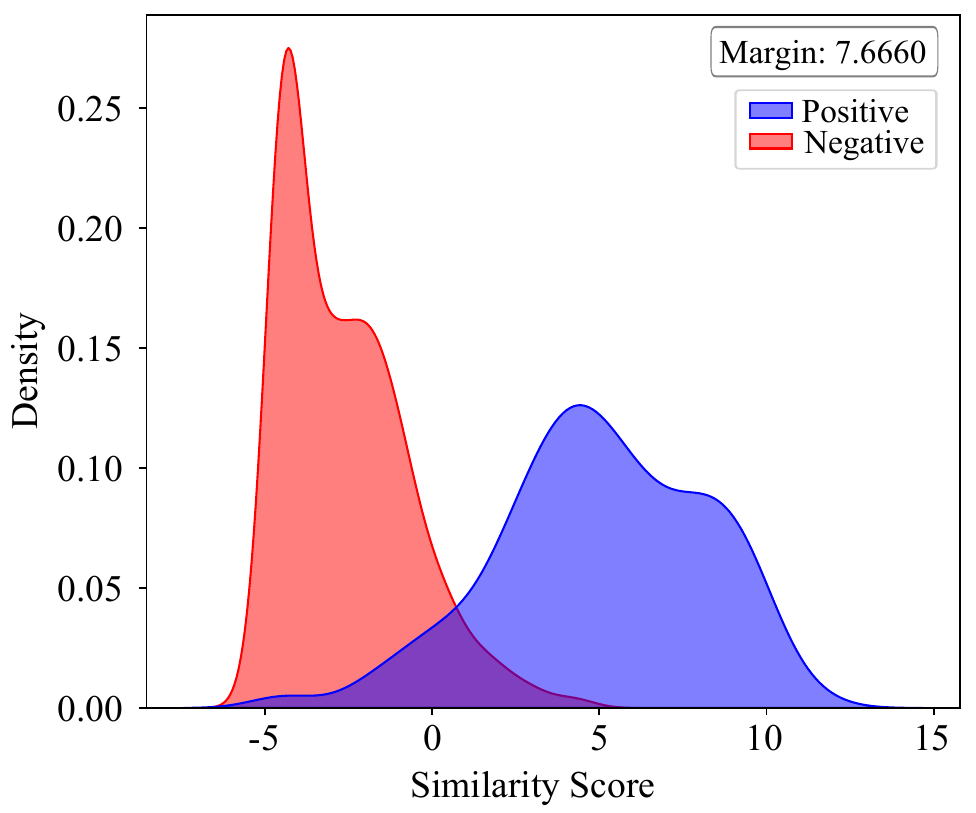}
        \caption{Reddit-title}
    \end{subfigure}
    \begin{subfigure}{0.19\linewidth}
        \centering
        \includegraphics[width=\linewidth]{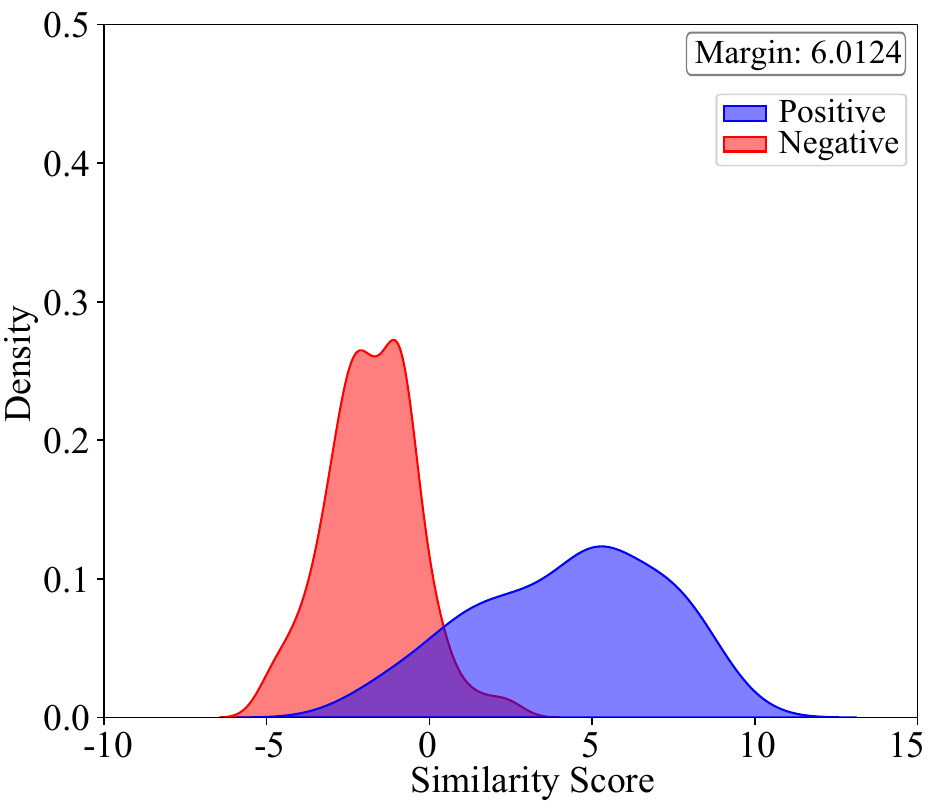}
        \caption{Reddit-body}
    \end{subfigure}
    \hfill
    \begin{subfigure}{0.19\linewidth}
        \centering
        \includegraphics[width=\linewidth]{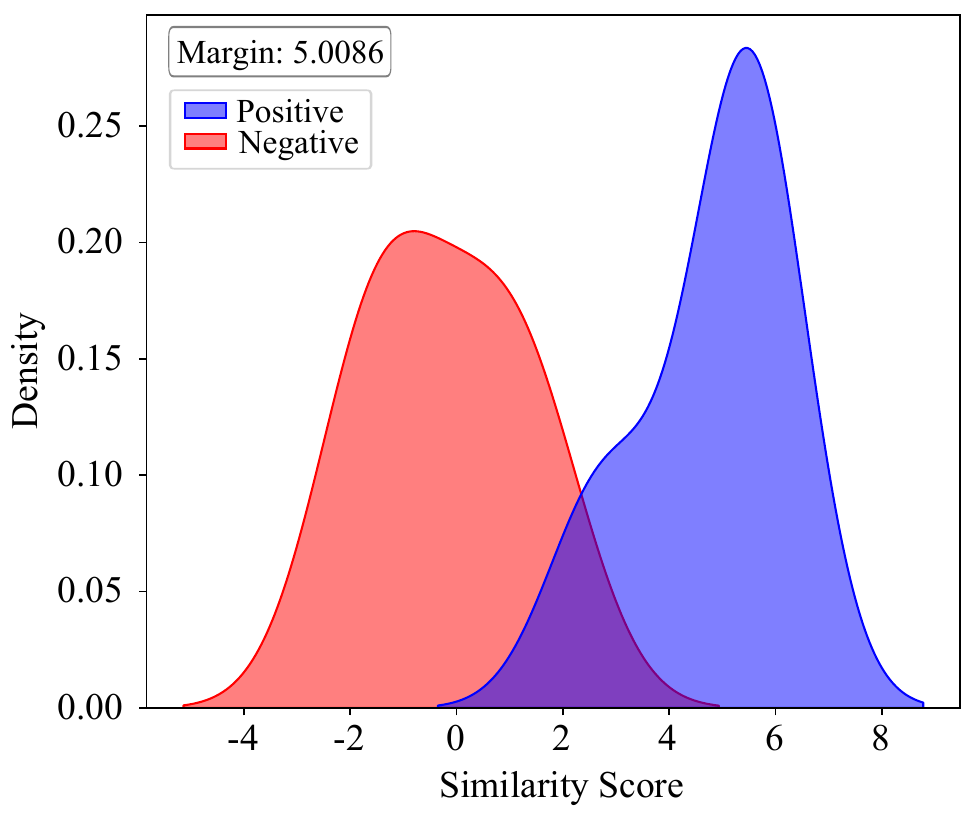}
        \caption{Bitcoin-OTC}
    \end{subfigure}
    \hfill
    \begin{subfigure}{0.19\linewidth}
        \centering
        \includegraphics[width=\linewidth]{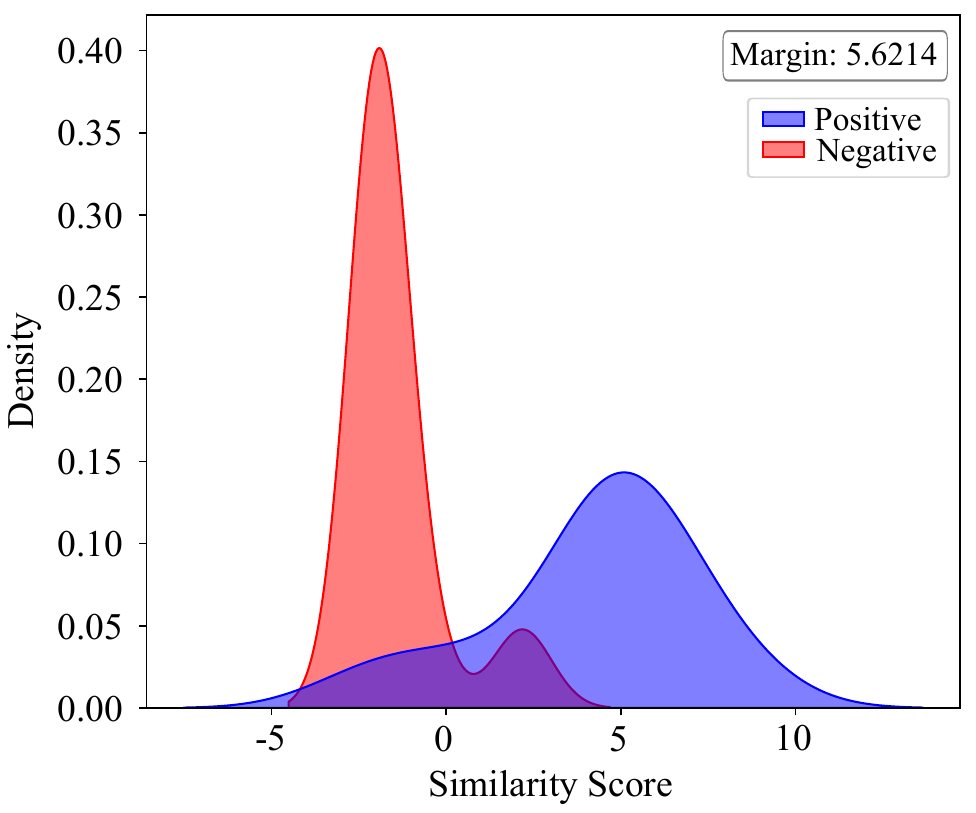}
        \caption{Bitcoin-Alpha}
    \end{subfigure} \\
    \vspace{1em} 
    \begin{subfigure}{0.19\linewidth}
        \centering
        \includegraphics[width=\linewidth]{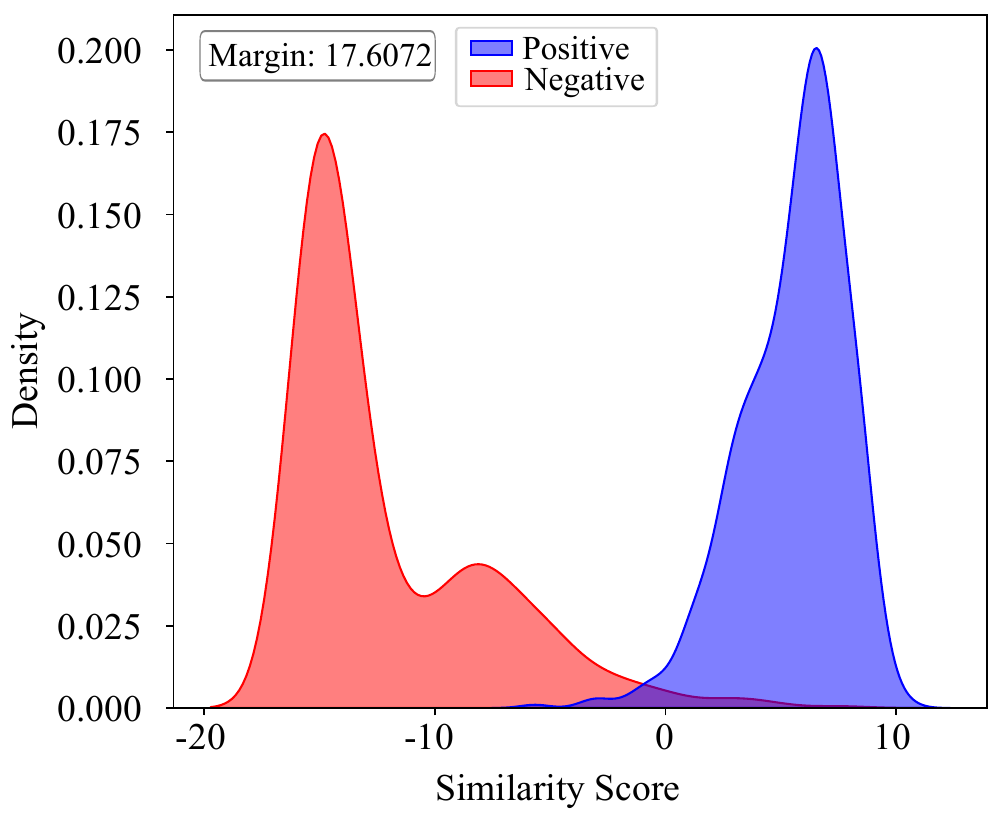}
        \caption{AS-733}
    \end{subfigure}
    \hfill
    \begin{subfigure}{0.19\linewidth}
        \centering
        \includegraphics[width=\linewidth]{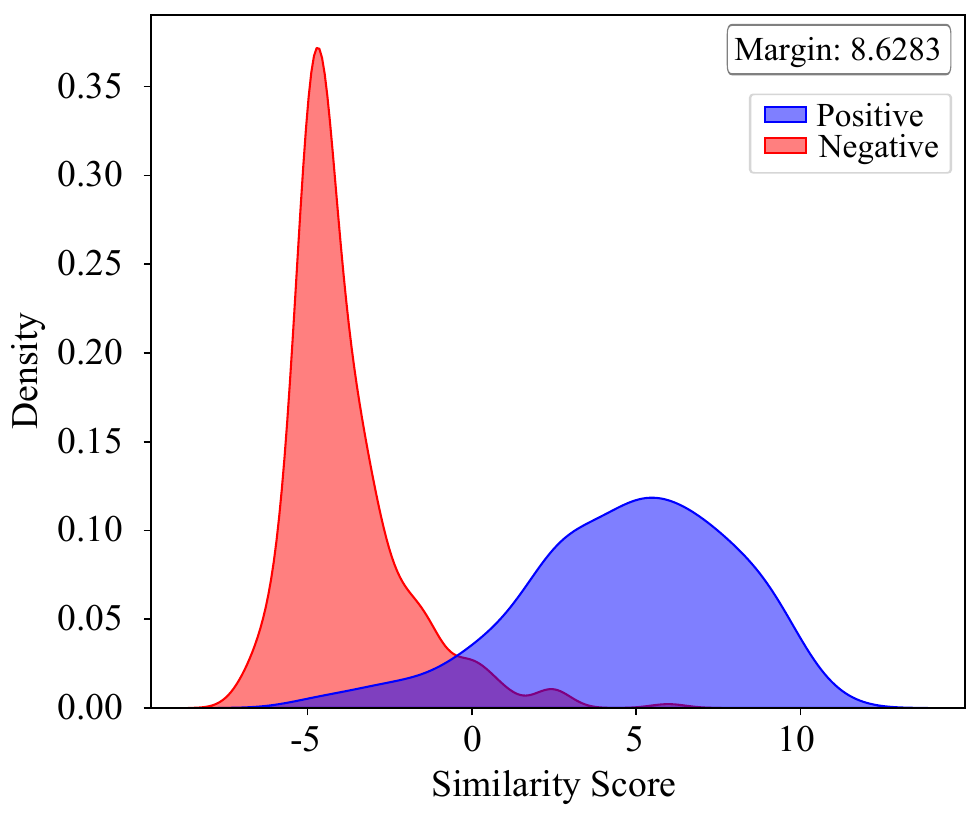}
        \caption{Reddit-title}
    \end{subfigure}
    \hfill
    \begin{subfigure}{0.19\linewidth}
        \centering
        \includegraphics[width=\linewidth]{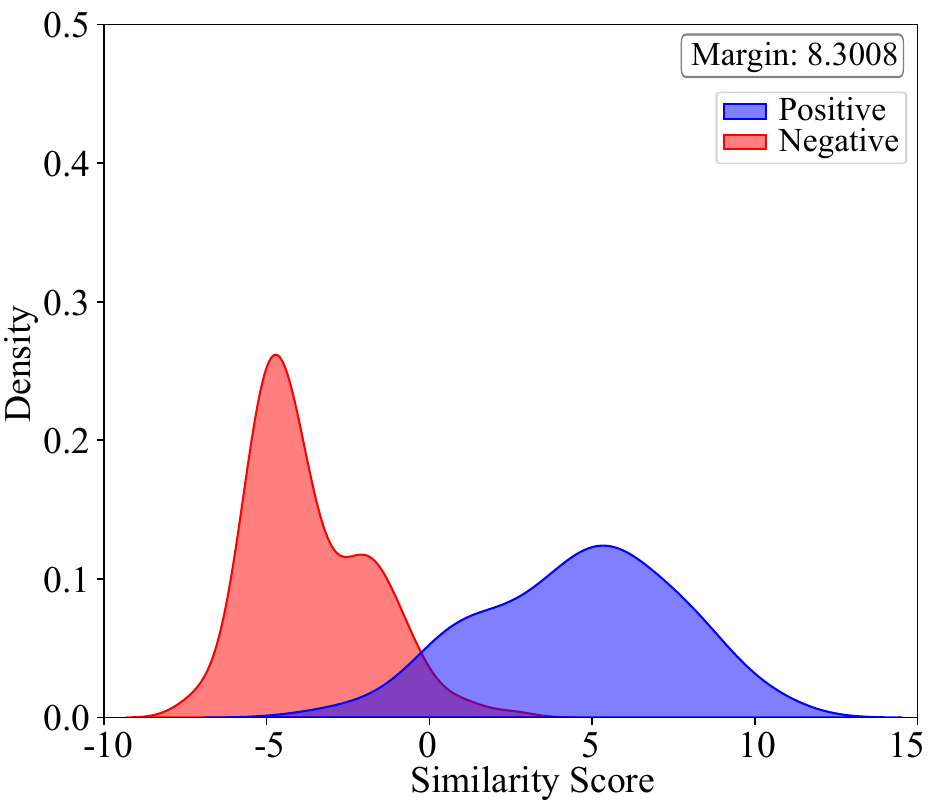}
        \caption{Reddit-body}
    \end{subfigure}
    \begin{subfigure}{0.19\linewidth}
        \centering
        \includegraphics[width=\linewidth]{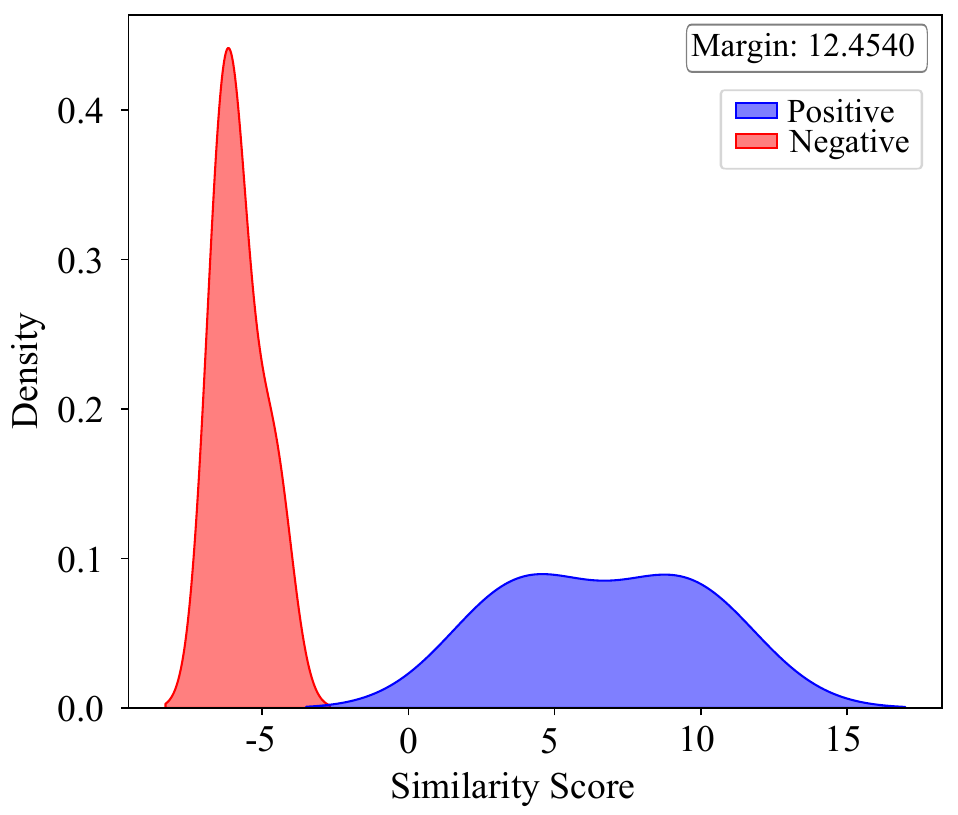}
        \caption{Bitcoin-OTC}
    \end{subfigure}
    \hfill
    \begin{subfigure}{0.18\linewidth}
        \centering
        \includegraphics[width=\linewidth]{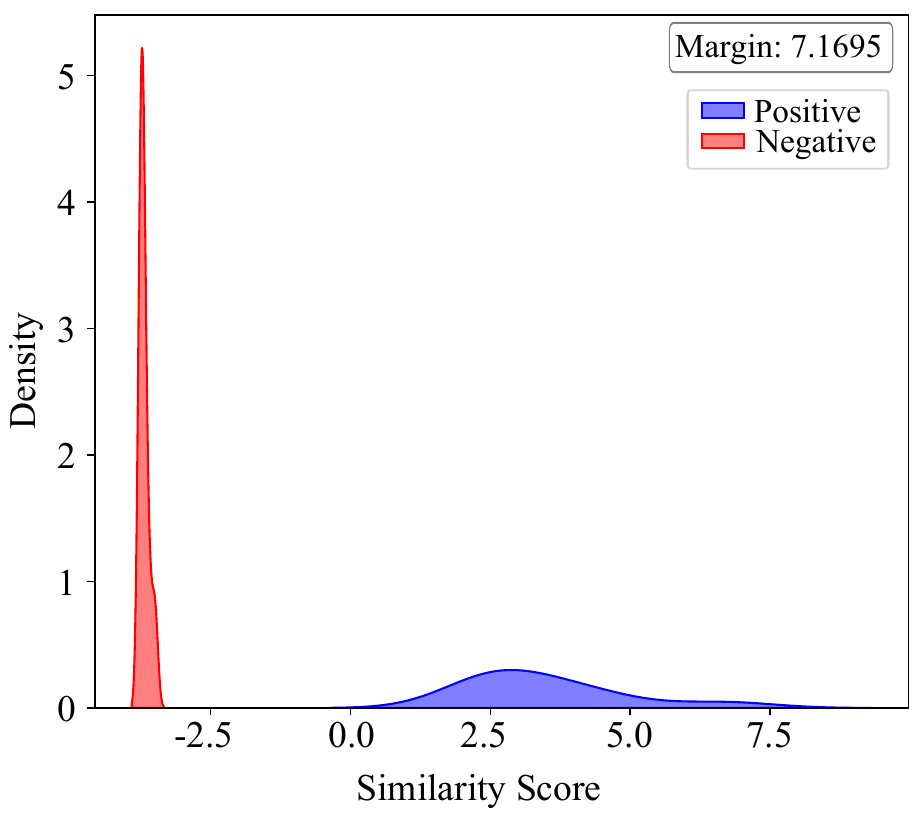}
        \caption{Bitcoin-Alpha}
    \end{subfigure}
    \caption{Similarity score distribution for positive and negative node pairs from backbone (the first row) and PromptDyG (the second row) on five datasets. For each dataset, the similarity is computed at the same test snapshot for both methods. }
    \label{app:theory-img}
\end{figure}

\subsection{Additional Experimental Results}

\subsubsection{Similarity Score Distribution Analysis on Additional Datasets}\label{app:AER-Similarity}

In this subsection, we extend the similarity score and margin analysis to the remaining datasets. Following the same protocol as in Figure~\ref{mov:fig-intro}, we compute similarity scores for positive and negative node pairs from (i) direct inference with the frozen backbone and (ii) our PromptDyG at the same test snapshot. We then evaluate positive/negative pair similarity distributions and report the margin. As shown in Figure \ref{app:theory-img}, across all additional datasets, our method consistently increases positive-pair similarities while suppressing negative-pair similarities, yielding a larger margin and corroborating the effectiveness of PromptDyG under inference-time structural shifts.

\begin{figure}
   \centering
    \begin{subfigure}{0.19\linewidth}
        \centering
        \includegraphics[width=\linewidth]{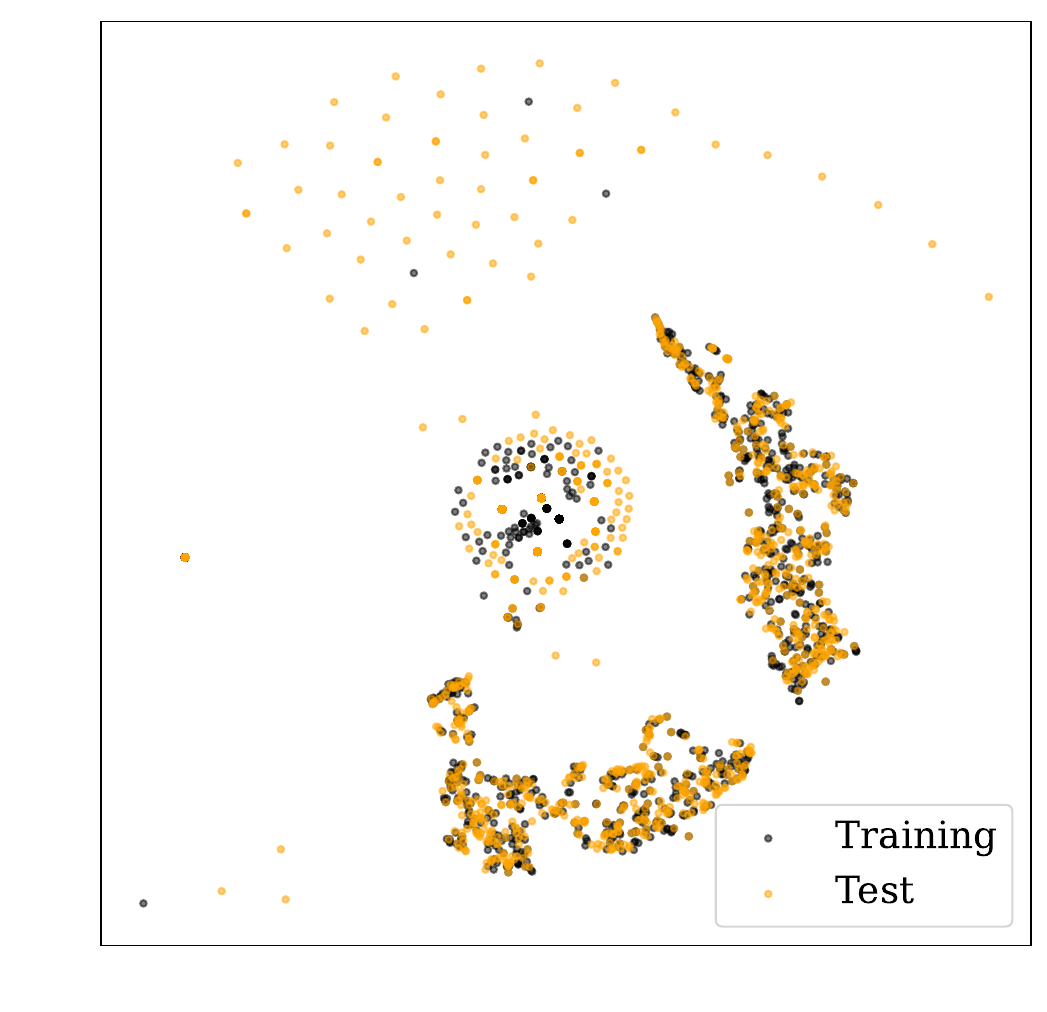}
        \caption{AS-733}
    \end{subfigure}
    \hfill
    \begin{subfigure}{0.19\linewidth}
        \centering
        \includegraphics[width=\linewidth]{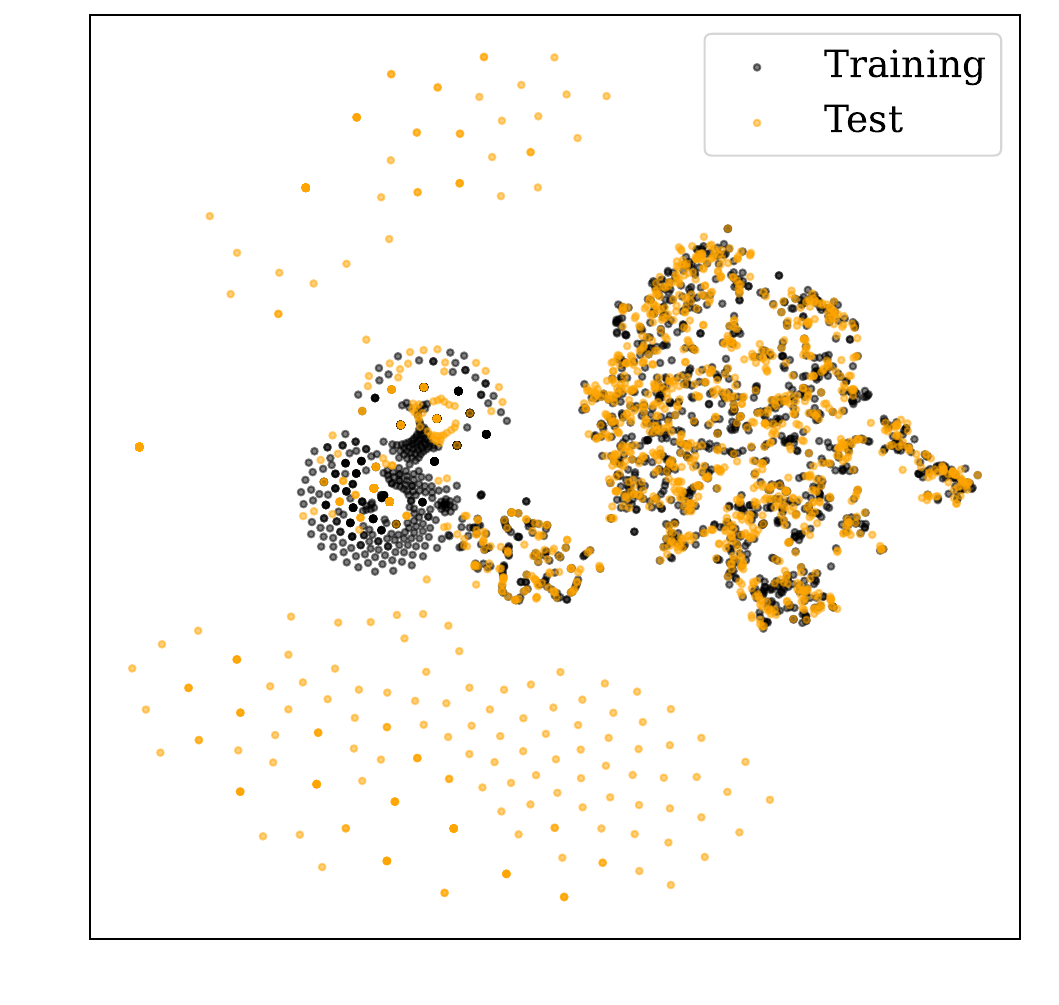}
        \caption{Reddit-title}
    \end{subfigure}
    \hfill
    \begin{subfigure}{0.19\linewidth}
        \centering
        \includegraphics[width=\linewidth]{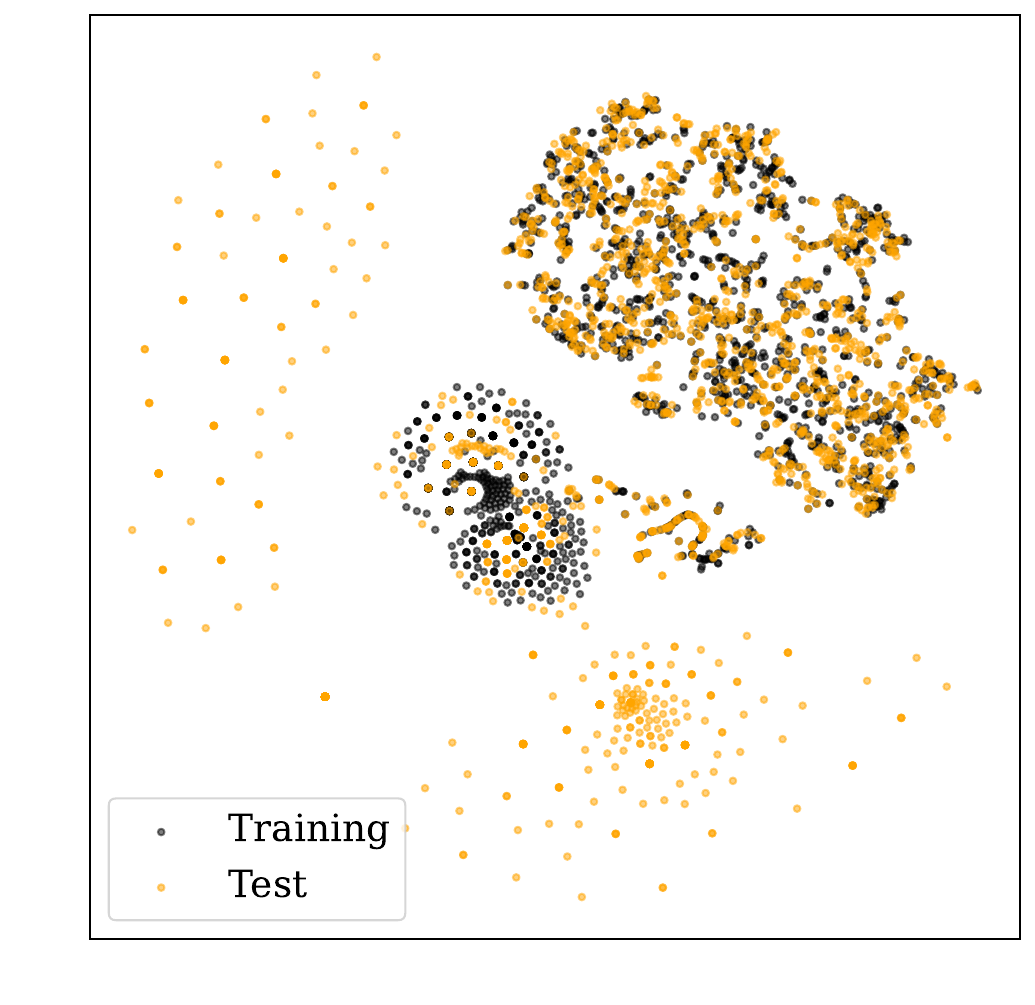}
        \caption{Reddit-body}
    \end{subfigure}
    \hfill
    \begin{subfigure}{0.19\linewidth}
        \centering
        \includegraphics[width=\linewidth]{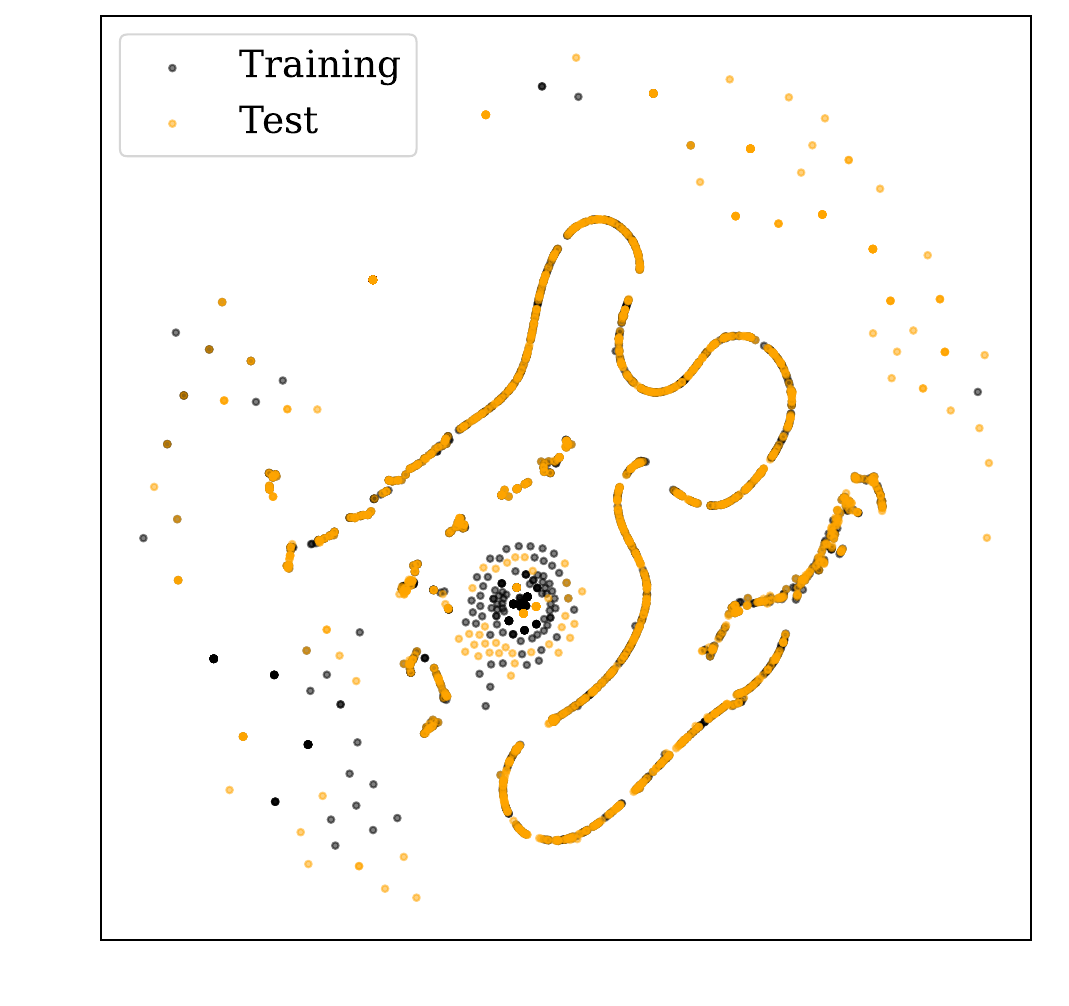}
        \caption{Bitcoin-OTC}
    \end{subfigure}
    \hfill
    \begin{subfigure}{0.19\linewidth}
        \centering
        \includegraphics[width=\linewidth]{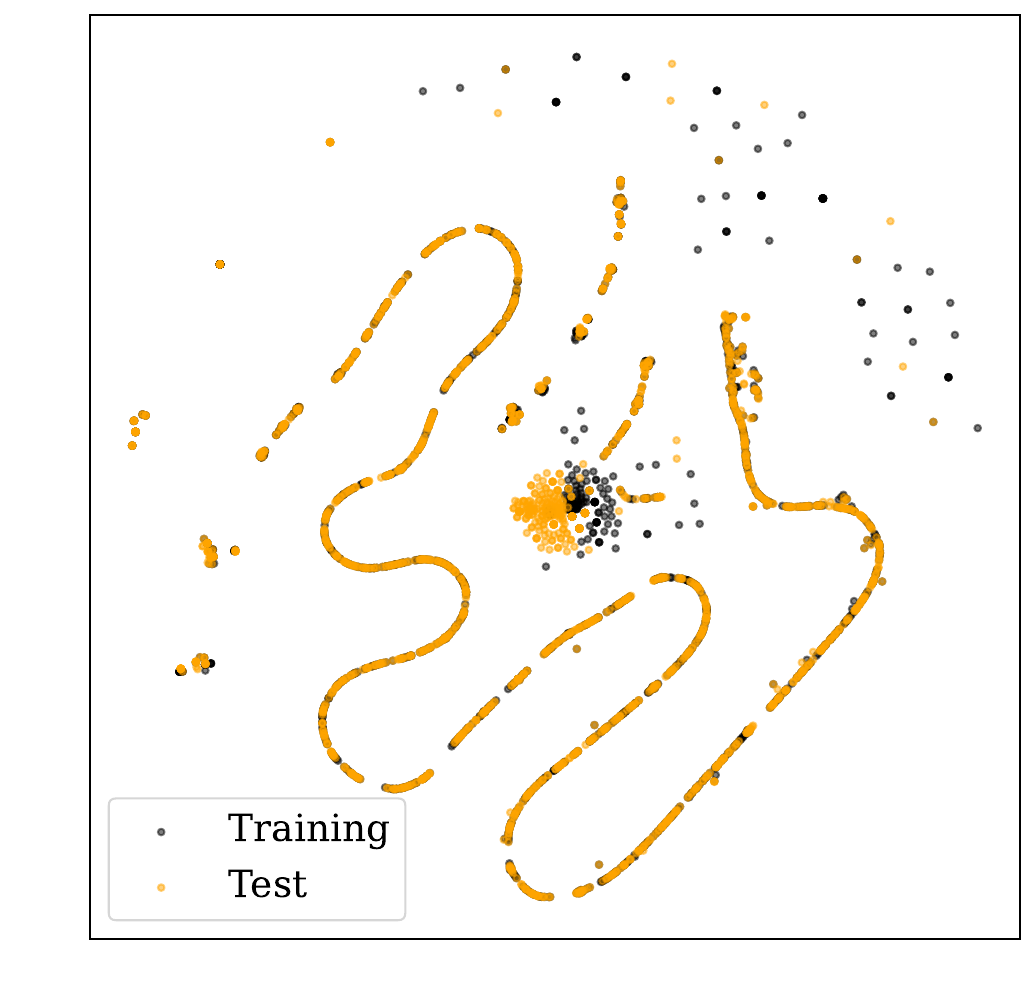}
        \caption{Bitcoin-Alpha}
    \end{subfigure} \\
    \vspace{1em} 
    \begin{subfigure}{0.19\linewidth}
        \centering
        \includegraphics[width=\linewidth]{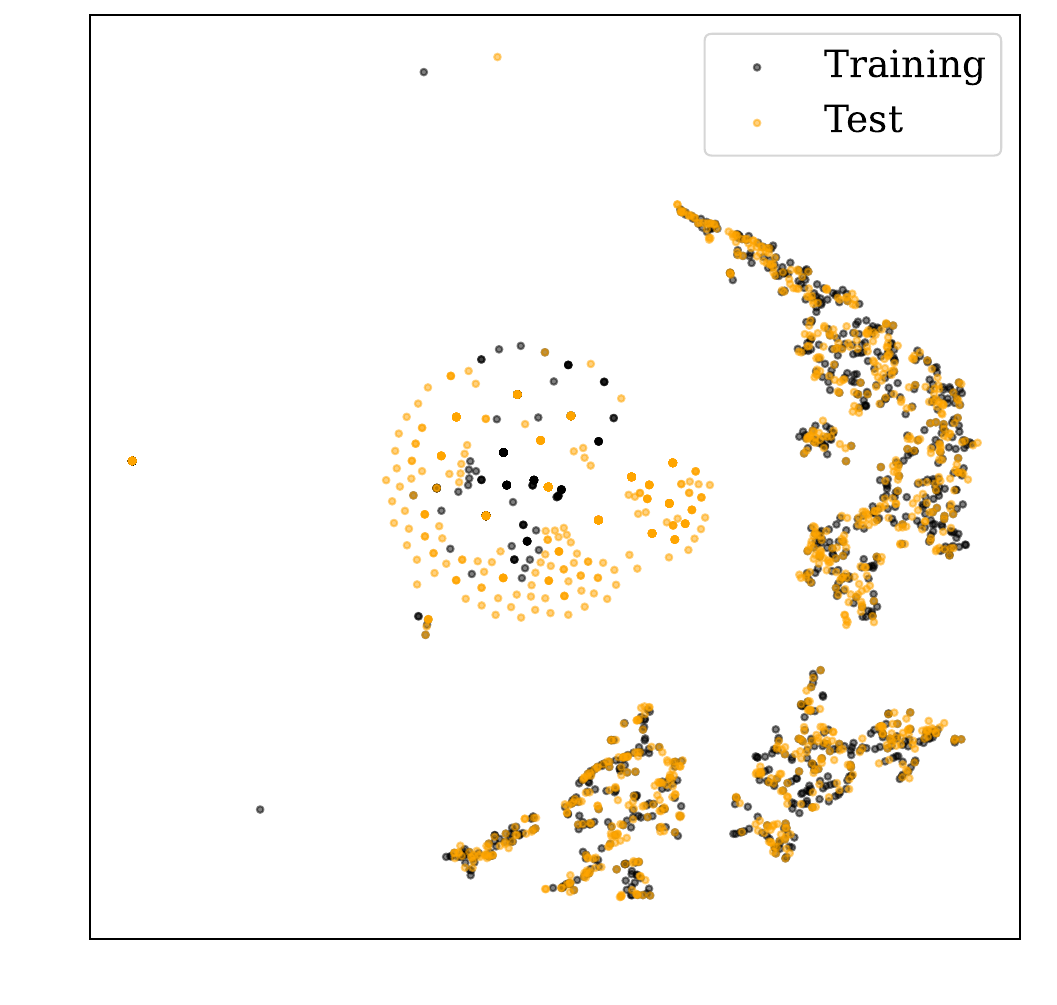}
        \caption{AS-733}
    \end{subfigure}
    \hfill
    \begin{subfigure}{0.19\linewidth}
        \centering
        \includegraphics[width=\linewidth]{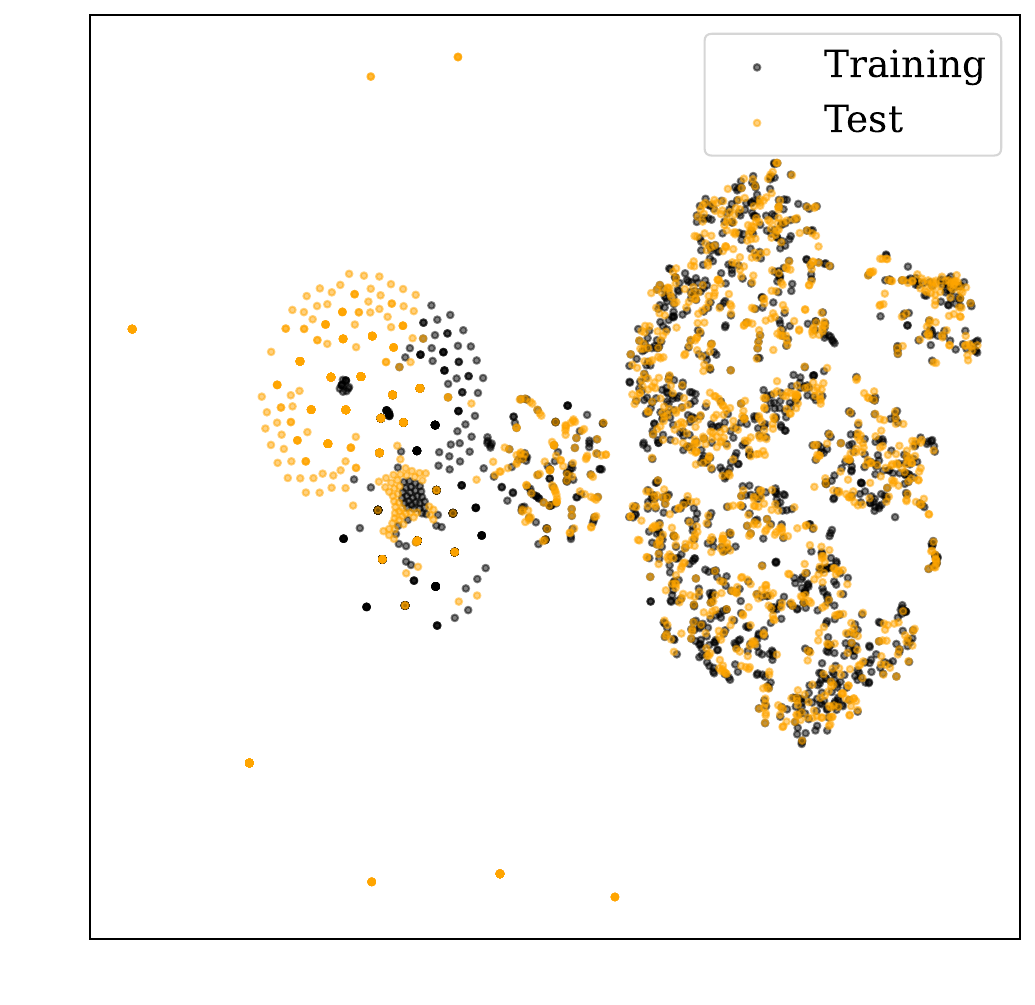}
        \caption{Reddit-title}
    \end{subfigure}
    \hfill
    \begin{subfigure}{0.19\linewidth}
        \centering
        \includegraphics[width=\linewidth]{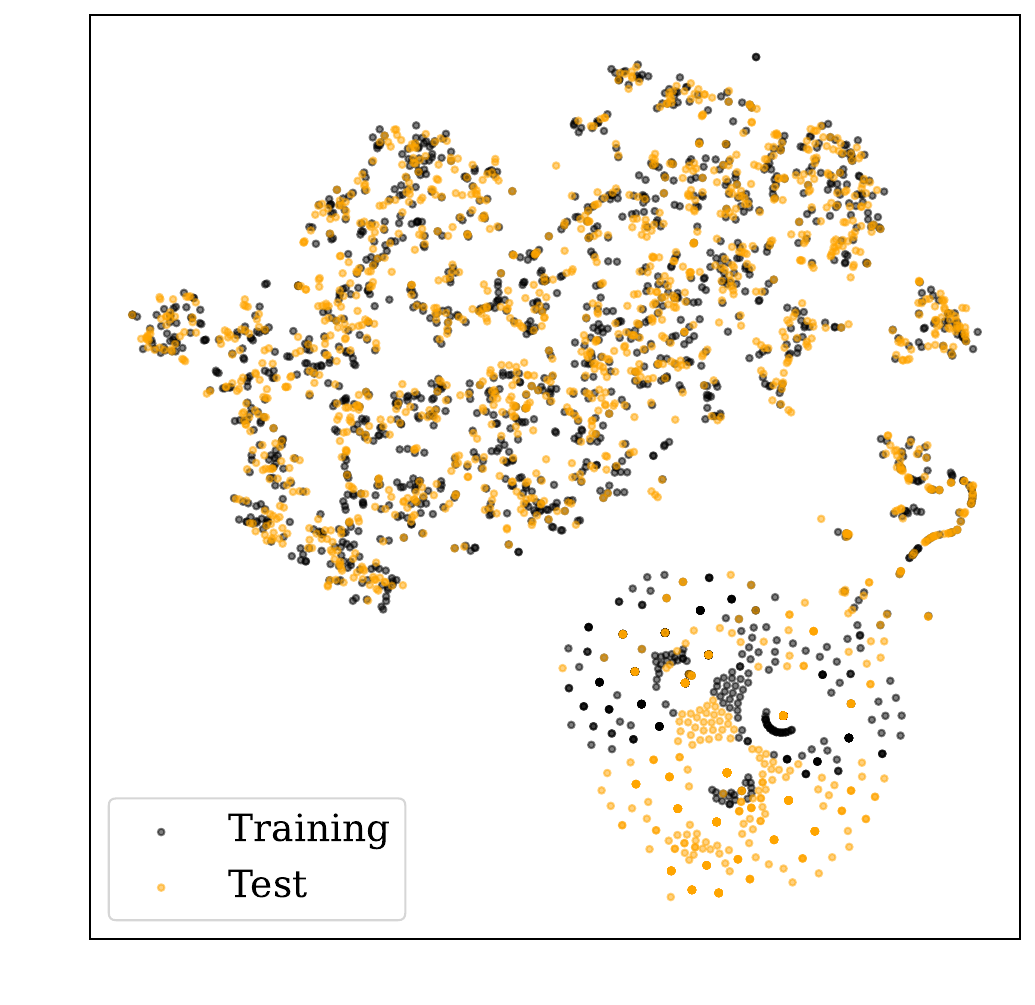}
        \caption{Reddit-body}
    \end{subfigure}
    \hfill
    \begin{subfigure}{0.19\linewidth}
        \centering
        \includegraphics[width=\linewidth]{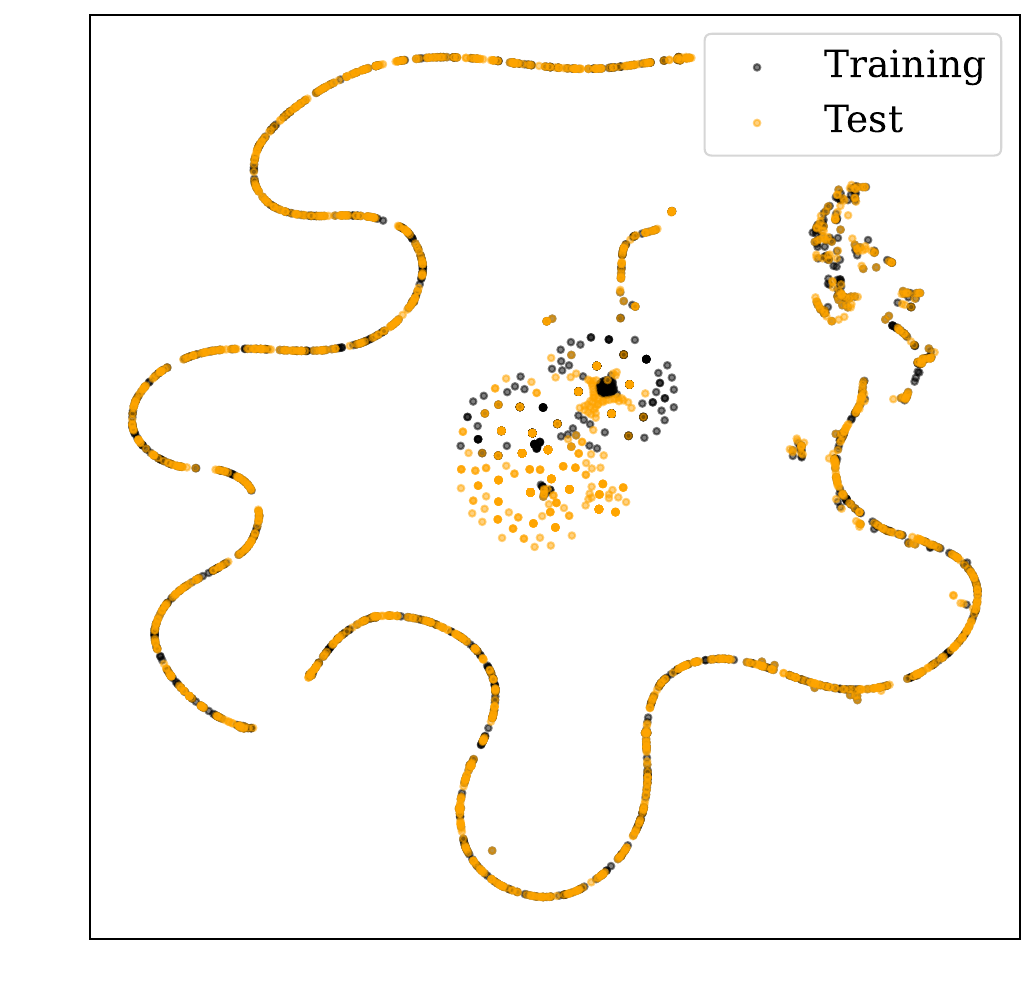}
        \caption{Bitcoin-OTC}
    \end{subfigure}
    \hfill
    \begin{subfigure}{0.19\linewidth}
        \centering
        \includegraphics[width=\linewidth]{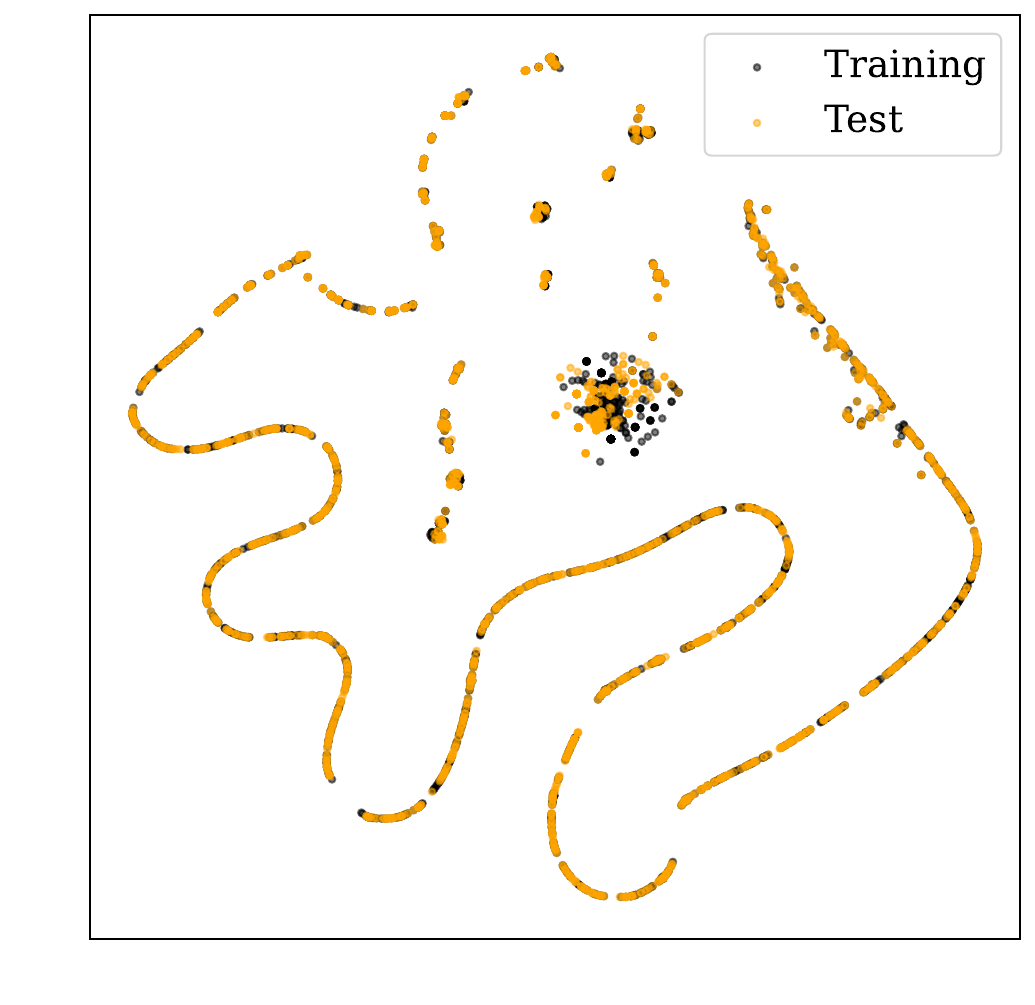}
        \caption{Bitcoin-Alpha}
    \end{subfigure}
    \caption{t-SNE visualization of training and test-time embeddings from the backbone (top row) and PromptDyG (bottom row) across five datasets. For each dataset, the embeddings are visualized at a specific time step. }
    \label{fig:app-tnse}
\end{figure}

\begin{table*}[ht]
    \centering  
    \caption{AUROC and F1-Score on six datasets. Best results are highlighted in bold and green, and second-best results are highlighted in purple. The Avg. Rank shows the average ranking of each method across all datasets, where rankings are computed independently for each dataset and then averaged. The last column is the average (AUROC/F1-Score) performance across all datasets. }
    \renewcommand{\arraystretch}{0.85}
    \resizebox{0.8\textwidth}{!}{
    \begin{tabular}{c|c|cccccc|c|c}
    \hline
    \multirow{2}{*}{Metric} &\multirow{2}{*}{Methods} & \multicolumn{6}{|c|}{Datasets} &\multicolumn{1}{c|}{Avg.} &\multicolumn{1}{c}{Avg.}\\\cline{3-8}
     &&\multicolumn{1}{|c}{AS-733}&Reddit-title&Reddit-body&UCI&Bitcoin-OTC&Bitcoin-Alpha&\multicolumn{1}{c|}{Rank}&\multicolumn{1}{c}{performance} \\\hline          
    & & \multicolumn{7}{c}{DTDG models} \\\cline{2-10}
          \multirow{14}{*}{AUROC}     
          & EvolveGCN-H &0.7867 &0.9258 &0.8763 &0.5630 & 0.5567& 0.5710&13.00&0.7133\\
       & EvolveGCN-O &0.6357 & 0.6428&0.5925 &0.5340 &0.5002 &0.5038 &14.00&0.5682\\
        & GCRN-GRU &0.9819 & 0.9773& 0.9608&0.8975 &0.9227 &0.9139 &6.83&0.9424\\
         & GCRN-LSTM &\best{0.9825}& 0.9772& 0.9606&0.8974 &0.9176 & 0.9148&7.17&0.9417\\
          & GCRN-Baseline & \secondbest{0.9821}&0.9767 &0.9588 &0.9016 & 0.9216&0.9153&7.00&0.9427 \\
           & TGCN & 0.9739& 0.9760& 0.9575&0.8313 & 0.8634&0.8771 &10.83&0.9132\\
           &SFDyG& 0.9794& 0.9809& 0.9642&0.9053 &  0.8982&0.9012&6.50&0.9382\\
            & Roland-Average &0.9570 &0.9799 & 0.9693& 0.8731&0.9317 &0.9249 &7.17&0.9393\\
             & Roland-MLP &0.9208 & 0.9778& 0.9485&0.8859 &0.8631 &0.8798&10.33&0.9127\\
              & Roland-GRU &0.9732 &0.9790 &0.9707 &0.9085& 0.9381&0.9417 &5.50&0.9519\\\cline{2-10}
              & & \multicolumn{7}{c}{TTA models} \\\cline{2-10}
& +Tent  &0.9763 & 0.9531& \secondbest{0.9744}& \best{0.9189}&0.9447 &0.9456&4.67 &0.9522\\
& +Matcha &0.9666 & 0.9762&0.9722& 0.8696&0.9382&0.9426 &7.17 &0.9442\\
& +GTrans  &0.9747 &\secondbest{0.9824} & 0.9733&\secondbest{0.9183} & \secondbest{0.9458}& \best{0.9473} &3.00&0.9570\\\hline
              & PromptDyG &0.9779 &\best{0.9840} & \best{0.9776}&\best{0.9189}& \best{0.9464}& \secondbest{0.9465} & 1.83&0.9586\\\hline\hline
              & & \multicolumn{7}{c}{DTDG models} \\\cline{2-10}
          \multirow{14}{*}{F1-Score}     
          & EvolveGCN-H &0.5115 &0.7328 &0.6237 &0.3245 & 0.4715& 0.4816&13.50&0.5243\\
       & EvolveGCN-O & 0.4169& 0.5170&0.0451 & 0.4542 &0.6418 &0.5851&13.17 &0.4434\\
        & GCRN-GRU &\best{0.9335} & 0.9289& 0.9013&0.8104 & 0.8497& 0.8374&6.50&0.8769\\
         & GCRN-LSTM &\secondbest{0.9327} & 0.9284& 0.9018 &0.8089 &0.8483 & 0.8426&7.00&0.8771 \\
          & GCRN-Baseline & 0.9313&0.9277 & 0.8991 & 0.8128& 0.8479& 0.8459&7.33&0.8775\\
           & TGCN & 0.9212& 0.9240& 0.8928& 0.7165& 0.7631&0.7848&9.83& 0.8337\\
           &SFDyG&0.9114 &\secondbest{0.9367} & 0.9058&0.7890  &0.8548 &0.8667&6.00& 0.8774\\
            & Roland-Average & 0.8618&0.9332 & 0.9134&0.7879 & 0.8585&0.8558&7.00& 0.8684\\
             & Roland-MLP & 0.6310& 0.9186&0.6740 & 0.7014&0.5800 & 0.5710&12.17&0.6793\\
              & Roland-GRU & 0.8360& 0.9288& 0.9154 & 0.8000& 0.8780& 0.8747&6.67 &0.8722\\\cline{2-10}
               & & \multicolumn{7}{c}{TTA models} \\\cline{2-10}
& +Tent & 0.9225 & 0.9097 & \secondbest{0.9243}&\secondbest{0.8418} &\secondbest{0.8941}&\secondbest{0.8965}&4.00&0.8982 \\
& +Matcha &0.9089 &0.9289 &0.9211 & 0.7839& 0.8809&0.8834 &5.67&0.8845\\
& +GTrans  & 0.8919&0.9348 &0.9226 &0.8371 &0.8919 & 0.8955&4.00& 0.8956\\\hline
              & PromptDyG &0.8971 &\best{0.9401} & \best{0.9288} &\best{0.8420} & \best{0.8948}&\best{0.8976}& 2.17&0.9001\\\hline
    \end{tabular}}
    \label{tab:appendix_auroc_f1}
\end{table*}

\begin{table*}[ht]
    \centering
    \caption{AUROC and F1-Score results of enabling existing DTDG methods with our PromptDyG. Bold denotes better performance. Values in parentheses represent the percentage improvement over the backbone.} 
    \renewcommand{\arraystretch}{0.9}
    \resizebox{0.8\textwidth}{!}{
    \begin{tabular}{l|l|llllll}
    \hline
    \multirow{2}{*}{Metric} &  \multirow{2}{*}{Method} & \multicolumn{6}{c}{Datasets}  \\\cline{3-8}
    & & AS-733&Reddit-title&Reddit-body&UCI&Bitcoin-OTC&Bitcoin-Alpha\\\hline
      \multirow{10}{*}{AUROC}& EvolveGCN-H &0.7867 &0.9258 &0.8763 &0.5630 & 0.5567& 0.5710 \\
      & \ \ \ \ \ +PromptDyG & \textbf{0.7936\tiny{(+1.58\%)}} &\textbf{0.9396\tiny{(+1.38\%)}} & \textbf{0.8869\tiny{(+1.06\%)}}&\textbf{0.5720\tiny{(+0.90\%)}}&\textbf{0.5995\tiny{(+4.28\%)}} & \textbf{0.5835\tiny{(+1.25\%)}} \\\cline{2-8}
      & GCRN-GRU &0.9819 & 0.9773& 0.9608&0.8975 &0.9227 &0.9139\\
      & \ \ \ \ \ +PromptDyG &  \textbf{0.9828\tiny{(+0.09\%)}}&\textbf{0.9792\tiny{(+0.19\%)}} &\textbf{0.9647\tiny{(+0.39\%)}} &\textbf{0.9263\tiny{(+2.88\%)}}
      &\textbf{0.9288\tiny{(+0.61\%)}} & \textbf{0.9194\tiny{(+0.55\%)}} \\\cline{2-8}
      & GCRN-Baseline & 0.9821 &0.9767 &0.9588 &0.9016 & 0.9216&0.9153\\
      & \ \ \ \ \ +PromptDyG & \textbf{0.9826\tiny{(+0.05\%)}} &\textbf{0.9787\tiny{(+0.20\%)}} &\textbf{0.9639\tiny{(+0.65\%)}}&\textbf{0.9200\tiny{(+1.84\%)}} & \textbf{0.9237\tiny{(+0.21\%)}}&\textbf{0.9211\tiny{(+0.58\%)}}  \\\cline{2-8}
      & TGCN & 0.9739& 0.9760& 0.9575&0.8313 & 0.8634&0.8771 \\
      & \ \ \ \ \ +PromptDyG & \textbf{0.9762\tiny{(+0.23\%)}} &\textbf{0.9792\tiny{(+0.32\%)}} &\textbf{0.9640\tiny{(+0.65\%)}} &\textbf{0.8480\tiny{(+1.67\%)}} &\textbf{0.8793\tiny{(+1.59\%)}} &\textbf{0.8838\tiny{(+0.67\%)}}  \\\cline{2-8}
      &SFDyG &0.9724 &0.9809 &0.9642 &0.9013 &0.8982 & 0.9012\\
      &\ \ \ \ \ +PromptDyG &\textbf{0.9773\tiny{(+0.49\%)}} &\textbf{0.9816\tiny{(+0.07\%)}} &\textbf{0.9688\tiny{(+0.42\%)}} & \textbf{0.9042\tiny{(+0.29\%)}} &\textbf{0.9056\tiny{(+0.74\%)}} & \textbf{0.9043\tiny{(+0.31\%)}} \\
      \hline\hline
      \multirow{10}{*}{F1-Score}& EvolveGCN-H &0.5115 &0.7328 &\textbf{0.6237} &0.3245 & 0.4715& 0.4816 \\
      & \ \ \ \ \ +PromptDyG &  \textbf{0.5621\tiny{(+5.06\%)}}& \textbf{0.7698\tiny{(+3.70\%)}} &0.6165\tiny{(-0.72\%)} & \textbf{0.3712\tiny{(+4.67\%)}} & \textbf{0.5389\tiny{(+6.79\%)}} & \textbf{0.5204\tiny{(+3.88\%)}}  \\\cline{2-8}
      & GCRN-GRU &0.9335 & 0.9289& 0.9013&0.8104 & 0.8497& 0.8374\\
      & \ \ \ \ \ +PromptDyG &  \textbf{0.9486\tiny{(+1.51\%)}}&\textbf{0.9334\tiny{(+0.45\%)}} &\textbf{0.9088\tiny{(+0.75\%)}} &\textbf{0.8566\tiny{(+4.62\%)}}
      &\textbf{0.8697\tiny{(+2.00\%)}} & \textbf{0.8516\tiny{(+1.42\%)}} \\\cline{2-8}
      & GCRN-Baseline& 0.9313&0.9277 & 0.8991 & 0.8182 & 0.8479& 0.8459\\
          & \ \ \ \ \ +PromptDyG & \textbf{0.9463\tiny{(+1.50\%)}} &\textbf{0.9326\tiny{(+0.49\%)}} &\textbf{0.9076\tiny{(+0.85\%)}} &\textbf{0.8505\tiny{(+3.23\%)}} & \textbf{0.8757\tiny{(+2.78\%)}} & \textbf{0.8592\tiny{(+1.33\%)}} \\\cline{2-8}
      & TGCN & 0.9212& 0.9240& 0.8928& 0.7165& 0.7631&0.7848 \\
      & \ \ \ \ \ +PromptDyG & \textbf{0.9251\tiny{(+0.39\%)}} & \textbf{0.9322\tiny{(+0.82\%)}} & \textbf{0.9054\tiny{(+1.26\%)}}& \textbf{0.7634\tiny{(+4.69\%)}} &\textbf{0.8019\tiny{(+3.88\%)}} & \textbf{0.8092\tiny{(+1.91\%)}} \\\cline{2-8}
      &SFDyG &0.9114 &0.9367 &0.9058 &0.7890 &0.8548 &0.8667 \\
      &\ \ \ \ \ +PromptDyG & \textbf{0.9151\tiny{(+0.37\%)}} & \textbf{0.9381\tiny{(+0.14\%)}}& \textbf{0.9145\tiny{(+0.87\%)}}& \textbf{0.8321\tiny{(+4.31\%)}}& \textbf{0.8649\tiny{(+1.01\%)}}&\textbf{0.8705\tiny{(+0.38\%)}}\\\hline\hline     
    \end{tabular}}
    \label{app:enable}    
\end{table*}

\subsubsection{T-SNE Visualization on Additional Datasets}\label{app:t-sne}
In this subsection, we extend the t-SNE visualization to the remaining datasets. To ensure visual clarity given the large number of nodes, we randomly sampled 2,000 nodes for visualization on those datasets. As shown in Figure \ref{fig:app-tnse}, we observe that the structural distribution shift prevents the test-time representations from aligning with the patterns learned by the backbone (Roland-GRU) during training, resulting in a separation between the two distributions and evidencing the model's failure to generalize. In contrast, PromptDyG successfully bridges this distributional gap, demonstrating that the training and test-time embeddings occupy the same latent region, which indicates effective adaptation to the structural shifts.

\subsubsection{Additional Performance Metrics}\label{app:AER-performance}

In the main text (Table \ref{tab:main results} and Table \ref{tab:enable}), we primarily reported the MRR to evaluate the performance of the predicted links. To provide a more comprehensive evaluation of the model's performance, we present the AUROC and F1-score results in this section. As shown in Table \ref{tab:appendix_auroc_f1}, overall, our method achieves the top average rank and the highest mean score across datasets. Compared to DTDG models, our method achieves the best performance on five out of the six datasets. In the case of AS-733, while the performance is constrained by the backbone Roland-GRU, PromptDyG yields a remarkable relative enhancement of 6.11\% in F1-score. Compared with TTA methods, PromptDyG achieves the best performance on most datasets. Meanwhile, TTA methods also achieve more competitive results than the backbone across these metrics, indicating that aligning representations to structural shifts during the inference phase is crucial for accurate future link prediction.

Additionally, Table \ref{app:enable} reports the AUROC and F1-score of enabling existing DTDG methods with our PromptDyG.  From Table \ref{app:enable}, we can observe results consistent with the MRR reported in Table \ref{tab:enable}, with maximum improvements of 4.28\% in AUROC and 6.79\% in F1-score, indicating that our method comprehensively improves the backbone's performance through inference-time refinement.

\section{Extended Related Work Discussion}\label{app:related work}
\textbf{Dynamic Graph Neural Networks.} Dynamic graph representation learning focuses on encoding temporal node embeddings for evolving graph structures. These methods are typically organized into two main categories—Continuous-Time Dynamic Graphs (CTDGs) and Discrete-Time Dynamic Graphs (DTDGs)—according to their temporal modeling strategies. CTDG-based methods treat dynamic graphs as event streams with accurate timestamps, generating dynamic node embeddings by iteratively processing information gathered from temporal neighbors \cite{kumar2019predicting}. Some approaches employ dynamic random walks to depict structural changes \cite{nguyen2018continuous,wanginductive}, whereas others introduce specialized time encoders to embed temporal context into structural representations \cite{cong2023we,yu2023towards}. Moreover, some researchers utilize temporal point processes to model structural evolution or node communications \cite{wen2022trend,trivedi2019dyrep}.
Although those CTDG methods have demonstrated success, practical considerations such as privacy must be considered when collecting data in real-world scenarios, which makes it difficult to acquire fine-grained timestamps.
By contrast, DTDGs \cite{yang2021discrete,you2022roland,qi2025input}  consist of a sequence of graph snapshots that describe all the events within specific time intervals, such as a day, a week, or a month.  This snapshot-based formulation aligns better with the data collection mechanisms of many real-world systems, making DTDGs a practical choice for modeling long-term data evolution. Typically, the DTDG models \cite{pareja2020evolvegcn, tgcn,GCRN,hajiramezanali2019variational,qi2025input} combine GNNs with sequence models like RNNs to capture spatio-temporal dependencies, aiming to forecast the next graph snapshot. Despite sharing the goal of dynamic embedding, CTDG and DTDG methods differ significantly in data granularity and model design: CTDGs focus on micro-level event dynamics, whereas DTDGs prioritize macro-level structural evolution. This work primarily focuses on the critical issue that existing DTDG models overlook the test structural distribution shifts during the inference phase.

\section{Measuring Time Dependence via Neighborhood Similarity Dynamics}\label{Appendix:TNCR}

To characterize the structural evolution of dynamic graphs, we design a Temporal Neighborhood Change Rate (TNCR) to quantify the time dependence of neighborhood structures across consecutive snapshots.

\textbf{Jaccard Similarity Between Node Pairs.} Given a graph snapshot $G_t=(V,E_t)$ with adjacency matrix $\mathbf{A}_t$, we first compute the Jaccard similarity between the neighborhoods of all node pairs:
\begin{equation}
    J_{t}(i, j)=\frac{\left|\mathcal{N}_{t}(i) \cap \mathcal{N}_{t}(j)\right|}{\left|\mathcal{N}_{t}(i) \cup \mathcal{N}_{t}(j)\right|},
\end{equation}
where $\mathcal{N}_{t}(i)$ denotes the neighbor set of node $i$ at time $t$.

\begin{figure*}[t]
    \centering
    \begin{subfigure}{0.3\linewidth}
        \includegraphics[width=\linewidth]{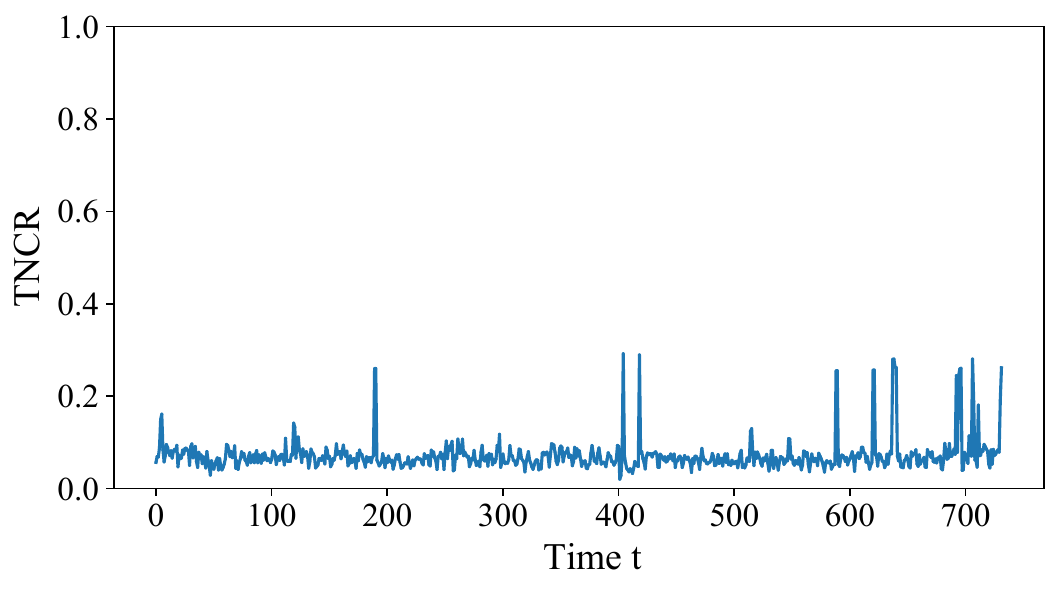}\quad
        \caption{AS-733}
    \end{subfigure}
    \begin{subfigure}{0.3\linewidth}
        \includegraphics[width=\linewidth]{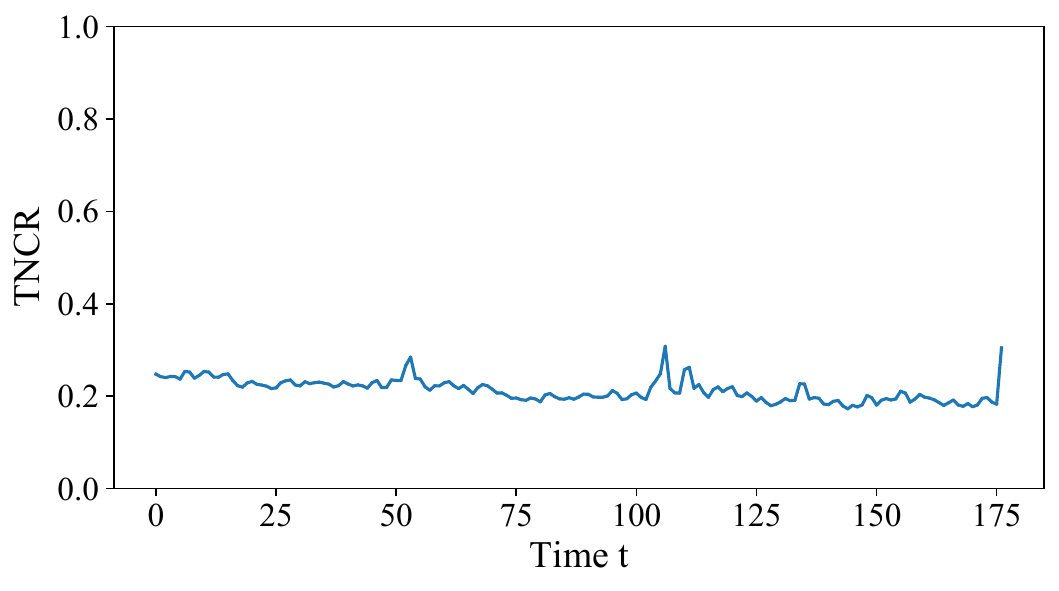}\quad
        \caption{Reddit-title}
    \end{subfigure}
    \begin{subfigure}{0.3\linewidth}
        \includegraphics[width=\linewidth]{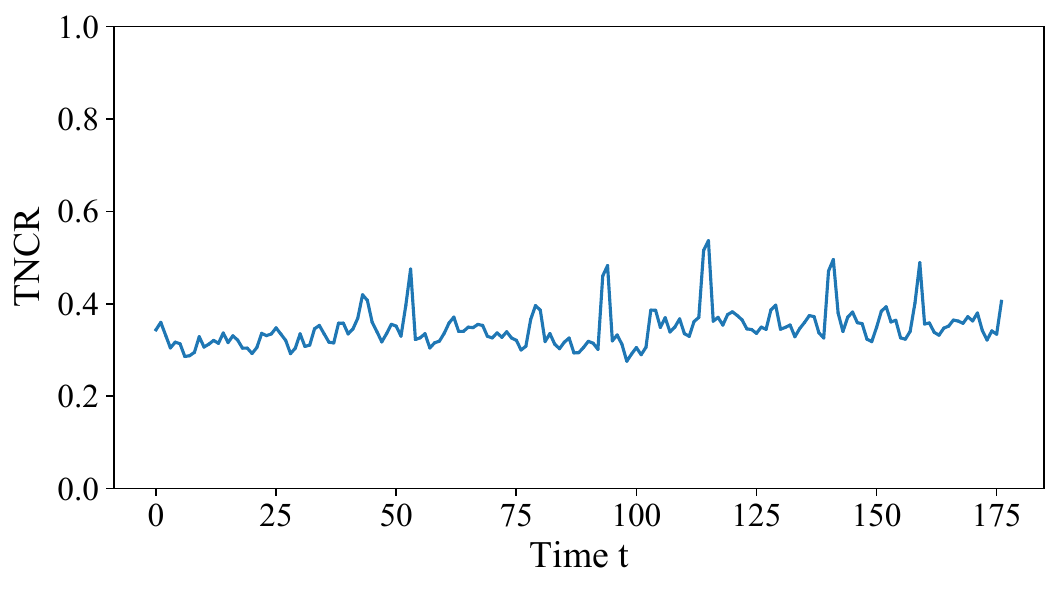}
        \caption{Reddit-body}
    \end{subfigure}\\
    \begin{subfigure}{0.3\linewidth}
        \includegraphics[width=\linewidth]{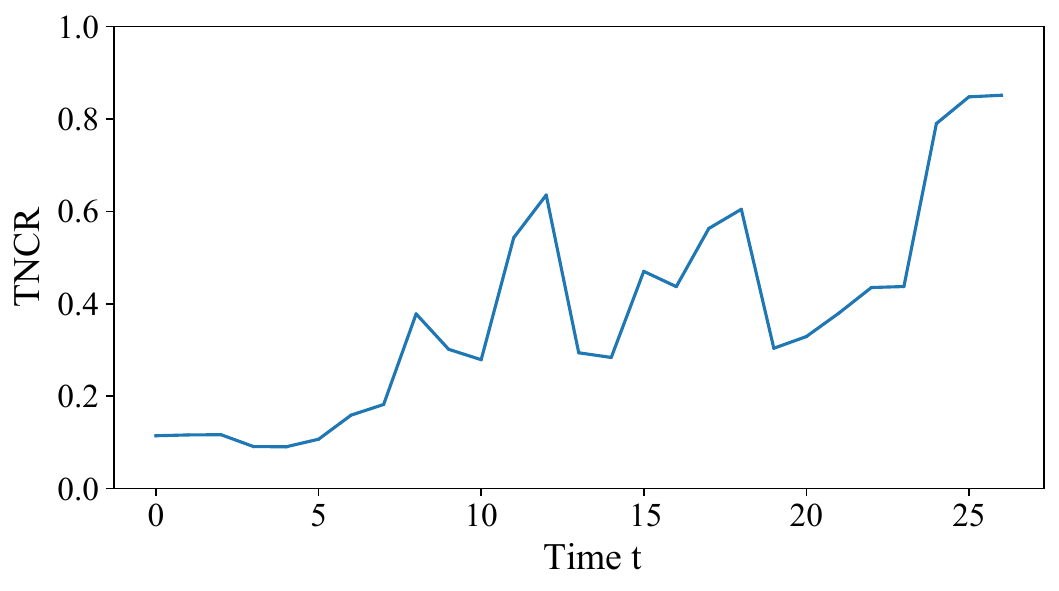}\quad
        \caption{UCI}
    \end{subfigure}
    \begin{subfigure}{0.3\linewidth}
        \includegraphics[width=\linewidth]{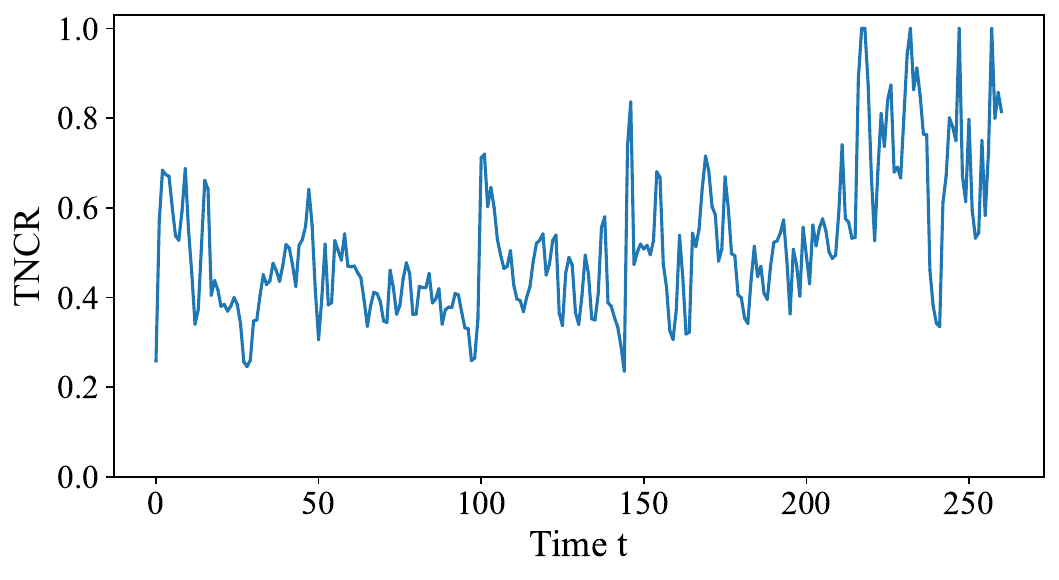}\quad
        \caption{Bitcoin-OTC}
    \end{subfigure}
    \begin{subfigure}{0.3\linewidth}
        \includegraphics[width=\linewidth]{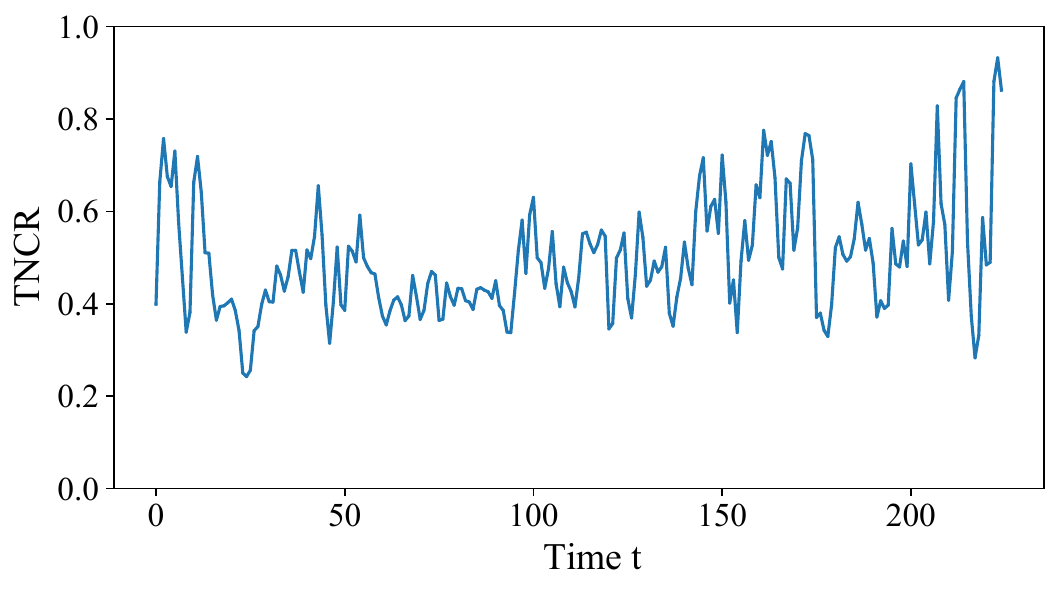}
        \caption{Bitcoin-Alpha}
    \end{subfigure}

    \caption{Temporal Neighborhood Change Rate (TNCR) curves on six datasets.}
    \label{fig:TNCR}
\end{figure*}

\textbf{Temporal Difference of Neighborhood Similarity.} 
To capture the structural change between consecutive snapshots, we compute the absolute difference between Jaccard matrices:
\begin{equation}
    D_{t}(i, j)=\left|J_{t+1}(i, j)-J_{t}(i, j)\right| .
\end{equation}
This difference matrix highlights node pairs whose neighborhood similarity changes over time, effectively reflecting the local structural reorganization of the network.

\textbf{Global Temporal Neighborhood Change Rate.} We further define a global scalar to summarize the overall structural variation between two snapshots: 
\begin{equation}
    \Delta_{t}=\frac{1}{\left|\Omega_{t}\right|} \sum_{(i, j) \in \Omega_{t}} D_{t}(i, j),
\end{equation}
where $\Omega_{t}$ is the set of non-zero entries in $D_t$. This metric measures the average Jaccard variation of all node pairs whose neighborhood relations have changed, which we term the Temporal Neighborhood Change Rate (TNCR).

A smaller $\Delta_{t}$ indicates that neighborhood structures at time $t+1$ are highly consistent with those at time $t$, implying strong temporal dependence, while larger values reflect abrupt structural reorganization.

\textbf{Visualization of Temporal Structural Dynamics.}
As shown in Figure \ref{fig:TNCR}, we visualize the TNCR sequence {$\Delta_{t}$} of six real-world dynamic graphs as a time series curve, where the x-axis denotes snapshot index and the y-axis corresponds to the average difference $\Delta_{t}$. This visualization provides an intuitive overview of the structural changes over time. From Figure \ref{fig:TNCR}, we observe that AS-733, Reddit-title, and Reddit-body exhibit relatively small neighborhood change rates across consecutive snapshots, indicating strong temporal dependence where the network structures evolve smoothly over time. In contrast, UCI, Bitcoin-OTC, and Bitcoin-Alpha present consistently larger and more irregular variations, suggesting weak temporal dependence and highly volatile structural dynamics. This implies that their future topologies are less predictable from historical snapshots, posing greater challenges for temporal representation learning and forecasting in these datasets.

\end{document}